\algnewcommand\algorithmicinput{\textbf{Input:}}
\algnewcommand\INPUT{\item[\algorithmicinput]}
\algnewcommand\algorithmicoutput{\textbf{Output:}}
\algnewcommand\OUTPUT{\item[\algorithmicoutput]}
\newcommand{\NULL}{\textnormal{\texttt{nil}}}
\newif\iflong
\newif\ifshort
\newtheorem{theorem}{Theorem}
\newtheorem{lemma}[theorem]{Lemma}
\newtheorem{proposition}[theorem]{Proposition}
\newtheorem{corollary}[theorem]{Corollary}
\tikzset{cross/.style={cross out, draw=black, minimum size=2*(#1-\pgflinewidth), inner sep=0pt, outer sep=0pt},
cross/.default={1pt}}
\newcommand{\bigoh}{\mathcal{O}}
\title{Explaining Decisions in ML Models:\\ a Parameterized Complexity Analysis}
\date{}
\author{Sebastian Ordyniak\thanks{School of Computing, University of Leeds, UK,
    \texttt{sordyniak@gmail.com}} \and
  Giacomo Paesani\thanks{Dipartimento di Informatica, Sapienza University of Rome, Italy, \texttt{giacomopaesani@gmail.com}} \and  
  Mateusz Rychlicki\thanks{School of Computing, University of Leeds, UK,
    \texttt{mkrychlicki@gmail.com}} \and
Stefan Szeider\thanks{Algorithms and Complexity Group, TU Wien,
  Vienna, Austria, \texttt{sz@ac.tuwien.ac.at}}}
\newcommand{\pbDef}[3]{%
  \noindent
  \begin{center}
  \begin{boxedminipage}{0.98 \columnwidth}
  {\sc #1}\\[2pt]
  \begin{tabular}{@{}l@{~~} p{0.76 \columnwidth}@{}}
  {\sc Instance}: & #2\\
  {\sc Question}: & #3
  \end{tabular}
  \end{boxedminipage}
  \end{center}
}
\newcommand{\mypara}[1]{\smallskip\noindent\textbf{#1}}
\newcommand{\cc}[1]{{\mbox{\textnormal{\textsf{#1}}}}\xspace}  %
\newcommand{\hy}{\hbox{-}\nobreak\hskip0pt}
\renewcommand{\P}{\cc{P}}
\newcommand{\NP}{\cc{NP}}
\newcommand{\FPT}{\cc{FPT}}
\newcommand{\XP}{\cc{XP}}
\newcommand{\Weft}{{\cc{W}}}
\newcommand{\W}[1]{{\Weft}{{\textnormal[\ensuremath{#1}\textnormal]}}}
\newcommand{\coW}[1]{{\mbox{\textnormal{\textsf{co-}}}\Weft}{{\textnormal[\ensuremath{#1}\textnormal]}}}
\newcommand{\paraNP}{\cc{paraNP}}
\newcommand{\NPh}{{\mbox{\textnormal{\textsf{NP}}}}\hy{}\text{hard}}
\newcommand{\coNPh}{{\mbox{\textnormal{\textsf{co-NP}}}}\hy{}\text{hard}}
\newcommand{\Wh}[1]{\W{#1}\hy{}hard}
\newcommand{\coWh}[1]{\coW{#1}\hy{}hard}
\newcommand{\pNPh}{{\mbox{\textnormal{\textsf{pNP}}}}\hy{}\text{hard}}
\newcommand{\pNPtable}{{\mbox{\textnormal{\textsf{pNP}}}}\hy{}\text{h}}
\newcommand{\copNPtable}{{\mbox{\textnormal{\textsf{co-pNP}}}}\hy{}\text{h}}
\newcommand{\NPtable}{{\mbox{\textnormal{\textsf{NP}}}}\hy{}\text{h}}
\newcommand{\coNPtable}{{\mbox{\textnormal{\textsf{co-NP}}}}\hy{}\text{h}}
\newcommand{\Wtable}[1]{\W{#1}\hy{}\text{h}}
\newcommand{\coWtable}[1]{\coW{#1}\hy{}\text{h}}
\newcommand{\SM}{{\;{|}\;}}
\newcommand{\SE}{\,\}}
\newcommand{\SB}{\{\,}
\newcommand{\MM}{\ensuremath{\mathcal{M}}}
\newcommand{\PP}{\ensuremath{\mathcal{P}}}
\newcommand{\FFF}{\ensuremath{\mathcal{F}}}
\newcommand{\TTT}{\ensuremath{\mathcal{T}}}
\newcommand{\MMM}{\ensuremath{\mathcal{E}}}
\newcommand{\PPP}{\mathcal{P}}
\newcommand{\modelname}[1]{\textnormal{#1}}
\newcommand{\MAJ}{\textnormal{MAJ}}
\newcommand{\DT}{\modelname{DT}\xspace}
\newcommand{\DS}{\modelname{DS}\xspace}
\newcommand{\DL}{\modelname{DL}\xspace}
\newcommand{\DSE}{\modelname{DS}\ensuremath{_\MAJ}\xspace}
\newcommand{\DLE}{\modelname{DL}\ensuremath{_\MAJ}\xspace}
\newcommand{\MME}{\MM\ensuremath{_\MAJ}\xspace}
\newcommand{\NN}{\modelname{NN}\xspace}
\newcommand{\BDD}{\modelname{BDD}\xspace}
\newcommand{\OBDD}{\modelname{OBDD}\xspace}
\newcommand{\RF}{\modelname{DT\ensuremath{_\MAJ}}\xspace}
\newcommand{\BC}{\modelname{BC}\xspace}
\newcommand{\OBDDE}{\modelname{OBDD}\ensuremath{_\MAJ}\xspace}
\newcommand{\OBDDO}{\modelname{OBDD}\ensuremath{^<}\xspace}
\newcommand{\OBDDOc}[1]{\modelname{OBDD}\ensuremath{^{#1}}\xspace}
\newcommand{\OBDDOcs}[1]{\modelname{OBDD}\ensuremath{^{#1}}s\xspace}
\newcommand{\OBDDEO}{\modelname{OBDD}\ensuremath{_\MAJ^<}\xspace}
\newcommand{\OBDDEOc}[1]{\modelname{OBDD}\ensuremath{_\MAJ^{#1}}\xspace}
\newcommand{\ds}{S}
\newcommand{\dse}{\mathcal{S}}
\newcommand{\dl}{L}
\newcommand{\dle}{\mathcal{L}}
\newcommand{\obdd}{O}
\newcommand{\obddo}{O}
\newcommand{\obdde}{\mathcal{O}}
\newcommand{\obddeo}{\mathcal{O}}
\renewcommand{\SS}{\subseteq}
\newcommand{\CD}{{\rule{0.5pt}{1.2ex}~\rule{0.5pt}{1.2ex}}}
\newcommand{\N}{$-$\xspace}
\newcommand{\LAEX}{\textsc{lAXp}}
\newcommand{\GAEX}{\textsc{gAXp}}
\newcommand{\LCEX}{\textsc{lCXp}}
\newcommand{\GCEX}{\textsc{gCXp}}
\newcommand{\SMLAEX}{\textsc{lAXp${}_\SS$}}
\newcommand{\SMGAEX}{\textsc{gAXp${}_\SS$}}
\newcommand{\SMLCEX}{\textsc{lCXp${}_\SS$}}
\newcommand{\SMGCEX}{\textsc{gCXp${}_\SS$}}
\newcommand{\MLAEX}{\textsc{lAXp${}_\CD$}}
\newcommand{\MGAEX}{\textsc{gAXp${}_\CD$}}
\newcommand{\MLCEX}{\textsc{lCXp${}_\CD$}}
\newcommand{\MGCEX}{\textsc{gCXp${}_\CD$}}
\newcommand{\SPP}{\ensuremath{\PP_\SS}}
\newcommand{\MPP}{\ensuremath{\PP_\CD}}
\newcommand{\MCC}{\textsc{MCC}}
\newcommand{\MNL}{\textsf{MNL}}
\newcommand{\CIRC}{\mathcal{C}}
\newcommand{\concat}{\circ}
\newcommand{\som}[1]{\|#1\|}
\newcommand{\feat}{F}
\newcommand{\fBDD}{\rho}
\newcommand{\EEE}{\mathcal{E}}
\newcommand{\rw}{\textsf{rw}}
\newcolumntype{R}[2]{%
    >{\adjustbox{angle=#1,lap=\width-(#2)}\bgroup}%
    l%
    <{\egroup}%
}
\newcommand*\rot{\multicolumn{1}{R{30}{0.4em}}}
\begin{document}
\maketitle
\begin{abstract}
   This paper presents a comprehensive theoretical investigation into
  the parameterized complexity of explanation problems in various
  machine learning (ML) models. Contrary to the prevalent black-box
  perception, our study focuses on models with transparent internal
  mechanisms. We address two principal types of explanation problems:
  abductive and contrastive, both in their local and global
  variants. Our analysis encompasses diverse ML models, including
  Decision Trees, Decision Sets, Decision Lists, Ordered Binary Decision
  Diagrams, Random Forests, and Boolean Circuits, and ensembles
  thereof, each offering unique explanatory challenges. This research fills a
  significant gap in explainable AI (XAI) by providing a foundational
  understanding of the complexities of generating explanations for
  these models.  This work provides insights vital for further
  research in the domain of XAI, contributing to the broader discourse
  on the necessity of transparency and accountability in AI systems.
\end{abstract}

\section{Introduction}

As machine learning (ML) models increasingly permeate essential
domains, understanding their decision-making mechanisms has become
central. This paper delves into the field of explainable AI (XAI) by
examining the parameterized complexity of explanation problems in
various ML models. We focus on models with accessible internal
mechanisms, shifting away from the traditional black-box paradigm. Our
motivation is rooted in establishing a comprehensive theoretical
framework that illuminates the complexity of generating explanations
for these models, a task becoming increasingly relevant in light of
recent regulatory guidelines that  emphasize the importance of
transparent and explainable AI~\cite{EU20,OECD23}.

The need for transparency and accountability in automated
decision-making drives the imperative for explainability in AI
systems, especially in high-risk sectors. ML models, while powerful,
must be demystified to gain trust and comply with ethical and
regulatory standards. Formal explanations serve this purpose,
providing a structured means to interpret model decisions
\cite{Silva22,Guidotti-etal-2018,Carvalho-etal-19}.

Our exploration focuses on two types of explanation problems,
abductive and contrastive, in local and global contexts
\cite{Silva22}.  \emph{Abductive explanations} \cite{IgnatievNM19},
corresponding to 
prime-implicant explanations
\cite{ShihCD18} and sufficient reason
explanations \cite{DarwicheJi22}, clarify specific decision-making
instances, while \emph{contrastive explanations}
\cite{Miller19,IgnatievNA20}, corresponding to necessary reason
explanations \cite{DarwicheJi22}, make explicit the reasons behind the
non-selection of alternatives. The study of contrastive explanations
goes back to the Lipton's work in 1990.
Conversely, \emph{global explanations}
\cite{Ribeiro0G16,IgnatievNM19} aim to unravel models' decision
patterns across various inputs. This bifurcated approach enables a
comprehensive understanding of model behavior, aligning with the
recent emphasis on interpretable ML \cite{LisboaSVFV23}.

In contrast to a recent study by \citet{OrdyniakPS23}, who consider
the parameterized complexity of finding explanations based on samples
classified by a black-box ML model, we focus on the setting where
the model together with its inner workings is available as an input
for computing explanations. This perspective, initiated by \citet{BarceloM0S20}, is particularly
appealing, as it lets us quantify the explainability of various
model types based on the computational complexity of the corresponding
explanation problems.

Challenging the notion of inherent opacity in ML models, our study
includes 
\emph{Decision Trees}~(\DT{}s), \emph{Decision Sets}~(\DS{}s), \emph{Decision Lists}~(\DL{}s),
and \emph{Ordered Binary Decision Diagrams} (\OBDD{}s).
Whereas DTs,
DSs, and DLs are classical ML models, OBDDs can be used to represent
the decision, functions of naive Bayes
classifiers~\cite{ChanDarwiche03}. We also consider \emph{ensembles}
of all the above ML models; where an ensemble classifies an example by
taking the majority classification over its elements.  For instance,
\emph{Random Forests} (RFs) are ensembles of DTs.

Each model presents distinct features affecting explanation
generation. For example, the transparent structure of DTs and RFs
facilitates rule extraction, as opposed to the complex architectures
of \emph{Neural Networks}~(\NN{}s)~\cite{Ribeiro0G16,Lipton18}.

\mypara{Contribution.}
This paper fills a crucial gap in XAI research by analyzing the
complexity of generating explanations across different models. Prior
research has often centered on practical explainability approaches,
but a theoretical understanding still needs to be developed
\cite{HolzingerSMBS20,Molnar23}. Our study is aligned with the
increasing call for theoretical rigor in AI \cite{EU19}. By dissecting
the parameterized complexity of these explanation problems, we lay the
groundwork for future research and algorithm development, ultimately
contributing to more efficient explanation methods in AI.

Since most of the considered explanation problems are NP-hard, we use
the paradigm of fixed-parameter tractability (FPT), which involves
identifying specific parameters of the problem (e.g., explanation
size, number of terms/rules, size/height of a DT, width of a BDD) and
proving that the problem is fixed-parameter tractable concerning these
parameters. By focusing on these parameters, the complexity of the
problem is confined, making it more manageable and often solvable in
uniform polynomial time for fixed values of the parameters. A
significant part of our positive results are based on reducing various
model types to 
\emph{Boolean circuits (\BC{}s)}. 
This reduction is crucial for the
uniform treatment of several model types as it allows the application
of known algorithmic results and techniques from the Boolean circuits
domain to the studied models. It simplifies the problems and brings
them into a well-understood theoretical framework. 
For ensembles, we
consider Boolean circuits with majority gates. In turn, we obtain the
fixed-parameter tractability of problems on Boolean circuits via
results on Monadic Second Order (MSO). We use extended MSO
\cite{BergougnouxDJ23} to handle
majority gates, which allows us to obtain efficient algorithmic
solutions, particularly useful for handling complex structures.

Overall, the approach in the manuscript is characterized by a mix of
theoretical computer science techniques, including parameterization,
reduction to well-known problems, and the development of specialized
algorithms that exploit the structural properties of the models under
consideration. This combination enables the manuscript to effectively
address the challenge of finding tractable solutions to explanation
problems in various machine learning models.

For some of the problems, we develop entirely new 
customized
algorithms. We complement the algorithmic results with hardness
results to get a complete picture of the tractability landscape for
all possible combinations of the considered parameters (an overview of
our results are provided in \Cref{fig:DTresults,,fig:DSDLresults,,fig:BDDresults}).

In summary,
our research marks a significant advancement in the theoretical
understanding of explainability in AI. By offering a detailed
complexity analysis for various ML models, this work enriches academic
discourse and responds to the growing practical and regulatory demand
for transparent, interpretable, and trustworthy AI systems.

\ifshort
\smallskip
\noindent {\emph{Statements whose full proofs are omitted and can be
    found in the supplementary material are marked with $\star$.}}
\fi

\section{Preliminaries}

\iflong
For a positive integer $i$, we denote by $[i]$ the set of integers
$\{1,\dotsc,i\}$.

\mypara{Parameterized Complexity (PC).}  We outline some basic concepts
refer to the textbook by \citet{DowneyFellows13} for an in-depth
treatment. An instance of a parameterized problem $Q$ is a pair
$(x,k)$ where $x$ is the main part and $k$ (usually an non-negative
integer) is the parameter. $Q$ is \emph{fixed-parameter tractable
  (FPT)} if it can be solved in time $f(k)n^c$ where $n$ is the input
size of $x$, $c$ is a constant independent of $k$, and $f$ is a
computable function. If a problem has more then one parameters, then
the parameters can be combined to a single one by addition.  $\FPT$
denotes the class of all fixed-parameter tractable decision
problems. $\XP$ denotes the class of all parameterized decision
problems solvable in time $n^{f(k)}$ where $f$ is again a computable
function. An \emph{fpt-reduction} from one parameterized decision
problem $Q$ to another $Q'$ is an fpt-computable reduction that
reduces of $Q$ to instances of $Q'$ such that yes-instances are mapped
to yes-instances and no-instances are mapped to no-instances. The
parameterized complexity classes $\W{i}$ are defined as the closure of
certain weighted circuit satisfaction problems under
fpt-reductions. Denoting by $\P$ the class of all parameterized
decision problems solvable in polynomial time, and by $\paraNP$ 
the
class of parameterized decision problems that are in $\NP$ and NP-hard
for at least one instantiation of the parameter with a constant, we
have
$ \P \subseteq \FPT \subseteq \W{1} \subseteq \W{2} \subseteq \cdots
\subseteq \XP \cap \paraNP \subseteq \paraNP$, where all inclusions
are believed to be strict. If a parameterized problem is $\Wh{i}$ under fpt-reductions ($\Wtable{i}$, for short) then it is unlikely
to be FPT. $\textsf{co-C}$ denotes the complexity class containing all
problems from $\textsf{C}$ with yes-instances replaced by no-instances and
no-instances replaced by yes-instances.

\mypara{Graphs, Rankwidth, Treewidth and Pathwidth.}
We mostly use standard notation for graphs as can be found, e.g., in~\cite{Diestel00}.
Let $G=(V,E)$ be a directed or undirected graph. For a vertex subset
$V' \subseteq V$, we denote by $G[V']$ the graph induced by the
vertices in $V'$ and by $G \setminus V'$ the graph $G[V \setminus
V']$. If $G$ is directed, we denote by $N_G^-(v)$ ($N_G^+(v)$) the set
of all incoming (outgoing) neighbors of the vertex $v \in V$.

Let $G=(V,E)$ be a directed graph. A
\emph{tree decomposition} of $G$ is a pair $\TTT=(T,\lambda)$ with
$T$ being a tree and $\lambda : V(T) \rightarrow V(G)$ such that: (1)
for every vertex $v \in V$ the set $\SB t \in V(T) \SM
v \in \lambda(t)\SE$ is forms a non-empty subtree of $T$ and (2)
for every arc $e=(u,v) \in E$, there is a node $t \in V(T)$ with $u,v
\in \lambda(t)$. The \emph{width} of $\TTT$ is equal to $\max_{t \in
  V(T)}|\lambda(t)|-1$ and the \emph{treewidth} of $G$ is the minimum
width over all tree decompositions of $G$. $\TTT$ is called a
\emph{path decomposition} if $T$ is a path and the \emph{pathwidth} of
$G$ is the minimum width over all path decompositions of $G$. We will
need the following well-known properties of pathwidth, treewidth, and rankwidth.
\begin{lemma}[{\cite{DBLP:conf/wg/CorneilR01,DBLP:journals/jct/OumS06}}]\label{lem:ranktree}
  Let $G=(V,E)$ be a directed graph and $X \subseteq V$. The
  treewidth of $G$ is at most $|X|$ plus the treewidth of
  $G-X$. Furthermore, if $G$ has rankwidth $r$, pathwidth $p$ and
  treewidth $t$, then
  $r \leq 3\cdot 2^{t-1}\leq 3\cdot 2^{p-1}$. 
\end{lemma}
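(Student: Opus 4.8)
The plan is to treat the three assertions of the lemma separately, since only the middle inequality carries nontrivial content; the first is elementary and the last is monotonicity.

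For the first assertion, I would argue directly from the definition of a tree decomposition. Take an optimal tree decomposition $\TTT=(T,\lambda)$ of $G-X$ and form $(T,\lambda')$ by setting $\lambda'(t)=\lambda(t)\cup X$ for every node $t\in V(T)$. First I would check that $(T,\lambda')$ is a valid tree decomposition of $G$. The connectivity condition holds for every $v\in V\setminus X$ because its subtree is unchanged, and for every $x\in X$ because its subtree is now all of $T$, which is nonempty and connected. The covering condition holds for every arc inside $G-X$ as before, while any arc with an endpoint in $X$ is covered because that endpoint lies in every bag and its other endpoint lies in at least one bag. Since every bag grew by exactly $|X|$ vertices, the width of $(T,\lambda')$ is at most $\mathrm{tw}(G-X)+|X|$, which yields $\mathrm{tw}(G)\le|X|+\mathrm{tw}(G-X)$.

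For the second assertion, I would not reprove the bound from scratch but instead chain the two cited results. The Corneil--Rotics bound gives that the clique-width of $G$ is at most $3\cdot 2^{t-1}$, obtained by turning an optimal width-$t$ tree decomposition into a clique-width expression in which each bag contributes only a bounded number of labels. The Oum--Seymour relation gives that rankwidth never exceeds clique-width. Composing the two, $r\le \mathrm{cw}(G)\le 3\cdot 2^{t-1}$. Here all three width parameters are understood to be those of the underlying undirected graph of $G$, so the orientation of the arcs plays no role. Finally, for the third assertion I would use that every path decomposition is in particular a tree decomposition, hence $t\le p$; monotonicity of $x\mapsto 2^x$ then gives $2^{t-1}\le 2^{p-1}$ and therefore $3\cdot 2^{t-1}\le 3\cdot 2^{p-1}$.

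The main obstacle is that the only place with genuine difficulty is the middle inequality, and that difficulty is entirely absorbed by the invoked theorems. If one insisted on a self-contained argument, the hard part would be the Corneil--Rotics construction, namely exhibiting a clique-width expression of width $3\cdot 2^{t-1}$ from a width-$t$ tree decomposition, which requires carefully managing the label budget as bags are joined along the decomposition tree. Since the statement is explicitly presented as a collection of known properties with the two citations attached, I would simply assemble the three parts as above rather than redo that construction.
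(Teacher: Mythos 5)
Your proof is correct and matches the paper's treatment: the paper states this lemma as a known fact with exactly those two citations and provides no proof of its own, so the intended justification is precisely the assembly you give. Your three steps — the standard bag-augmentation argument showing $\mathrm{tw}(G)\le |X|+\mathrm{tw}(G-X)$, chaining the Corneil--Rotics bound $\mathrm{cw}(G)\le 3\cdot 2^{t-1}$ with the Oum--Seymour inequality $r\le \mathrm{cw}(G)$ for the middle inequality, and $t\le p$ plus monotonicity for the last — are exactly what the citations supply, including the correct reading that all widths refer to the underlying undirected graph.
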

\fi
\ifshort
\mypara{Parameterized Complexity.}  A
problem with input size $n$ and parameter $k$ is \emph{fixed-parameter
  tractable (fp-tractable)} if it can be solved in time $f(k)n^c$ for
a constant $c$ independent of $k$, and a computable function $f$; the
problem is \emph{xp-tractable} if it can be solved in time $n^{f(k)}$
\cite{DowneyFellows13}. $\FPT$ and $\XP$ are the classes of
fp-tractable and xp-tractable decision problems, respectively. There
is a hierarchy of parameterized complexity classes that represent
various level of \emph{in}tractability:
$ \P \subseteq \FPT \subseteq \W{1} \subseteq \W{2} \subseteq \cdots
\subseteq \XP \cap \paraNP \subseteq \paraNP$. All inclusions are
believed to be proper. If a problem is $\W{i}$\hy hard under
fpt-reductions ($\W{i}$-h, for short) then it is unlikely to be
in FPT. 
The class of 
$\textsf{co-C}$ denotes the complexity class containing all
problems from $\textsf{C}$ with yes-instances and no-instances
swapped.
\fi
 
\mypara{Examples and Models}
Let $F$ be a set of binary features. An \emph{example} $e : F
\rightarrow \{0,1\}$ over $F$ is a $\{0,1\}$-assignment of the features
in $F$. An example is a \emph{partial example (assignment)} over $F$
if it is an example over some subset $F'$ of $F$. We denote by $E(F)$
the set of all possible examples over $F$. A \emph{(binary
  classification) model} $M : E(F) \rightarrow \{0,1\}$ is a specific
representation of a Boolean function over $E(F)$. We denote by
$\feat(M)$ the set of features considered by $M$, i.e., $\feat(M)=F$.
We say that an example $e$ is a $0$-example or negative example ($1$-example
or positive example) w.r.t. the model $M$ if $M(e)=0$ ($M(e)=1$).
For convenience, we restrict our setting to the classification into two
classes. We note however that all our hardness results easily carry over to the
classification into any (in)finite set of classes. The same applies to
our algorithmic results for non-ensemble models since one can easily
reduce to the case with two classes by renaming the class of interest
for the particular explanation problem to $1$ and all other classes to
$0$. We leave it open whether the same holds for our algorithmic
results for ensemble models.

\mypara{Decision Trees.}
A \emph{decision tree} ($\DT$) $\TTT$ is a pair $(T,\lambda)$ such that $T$ is a
rooted binary tree and $\lambda : V(T) \rightarrow F \cup \{0,1\}$ is
a function that assigns a feature in $F$ to every inner node of $T$
and either $0$ or $1$ to every leaf node of $T$. Every inner node of
$T$ has exactly $2$ children, one left child (or $0$-child) and one
right-child (or $1$-child).
The classification function $\TTT : E(F) \rightarrow \{0,1\}$ of a \DT{}
is defined as follows for an example $e \in E(F)$. 
Starting at the root of $T$ one does the following at
every inner node $t$ of $T$. If $e(\lambda(t))=0$ one continues with
the $0$-child of $t$ and if $e(\lambda(t))=1$ one continues with the
$1$-child of $t$ until one eventually ends up at a leaf node $l$ at which
$e$ is classified as $\lambda(l)$.
For every node $t$ of $T$, we denote by $\alpha_{\TTT}^t$ the
partial assignment of $F$ defined by the path from the root of $T$ to
$t$ in $T$, i.e., for a feature $f$, we set $\alpha_{\TTT}^t(f)$ to $0$ ($1$) if
and only if the path from the root of $T$ to $t$ contains an inner
node $t'$ with $\lambda(t')=f$ together with its $0$-child
($1$-child).
We denote by $L(\TTT)$ the set of leaves of $T$ and
we set $L_b(\TTT)=\SB l \in L(\TTT)\SM \lambda(l)=b\SE$
for every $b \in \{0,1\}$. 
Moreover, we denote by $\som{\TTT}$ ($h(\TTT)$) the size (height) of a \DT, which
is equal to the number of leaves of $T$ (the length of a longest root-to-leaf path in $T$).
Finally, we let $\MNL(\TTT)=\min\{ |L_0|,|L_1|\}$.

\mypara{Decision Sets.}
A \emph{term} $t$ over $C$ is a set of
\emph{literals} with each literal being
of the form $(f=z)$ where $f\in F$ and $z\in \{0,1\}$. A \emph{rule}
$r$ is a pair $(t,c)$ where $t$ is a term and $c\in \{0,1\}$. We say
that a rule $(t,c)$ is a \emph{$c$-rule}.
We say that a term $t$ (or rule $(t,c)$) \emph{applies to (or agrees
  with)} an example $e$ if $e(f)=z$ for every element $(f=z)$ of $t$.
Note that the empty rule applies to any example.

A {\it decision set} ($\DS$) $\ds$ is a pair $(T,b)$, where $T$ is a set of terms
and $b \in \{0,1\}$ is the classification of the default rule (or the
default classification).
We denote by $\som{\ds}$ the size of $\ds$ which is
equal to $(\sum_{t \in T}|t|)+1$; the $+1$ is for the default rule.
The classification function $\ds : E(F) \rightarrow \{0,1\}$ of a \DS{}
$\ds=(T,b)$ is defined by setting $\ds(e)=b$ for every example $e \in E(F)$
such that no term in $T$ applies to $e$ and otherwise we set $\ds(e)=1-b$.

\mypara{Decision Lists.}
A {\it decision list} ($\DL$) $L$ is a non-empty sequence of rules
$(r_1=(t_1,c_1),\dotsc,r_\ell=(t_\ell,c_\ell))$, for some $\ell \geq 0$.
The size of a \DL{} $L$, denoted by $\som{L}$, is equal to $\sum_{i=1}^\ell(|t_i|+1)$.
The classification function $L : E(F) \rightarrow \{0,1\}$ of a \DL{}
$L$ is defined by setting $L(e)=b$ if the
first rule in $L$ that applies to $e$ is a $b$-rule.
To ensure that
every example obtains some classification, we assume that the term of the last
rule is empty and therefore applies to all examples.

\mypara{Binary Decision Diagrams.}
A {\it binary decision diagram} ($\BDD$) $B$ is a pair $(D,\fBDD)$ where
$D$ is a directed acyclic graph with three special vertices $s,t_0,t_1$
such that:
\begin{itemize}
\item $s$ is a source vertex that can (but does not have to)
 be equal to $t_0$ or $t_1$,
\item $t_0$ and $t_1$ are the only sink vertices of $D$,
\item every non-sink vertex has exactly two outgoing
 neighbors, which we call the $0$-neighbor and the
 $1$-neighbor, and
\item $\fBDD: V(D)\setminus \{t_0,t_1\} \rightarrow F$ is a function
 that associates with every non-sink node of $D$ a feature in $F$.
\end{itemize}

For an example $e\in E$, we denote by $P_B(e)$ (or $P(e)$ if $B$ is
clear from the context), the unique path from $s$ to either $t_0$ or
$t_1$ followed by $e$ in $B$. That is starting at $s$ and ending at
either $t_0$ or $t_1$, $P(e)$ is iteratively defined as
follows. Initially, we set $P(e)=(s)$, moreover, if $P(e)$ ends in a
vertex $v$ other than $t_0$ or $t_1$, then we extend $P(e)$ by the
$e(\fBDD(v))$-neighbor of $v$ in $D$.
Let $B$ be a \BDD{} and $e \in E(F)$ be an example.
The classification function $B : E(F) \rightarrow \{0,1\}$ of 
$B$ is given by setting $B(e)=b$ if $P_B(e)$ ends in $t_{b}$.
We denote by $\som{B}$ the size of
$B$, which is equal to $|V(D)|$. 
We say that $B$ is an \emph{\OBDD{}} if
every path in $B$ contains features in the same order. Moreover, $B$ is a \emph{complete \OBDD{}} if every maximal path contains the same set of features.
It is known that every \OBDD{} can be transformed in polynomial-time into an equivalent complete \OBDD{}~\cite[Observation 1]{DBLP:journals/corr/abs-2104-02563}. All \OBDD{}s considered in the paper are complete.

\sloppypar\mypara{Ensembles.}
An \emph{$\MM$-ensemble}, also denoted by $\MM_\MAJ$, $\EEE$ is a set of
models of type $\MM$, where
$\MM\in \{\DT,\DS,\DL,\OBDD\}$. We say that $\EEE$ classifies an example $e
\in E(F)$ as $b$ if so do the majority of models in $\EEE$, i.e.,
if there are at least $\lfloor|\EEE|/2\rfloor+1$
models in $\EEE$ that classify $e$ as $b$. We denote by $\som{\EEE}$ the size of
$\EEE$, which is equal to $\sum_{M \in \EEE}\som{M}$. We additionally
consider an \emph{ordered $\OBDD$-ensemble}, denote by \OBDDEO{},
where all \OBDD{}s in the ensemble respect the same ordering of the features.

\section{Considered Problems and Parameters}

\begin{figure}
\centering
\begin{tikzpicture}[scale=1]
\tikzstyle{q} = [draw, rectangle,  fill= gray!10, inner sep=2pt]
\tikzstyle{t} = [draw=black,rectangle, fill= darkgray, text=white, inner sep=2pt]
\draw (13.0,-2.5) node  (dl) {\fbox{$\arraycolsep=1.4pt
\begin{array}{llll}
r_1:&\text{IF}   & (x=1 \wedge y=1)&\text{THEN } 0\\
r_2:&\text{ELSE IF}   &(x=0 \wedge z=0) &\text{THEN } 1\\
r_3:&\text{ELSE IF}   &(y=0 \wedge z=1) &\text{THEN } 0\\
r_4:&\text{ELSE}   &  &\text{THEN } 1\\
\end{array}
$}};
\end{tikzpicture}
\caption{Let $L$ be the \DL given in the figure and let $e$ be the example given by $e(x)=0$, $e(y)=0$ and $e(z)=1$. Note that $L(e)=0$. It is easy to verify that $\{y,z\}$ is the only local abductive explanation for $e$ in $L$ of size at most 2. Moreover, both $\{y\}$ and $\{z\}$ are minimal
local contrastive explanations for $e$ in $L$.
Let $\tau_1=\{x\mapsto 1,y \mapsto 1\}$ and $\tau_2=\{ x \mapsto 0,z \mapsto 0\}$ be a partial assignments.
Note that $\tau_1$ and $\tau_2$ are minimal global abductive and global contrastive
explanations for class $0$ w.r.t. $L$, respectively.
}
\label{fig:expl}
\end{figure}

We consider the following types of explanations
(see~\citeauthor{Silva22}'s  survey \cite{Silva22}).
Let $M$ be a model, $e$ an example over $\feat(M)$, and let $c \in
\{0,1\}$ be a classification (class). We consider the following types
of explanations for which an example is illustrated in \Cref{fig:expl}.
\begin{itemize}
\item A \emph{(local) abductive explanation (\LAEX{}) for $e$
  w.r.t. $M$} is a subset $A \subseteq \feat(M)$ of features
  such that $M(e)=M(e')$ for every example $e'$ that agrees with $e$ on $A$.
\item A \emph{(local) contrastive explanation (\LCEX{}) for $e$
    w.r.t. $M$} is a set $A$ of
  features such that there is an example $e'$ such that $M(e')\neq
  M(e)$ and $e'$ differs from $e$ only on the features in $A$.
\item A \emph{global abductive explanation (\GAEX{}) for $c$
  w.r.t. $M$} is a partial example
  $\tau : F \rightarrow \{0,1\}$, where $F \subseteq \feat(M)$,
  such that $M(e)=c$ for every example $e$ that agrees with~$\tau$.
\item A \emph{global contrastive explanation (\GCEX{}) for $c$
    w.r.t. $M$} is a partial example
  $\tau : F \rightarrow \{0,1\}$, where $F \subseteq \feat(M)$,
  such that $M(e)\neq c$ for every example that agrees with~$\tau$.
\end{itemize}

For each of the above explanation types, each of the considered model
types $\MM{}$, and depending on whether or
not one wants to find a subset minimal or cardinality-wise minimum
explanation, one can now define the corresponding computational
problem. For instance:

\pbDef{\MM{}-\textsc{Subset-Minimal Local Abductive Explanation
    (\SMLAEX{})}}{A model $M \in \MM$ and an example $e$.}{Find a subset minimal local
  abductive explanation for $e$ w.r.t.~$M$.}

\pbDef{\MM{}-\textsc{Cardinality-Minimal Local Abductive Explanation
    (\MLAEX{})}}{A model $M \in \MM$, an example $e$, and an integer
  $k$.}{Is there a local explanation for $e$ w.r.t. $M$ of size at most $k$?}
The problems $\MM$-$X_\SS$ and $\MM$-$X_\CD$ for $X\in \{ \GAEX$,
$\LCEX$, $\GCEX \}$ are defined analogously.

Finally, for these problems we will consider natural parameters listed
in \Cref{tab:parms}; not all parameters apply to all considered
problems. We denote a problem $X$ parameterized by parameters $p,q,r$
by $X(p+q+r)$.

\section{Overview of Results}\label{sec:overview}
\newcommand{\sizeelem}{\textsl{size\_elem}\xspace}
\newcommand{\termselem}{\textsl{terms\_elem}\xspace}
\newcommand{\enssize}{\textsl{ens\_size}\xspace}
\newcommand{\termsize}{\textsl{term\_size}\xspace}
\newcommand{\widthelem}{\textsl{width\_elem}\xspace}
\newcommand{\xpsize}{\textsl{xp\_size}\xspace}
\newcommand{\mnlsize}{\textsl{mnl\_size}\xspace}

As we consider several problems, each with several variants and
parameters, there are hundreds of combinations to consider. We
therefore provide a condensed summary of our results in
\Cref{fig:DTresults,,fig:DSDLresults,,fig:BDDresults}.

The first column in each table indicates whether a result applies to
the cardinality-minimal or subset-minimal variant of the explanation
problem (i.e., to $X_\SS$ or $X_\CD$, respectively).  The next~4
columns in
\Cref{fig:DTresults,fig:DSDLresults,fig:BDDresults} indicate the parameterization,
the parameters are explained in \Cref{tab:parms}. A ``p'' indicates
that this parameter is part of the parameterization, a ``\N''
indicates that it isn't. A ``c'' means the parameter is set to a
constant, ``1'' means the constant is 1.

By default, each row in the tables applies to all four problems
\LAEX, \GAEX, \GCEX, and \LCEX. However, if a result only
applies to \LCEX, it is stated in parenthesis. So, for instance, the
first row of \Cref{fig:DTresults} indicates that \DT-\SMLAEX{},
\DT-\SMGAEX, \DT-\SMGCEX, and \DT-\SMLCEX, where the
ensemble consists of a single \DT, can be solved in polynomial
time. 

The penultimate row of \Cref{fig:DTresults}
indicates that \RF-\MLAEX,
\RF-\MGAEX{} and \RF-\MGCEX{} are \coNPh{} even if
$\mnlsize{}+\sizeelem{}+\xpsize{}$ is constant,
and \RF-\MLCEX{} is \Wh{1} parameterized by \xpsize{} even if 
$\mnlsize{}+\sizeelem{}$ is constant.
Finally, the $\star$ indicates a minor distinction in the complexity
between \DT-\MLAEX{} and the two problems \DT-\MGAEX{} and \DT-\MGCEX{}. That
is, if the cell contains $\NPtable{}^\star$ or $\pNPtable{}^\star$, then \DT-\MLAEX{} is
\NPh{} or \pNPh{}, respectively, and neither \DT-\MGAEX{} nor
\DT-\MGCEX{} are in \P{} unless $\FPT{}=\W{1}$.

We only state in the tables those results that are not implied by
others. Tractability results propagate in the following list from left
to right, and hardness results propagate from right to left.
\begin{eqnarray*}
  \label{eq:1}
  \text{$\CD$-minimality} &\Rightarrow& \text{$\SS$-minimality}\\[-3pt]
  \text{set $A$ of parameters} &\Rightarrow& \text{set $B\supseteq A$ of parameters}\\[-3pt]
  \text{ensemble of models}   &\Rightarrow& \text{single model}\\[-3pt]
  \text{unordered OBDD ensemble}&\Rightarrow& \text{ordered OBDD ensemble}                     
\end{eqnarray*}
For instance, the tractability of $X_\CD$ implies the
tractability of $X_\SS$, and the hardness of $X_\SS$ implies the
hardness of $X_\CD$.\iflong\footnote{Note that even though the inclusion-wise
  minimal versions of our problems are defined in terms of finding
  (instead of decision), the implication still holds because there is
  a polynomial-time reduction from the finding version
  to the decision version of \MLAEX{}, \MLCEX{}, \MGAEX{}, and
  \MGCEX{}.}\fi

\begin{table}[tbh]
  \centering
  \begin{tabular}{@{}l@{~~}l@{}}
    \toprule
    parameter & definition \\
    \midrule
    \enssize & number of elements of the ensemble\\
    \mnlsize & largest number of $\MNL$ over all ensemble elem. \\
    \termselem & largest number of terms per  ensemble elem.          \\
    \termsize & size of a largest term over all ensemble elem.\\
    \widthelem & largest width over all ensemble elements\\
    \sizeelem & size of largest ensemble element         \\
    \xpsize & size of the explanation\\
              \bottomrule
  \end{tabular}
  \caption{Main parameters considered. Note that some parameters (such
    as \widthelem) only
    apply to specific model types.}\label{tab:parms} 
\end{table}

\newcommand{\Tnum}[1]{}
\begin{table}[htb]
    \Crefname{theorem}{Thm}{Thms}
\centering
\begin{tabular}{@{}cc@{}c@{}c@{}cc@{}r@{}}
  \toprule
 \rot{minimality} &\rot{\enssize}&\rot{\mnlsize} &\rot{\sizeelem} &\rot{\xpsize} &\rot{complexity} &\rot{result} \\
  \midrule 
$\SS$ &1 &\N &\N &\N &\P &\Cref{th:poly-DT-SM}\Tnum{4,5} \\
$\CD$ &1 &\N &\N &\N &$\NPtable^{\star}$(\P) &
\ifshort\Cref{th:DT-MLAEX-W2,th:DT-MGAEX-W1,th:poly-DT-SM}\fi\iflong\Cref{th:DT-MLAEX-W2,th:DT-MGAEX-W1,th:DT-MLCEX}\fi\Tnum{4,5} \\
$\CD$ &1 &\N &\N &p &\Wtable{1}(\P) &
\ifshort\Cref{th:DT-MLAEX-W2,th:DT-MGAEX-W1,th:poly-DT-SM}\fi\iflong\Cref{th:DT-MLAEX-W2,th:DT-MGAEX-W1,th:DT-MLCEX}\fi\Tnum{4,5} \\
$\CD$ &1 &\N &\N &p &\XP (\P) &\ifshort\Cref{th:DT-LGA-XP,th:poly-DT-SM}\fi\iflong\Cref{th:DT-LGA-XP,th:DT-MLCEX}\fi\Tnum{4,6} \\
$\SS$ &p &\N &\N &\N &\coWtable{1}(\Wtable{1}) &\Cref{th:RF-W1-ES}\Tnum{8} \\
$\SS$ &p &\N &\N &\N &\XP &\Cref{th:Rf-XP-e}\Tnum{10} \\
$\CD$ &p &\N &\N &\N &$\pNPtable^{\star}$(\XP) &\Cref{th:DT-MLAEX-W2,th:DT-MGAEX-W1,th:Rf-XP-e}\Tnum{10} \\
$\CD$ &p &p &\N &\N &\FPT &\Cref{th:RF-FPT-MNL}\Tnum{10} \\
$\CD$ &p &\N &p &\N &\FPT &\Cref{th:RF-FPT-MNL}\Tnum{7} \\
$\CD$ &p &\N &\N &c(p) &\coWtable{1}(\Wtable{1}) &\Cref{th:RF-W1-ES}\Tnum{8} \\
$\SS$ &\N &c &c &\N &\coNPtable (\NPtable) &\Cref{th:RF-paraNP}\Tnum{9} \\
$\CD$ &\N &c &c &c(p) &\coNPtable (\Wtable{1}) &\Cref{th:RF-paraNP}\Tnum{9} \\
$\CD$ &\N &\N &\N &p &\copNPtable(\XP) &\Cref{th:RF-paraNP,th:MLCEX-XP}\Tnum{1} \\
  \bottomrule
\end{tabular}
\caption{%
  Explanation complexity when the model is a \DT or an ensemble of
  DTs. See  \Cref{sec:overview} for how to read the table.
}\label{fig:DTresults}
\end{table}

\begin{table}[tbh]
    \Crefname{theorem}{Thm}{Thms}
  \Crefname{corollary}{Cor}{Cors}
  \centering
  \begin{tabular}{@{}cc@{}c@{}c@{}ccr@{}}
    \toprule
\rot{minimality} & \rot{\enssize}& \rot{\termselem} & \rot{\termsize}  & \rot{\xpsize} &\rot{complexity} & \rot{result} \\
 \midrule

$\SS$ &1 &\N &c & \N &\coNPtable(\NPtable) &\Cref{th:DS-paraCoNP}\Tnum{15} \\
$\CD$ &1 &\N &\N &p &\copNPtable(\Wtable{1}) & \Cref{th:DS-paraCoNP,th:DS-SMLCEX-W1}\Tnum{31} \\
$\CD$ &c &\N &p &p  &\copNPtable(\FPT) & \Cref{th:DS-paraCoNP,th:DS-MLCEX-FPT-const-ens}\Tnum{33} \\
$\CD$ &p &p &\N &\N &\FPT &\Cref{cor:ds-ensa}\Tnum{12} \\
$\CD$ &p &\N &c &p  &\copNPtable(\Wtable{1}) & \Cref{th:DS-paraCoNP,th:DSE-SMLCEX-W1}\Tnum{33} \\
$\SS$ &\N &c &c &\N &\coNPtable(\NPtable) &\Cref{th:DSE-paraNP}\Tnum{17} \\
$\CD$ &\N &c &c &c(p) &\coNPtable(\Wtable{1}) &\Cref{th:DSE-paraNP}\Tnum{17} \\
$\CD$ &\N &\N &\N &p &\copNPtable(\XP) &\Cref{th:DS-paraCoNP,th:MLCEX-XP}\Tnum{1} \\
    \bottomrule
  \end{tabular}
  \caption{
    Explanation complexity when the model is a \DS, a \DL, or an
    ensemble thereof. See \Cref{sec:overview} for how to read the
    table.}\label{fig:DSDLresults}
\end{table}

\begin{table}[tbh]
  \Crefname{theorem}{Thm}{Thms}
  \Crefname{corollary}{Cor}{Cors}
  \centering
    \begin{tabular}{@{}clc@{}c@{}c@{}cc@{}r@{}@{}}
    \toprule
\rot{minimality} & \rot{ordered/unordered}& \rot{\enssize}  & \rot{\widthelem} & \rot{\sizeelem}  & \rot{\xpsize} &\rot{complexity}& \rot{result} \\ 
    \midrule
$\SS$ &u &1 &\N &\N &\N &\P &\ifshort\Cref{th:OBDD-SLGA-P}\fi\iflong\Cref{th:OBDD-SLGA-P,th:OBDD-MLCEX}\fi\Tnum{20,19} \\
$\CD$ &o &1 &\N &\N &\N &\NPtable(\P) &\ifshort\Cref{th:OBDD-MLAEX-W2,th:OBDD-SLGA-P}\fi\iflong\Cref{th:OBDD-MLAEX-W2,th:OBDD-MLCEX}\fi\Tnum{20,19} \\
$\CD$ &o &1 &\N &\N &p &\Wtable{2}(\P) &\ifshort\Cref{th:OBDD-MLAEX-W2,th:OBDD-SLGA-P}\fi\iflong\Cref{th:OBDD-MLAEX-W2,th:OBDD-MLCEX}\fi\Tnum{24,19} \\
$\SS$ &u &c &c &\N &\N &\coNPtable(\NPtable) &\Cref{th:OBDDE-paraNP}\Tnum{23} \\
$\CD$ &u &c &c &\N &c(p) &\coNPtable(\Wtable{1}) &\Cref{th:OBDDE-paraNP}\Tnum{23} \\
$\SS$ &o &p &\N &\N &\N &\coWtable{1}(\Wtable{1}) &\Cref{th:OBDDEO-W1}\Tnum{29} \\
$\SS$ &o &p &\N &\N &\N &\XP &\Cref{th:OBDDEO-XP-ens}\Tnum{29} \\
$\CD$ &o &p &\N &\N &\N &\pNPtable(\XP) &\Cref{th:OBDD-MLAEX-W2,th:OBDDEO-XP-ens}\Tnum{29} \\
$\CD$ &o &p &p &\N &\N &\FPT &\Cref{th:OBDDE-FPT-es}\Tnum{22} \\
$\CD$ &u &p &\N &p &\N &\FPT &\Cref{th:OBDDE-FPT-eh}\Tnum{28} \\
$\CD$ &o &p &\N &\N &c(p) &\coWtable{1}(\Wtable{1}) &\Cref{th:OBDDEO-W1}\Tnum{29} \\
$\SS$ &o &\N &c &c &\N &\coNPtable(\NPtable) &\Cref{th:OBDDEO-paraNP-hws}\Tnum{26} \\
$\CD$ &o &\N &c &c &c(p) &\coNPtable(\Wtable{1}) &\Cref{th:OBDDEO-paraNP-hws}\Tnum{26} \\
$\CD$ &u &\N &\N &\N &p &\copNPtable(\XP) &\Cref{th:OBDDE-paraNP,th:MLCEX-XP}\Tnum{1} \\
    \bottomrule
  \end{tabular}
  \caption{
    Explanation complexity when the model is an OBDD or an ensemble
    thereof.  For an ensemble, column ``ordered/unordered'' indicates
    whether all the OBDDs in the ensemble have the same
    variable-order.  See \Cref{sec:overview} for how to read the
    table. } 
\label{fig:BDDresults}
\end{table}

\section{Algorithmic Results}

\newcommand{\val}[3]{\textsf{val}(#1,#2,#3)}
\newcommand{\IG}{\textsf{IG}}
\newcommand{\G}{\textsf{G}}
\newcommand{\OUT}{\textsf{O}}

In this section, we will present our algorithmic results.
We start with some general observations that are independent of a
particular model type.
\ifshort \begin{theorem}[$\star$]\fi\iflong\begin{theorem}\fi\label{th:MLCEX-XP}
  Let $\MM{}$ be any model type such that $M(e)$ can be computed in
  polynomial-time for $M \in \MM$. \MM{}-\MLCEX{} parameterized
  by \xpsize{} is in \XP{}.
\end{theorem}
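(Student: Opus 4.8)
The plan is to solve the problem by brute-force enumeration over all candidate explanations of bounded size, exploiting that each classification query can be answered in polynomial time. First I would reformulate the decision question into a form that is convenient to search over. Observe that contrastive explanations are upward closed: if $A$ is a \LCEX{} for $e$ w.r.t.\ $M$ and $A \subseteq B \subseteq \feat(M)$, then $B$ is one as well, since a witness $e'$ that differs from $e$ only on $A$ also differs from $e$ only on $B$. Consequently, there is a contrastive explanation of size at most $k$ if and only if there is an example $e'$ with $M(e') \neq M(e)$ whose set of features on which it disagrees with $e$ has size at most $k$. One direction takes this differing set itself as the explanation; the other direction uses the witness $e'$ guaranteed by the explanation.

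This reformulation makes the search space explicit. Since the features in $\feat(M)$ are binary, an example $e'$ that differs from $e$ on exactly a set $D \subseteq \feat(M)$ is uniquely determined by $D$: flip the values of $e$ on the features in $D$ and keep them unchanged elsewhere. Thus the algorithm enumerates all subsets $D \subseteq \feat(M)$ with $|D| \le k$, forms the corresponding $e'$, computes $M(e')$, and answers ``yes'' as soon as some $D$ yields $M(e') \neq M(e)$; if no such subset exists, it answers ``no''.

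For the running time, let $n = |\feat(M)|$. The number of subsets of size at most $k$ is $\sum_{i=0}^{k} \binom{n}{i} = \bigoh(n^k)$, and for each we compute $M(e')$ once, which by the hypothesis of the theorem takes time polynomial in the input size. Hence the total running time is $n^{\bigoh(k)}$ times a polynomial, i.e.\ of the form $n^{f(k)}$ for a computable $f$, placing $\MM{}$-$\MLCEX{}$ parameterized by \xpsize{} in $\XP{}$.

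There is no serious obstacle here; the only step that requires care is the reformulation, namely verifying the equivalence between ``admits a contrastive explanation of size at most $k$'' and ``has a differently-classified example that disagrees with $e$ on at most $k$ features,'' and confirming that the polynomial-time computability of $M(e)$ — the single hypothesis of the theorem — is exactly what makes each of the $\bigoh(n^k)$ checks efficient.
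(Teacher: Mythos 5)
Your proposal is correct and follows essentially the same route as the paper: enumerate all $\bigoh(|\feat(M)|^k)$ subsets of size at most $k$, flip $e$ on each subset, and use the polynomial-time classification oracle to check for a different classification. The only cosmetic difference is the justification of the reformulation — you argue via upward-closure of contrastive explanations (the disagreement set of any witness is itself an explanation), while the paper argues via cardinality-minimality (a minimum explanation's witness must flip all of its features) — but these are equivalent observations leading to the identical algorithm and run-time bound.
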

\iflong\begin{proof}
  Let $(M,e,k)$ be the given instance of \MM{}-\MLCEX{} and suppose that
  $A \subseteq \feat(M)$ is a cardinality-wise minimal local contrastive
  explanation for $e$ w.r.t. $M$. Because $A$ is cardinality-wise minimal,
  it holds the example $e_A$ obtained from $e$ by setting
  $e_A(f)=1-e(f)$ for every $f \in A$ and $e_A(f)=e(f)$ otherwise, is
  classified differently from $e$, i.e., $M(e)\neq M(e_A)$. 
  Therefore, a set $A \subseteq \feat(M)$ is a 
  cardinality-wise minimal local contrastive explanation for $e$
  w.r.t. $M$ if and only if $M(e)\neq M(e_A)$ and there is no
  cardinality-wise smaller set $A'$ for which this is the case.
  This
  now allows us to obtain an \XP{} algorithm for $\MM{}$-\MLCEX{} as
  follows. We first enumerate all possible subsets $A \subseteq
  \feat(M)$ of size at most $k$ in time $\bigoh(|\feat(M)|^k)$ and for
  each such subset $A$ we test in polynomial-time 
  if $M(e_A)\neq M(e)$. If so, we output that $(M,e,k)$ is a yes-instance
  and if this is not the case for any of the enumerated subsets, we
  output correctly that $(M,e,k)$ is a no-instance.
\end{proof}\fi

The remainder of the section is organized as follows. First in
\Cref{ssec:bc}, we provide a very general result about Boolean circuits,
which will allow us to show a variety of algorithmic results
for our models. We then provide our algorithms for the considered
models in Subsections~\ref{ssec:algDT} to~\ref{ssec:algOBDD}

\subsection{A Meta-Theorem for Boolean Circuits}\label{ssec:bc}

Here, we present our algorithmic result for Boolean circuits that are
allowed to employ majority circuits. In particular, we will show
that all considered explanation problems are fixed-parameter tractable
parameterized by the so-called 
rankwidth of the Boolean circuit as
long as the Boolean circuit uses only a constant number of majority
gates\ifshort; see, e.g.,~\cite{DBLP:journals/jct/OumS06} for a
  definition of rankwidth\fi. Since our considered models can be naturally translated
into Boolean circuits, which require majority gates in the case of
ensembles, we will obtain a rather large number of algorithmic
consequences from this result by providing suitable reductions of our
models to Boolean circuits in the following subsections.

\iflong
We start by introducing Boolean circuits.
A \emph{Boolean circuit (\BC{})}  is a directed acyclic graph $D$ with a unique
sink vertex $o$ (output gate) such that every vertex $v \in V(D)
\setminus \{o\}$ is either:
\begin{itemize}
\item an \emph{input gate (IN-gate)} with no incoming arcs,
\item an \emph{AND-gate} with at least one incoming arc,
\item an \emph{OR-gate} with at least one incoming arcs,
\item a \emph{majority-gate (MAJ-gate)} with at least one incoming arc and an integer
  threshold $t_v$, or
\item a \emph{NOT-gate} with exactly one incoming arc.
\end{itemize}
We denote by $\IG(D)$ the set of all input gates of $D$ and by
$\MAJ(D)$ the set of all MAJ-gates of $D$.
For an assignment $\alpha : \IG(D) \rightarrow
\{0,1\}$ and a vertex $v \in V(D)$, we denote by $\val{v}{D}{\alpha}$ the
value of the gate $v$ after assigning all input gates according to
$\alpha$. That is, $\val{v}{D}{\alpha}$ is recursively defined as follows: If
$v$ is an input gate, then $\val{v}{D}{\alpha}=\alpha(v)$, if $v$ is an
AND-gate (OR-gate), 
then $\val{v}{D}{\alpha}=\bigwedge_{n \in
  N_D^-(v)}\val{n}{D}{\alpha}$ ($\val{v}{D}{\alpha}=\bigvee_{n \in
  N_D^-(v)}\val{n}{D}{\alpha}$), and if $v$ is a MAJ-gate, then
$\val{v}{D}{\alpha}=|\SB n \SM n \in N^-_D(v) \land
\val{n}{D}{\alpha}=1\SE|\geq t_v$. Here and in the following
$N_D^-(v)$ denotes the set of all incoming neighbors of $v$ in $D$.
We set
$\OUT(D,\alpha)=\val{o}{D}{\alpha}$. We say that $D$ is a $c$-\BC{} if
$c$ is an integer and $D$ contains at most $c$ MAJ-gates.
\fi

\newcommand{\MSO}{\textsf{MSO}$_1$}
\newcommand{\MSOE}{\textsf{MSOE}$_1$}

\iflong
We consider \emph{Monadic Second Order} (\MSO{}) logic on structures
representing \BC{}s as a directed (acyclic) graph with unary relations
to represent the types of gates. That is the structure associated with
a given \BC{} $D$ has $V(D)$ as its universe and contains the
following unary and binary relations over $V(D)$:
\begin{itemize}
\item the unary relations $I$, $A$, $O$, $M$, and $N$ containing all
  input gates, all AND-gates, all OR-gates, all MAJ-gates, and all
  NOT-gates of $D$, respectively,
\item the binary relation $E xy$ containing all pairs $x,y \in V(D)$
  such that $(x,y) \in A(D)$.
\end{itemize}
We assume an infinite supply of \emph{individual variables} and
\emph{set variables}, which we denote by lower case and upper case
letters, respectively. The available \emph{atomic formulas} are
$P g$ (``the value assigned to variable $g$ is contained in the unary
relation or set variable $P$''), $E xy$ (``vertex $x$
is the head of an edge with tail $y$''), $x=y$ (equality), and $x\neq y$
(inequality). \emph{\MSO{} formulas} are built up
from atomic formulas using the usual Boolean connectives
$(\lnot,\land,\lor,\rightarrow,\leftrightarrow)$, quantification over
individual variables ($\forall x$, $\exists x$), and quantification over
set variables ($\forall X$, $\exists X$). 

In order to be able to deal with MAJ-gates, we will need a slightly
extended version of \MSO{} logic, which we denote by \MSOE{} and which in turn is a slightly
restricted version of the so-called distance neighborhood logic that
was introduced by \citet{BergougnouxDJ23}. \MSOE{} extends \MSO{} with
\emph{set terms}, which are built from
set-variables, unary relations, or other set terms by
applying standard set operations such as intersection ($\cap$), union
($\cup$), subtraction ($\setminus$), or complementation (denoted by a
bar on top of the term). Note that the distance neighborhood logic
introduces neighborhood terms, which extend set terms by allowing an
additional neighborhood operator on sets that is not required for our
purposes. Like distance neighborhood logic, \MSOE{} also allows for comparisons between set terms, i.e.,
we can write $t_1=t_2$ or $t_1\subseteq t_2$ to express that the set
represented by the set term $t_1$ is a equal or a subset of the set represented by
the set term $t_2$, respectively.
Most importantly for modeling MAJ-gates is that \MSOE{} allows for \emph{size measurement
  of terms}, i.e., for a set term $t$ and an integer $m$, we can write
$|t|\geq m$ to express that the set represented by $t$ contains at
least $m$ elements.

Let $\Phi$ be an \MSOE{} formula (sentence). For
a \BC{} $D$ and possibly an additional unary relation $U \subseteq
V(D)$, we write $(D,U) \models \Phi$ if $\Phi$ holds
true on the structure representing $D$ with additional unary relation $U$. The following proposition
is crucial for our algorithms based on \MSOE{} and essentially
provides an efficient algorithm for a simple optimization variant of
the model checking problem for \MSOE{}.
\begin{proposition}[{\cite[Theorem 1.2]{BergougnouxDJ23}}]\label{pro:MSOE}
  Let $D$ be a $c$-\BC{}, $U \subseteq V(D)$, and let $\Phi(S_1,\dotsc,S_\ell)$ be an \MSOE{} formula with free (non-quantified) set
  variables $S_1,\dotsc,S_\ell$. The problem to compute sets
  $B_1,\dotsc,B_\ell \subseteq V(D)$ such that $(D,U)\models
  \Phi(B_1,\dotsc,B_\ell)$ and $\sum_{i=1}^\ell|B_i|$ is minimum
  is fixed-parameter tractable parameterized by $\rw(D)+|\Phi(S_1,\dotsc,S_\ell)|$.
\end{proposition}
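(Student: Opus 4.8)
The plan is to obtain the statement as a direct instantiation of the algorithmic meta-theorem of \citet{BergougnouxDJ23} (their Theorem 1.2), so that the real work is to check that our situation is a special case of theirs and that the stated parameterization matches. Two observations make this plausible. First, \MSOE{} was introduced precisely as the fragment of their distance neighborhood logic that drops the neighborhood operator; hence every \MSOE{} formula $\Phi(S_1,\dotsc,S_\ell)$ is, verbatim, a formula of their logic, with $|\Phi(S_1,\dotsc,S_\ell)|$ changed by at most a constant factor. Second, the cited theorem is stated in the optimization form we need, namely: given free set variables, compute a satisfying assignment minimizing the total number of selected elements. Instantiating it with $S_1,\dotsc,S_\ell$ then returns sets $B_1,\dotsc,B_\ell$ with $(D,U)\models\Phi(B_1,\dotsc,B_\ell)$ minimizing $\sum_{i=1}^\ell|B_i|$, which is exactly what is claimed.

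It therefore remains to view a $c$-\BC{} $D$, together with the optional unary relation $U$, as a structure of the kind the meta-theorem accepts, and to relate its width to $\rw(D)$. For the former, I would reuse the relational encoding already fixed above: universe $V(D)$, the binary relation $E$ for the arcs, and the unary relations $I,A,O,M,N$ (and $U$) for the vertex colors; a constant number of unary colors together with a single, possibly directed, binary relation is a standard input format for such meta-theorems and does not affect the relevant width measure by more than a constant. Note that for this statement the MAJ-gates are nothing but the vertices of the relation $M$: the bound $c$ on their number plays no role here and is only exploited in the reductions of the later subsections, where each MAJ-gate will contribute a single size-measurement atom to the formula.

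The step that needs the most care is bridging between the parameterization in which the cited theorem is naturally phrased and the rankwidth parameterization claimed here. Concretely, the meta-theorem runs on a branch decomposition whose quality is measured by a width notion (mim-width) that is bounded by a function of the rankwidth but for which good decompositions cannot be computed efficiently in general; the point of parameterizing by $\rw(D)$ is that a decomposition of bounded rankwidth \emph{can} be computed in fixed-parameter time and converts into a branch decomposition of mim-width bounded by a function of $\rw(D)$. I would therefore first compute such a decomposition of the structure, then feed it, together with $\Phi(S_1,\dotsc,S_\ell)$, into the algorithm of \citet{BergougnouxDJ23}; the resulting running time is then fixed-parameter tractable in $\rw(D)+|\Phi(S_1,\dotsc,S_\ell)|$, as required. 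Verifying that the color/direction encoding and the rankwidth-to-decomposition conversion preserve the bounds up to the allowed functions is the only genuinely technical part; everything else is bookkeeping around a cited result.
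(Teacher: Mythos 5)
The paper offers no proof of this proposition at all: it is imported, with only notational specialization, from \citet{BergougnouxDJ23}, so the only content to check is that the instantiation is legitimate. Your first two observations are consistent with how the paper uses the result: \MSOE{} is by construction a syntactic fragment of distance neighborhood logic (set terms and size measurements, no neighborhood operator), and the cited theorem does handle formulas with free set variables together with minimization of the number of selected elements, which is the optimization form needed here.

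The genuine gap is in your bridge to the rankwidth parameterization. You describe the meta-theorem as one whose running time is governed by the mim-width of a supplied branch decomposition, and then argue: compute a rank decomposition of width bounded in $\rw(D)$ in fixed-parameter time (Hlin\v{e}n\'y--Oum), note that its mim-width is at most its rank-width, feed it to the algorithm, and conclude fixed-parameter tractability. This inference is invalid. The mim-width-based running time in \citet{BergougnouxDJ23} has the form $n^{\bigoh(w)}$ for decompositions of mim-width $w$, and this shape is unavoidable (basic problems such as independent set are \Wh{1} parameterized by mim-width), so your route yields only a running time of the form $n^{\bigoh(f(\rw(D)))}$, i.e., membership in \XP{}, not in \FPT{}. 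What actually makes the proposition true is a finer feature of the cited result: its running time is polynomial in the number of $d$-neighbor-equivalence classes of the cuts of the supplied decomposition, and for a rank decomposition of width $r$ this number is bounded by a function of $r$ and $d$ alone (single-exponential in $r$, independent of $n$), where $d$ is controlled by the formula; for mim-width $w$, by contrast, the bound is only $n^{\bigoh(dw)}$. It is this rankwidth-specific equivalence-class bound, not the inequality ``mim-width $\leq$ rank-width,'' that turns the cited Theorem~1.2 into the \FPT{} statement claimed in the proposition. As written, your argument proves only an \XP{} bound and hence does not establish the statement.
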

\fi

\ifshort
  To show our algorithmic result for Boolean circuits given below, we
  make use of an only recently developed meta-theorem~\cite[Theorem
  1.2]{BergougnouxDJ23} involving an extension of Monadic second order
  logic that allows us to easily model majority gates of Boolean circuits.
\fi
\ifshort \begin{theorem}[$\star$]
\fi
\iflong \begin{theorem}
\fi    
  \label{the:solve-circ}
  $c$-\BC{}-\MLAEX{}, $c$-\BC{}-\MGAEX{}, $c$-\BC{}-\MLCEX{},
  $c$-\BC{}-\MGCEX{} are fixed-parameter tractable parameterized by
  the rankwidth of the circuit.
\end{theorem}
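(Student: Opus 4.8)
The plan is to express each of the four explanation problems as an \MSOE{} model-checking/optimization instance on the structure representing the circuit $D$, and then invoke \Cref{pro:MSOE} to obtain fixed-parameter tractability parameterized by $\rw(D)$ (since $c$ is constant, a $c$-\BC{} has at most $c$ majority gates, exactly the regime covered by the meta-theorem). The key technical device is that the \emph{circuit itself} is the input structure, so the input gates $\IG(D)$ are distinguished by the unary relation $I$, and an assignment $\alpha : \IG(D) \to \{0,1\}$ can be encoded by a single set variable $S \subseteq V(D)$ whose intended meaning is ``the set of input gates assigned $1$'' (formally one works with $S \cap I$). The first and most important step is therefore to write a fixed \MSOE{} formula $\mathrm{eval}(S, x)$ asserting that, under the input assignment encoded by $S$, the gate $x$ evaluates to $1$. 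This is where the extended logic is needed: for an AND-gate $x$ one writes $\forall y\, (E x y \rightarrow \text{``}y \text{ evaluates to }1\text{''})$, for an OR-gate the dual, for a NOT-gate negation over its unique predecessor, and for a MAJ-gate $x$ with threshold $t_x$ one uses the size-measurement feature $|\,T \cap N^-(x)\,| \geq t_x$ where $T$ is the set term collecting the gates evaluating to $1$. The subtlety is that ``evaluates to $1$'' is itself recursive along the DAG, so one cannot literally inline it; instead I would quantify over a set variable $T$ guessing the set of gates that evaluate to $1$ and impose local consistency constraints at every gate (each gate is in $T$ if and only if its gate-type constraint w.r.t.\ its in-neighbours in $T$ holds, with input gates constrained to agree with $S$), together with $x \in T$ to read off the value; correctness follows because on a DAG these local constraints have a unique fixed point.

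Once $\mathrm{eval}$ is available, each explanation problem is a short formula whose free set variable we ask \Cref{pro:MSOE} to minimize. Let $o$ be the output gate and fix the given example $e$ as a constant partial structure (encoded by a unary relation $U = \{\,f \in \IG(D) \mid e(f)=1\,\}$, which the proposition explicitly permits as the auxiliary relation). For \MLAEX{} the explanation is a set $A \subseteq \IG(D)$ such that every input assignment agreeing with $e$ on $A$ yields the same output value as $e$ does; so I would write $\Phi_{\mathrm{A}}(A) \equiv \forall S\,\bigl(\text{``}S\text{ agrees with }U\text{ on }A\text{''} \rightarrow (\mathrm{eval}(S,o) \leftrightarrow \mathrm{eval}(U,o))\bigr)$, where agreement on $A$ is the set-term identity $S \cap A = U \cap A$, and then minimize $|A|$. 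For \MLCEX{} the explanation is a set $A$ for which \emph{some} assignment differing from $e$ only on $A$ flips the output, giving $\Phi_{\mathrm{C}}(A) \equiv \exists S\,\bigl((S \setminus A) \cap I = (U \setminus A) \cap I \wedge \neg(\mathrm{eval}(S,o) \leftrightarrow \mathrm{eval}(U,o))\bigr)$, minimizing $|A|$. The global variants replace the reference assignment $U$ by a target class $c$ and ask for a partial example; here the explanation is itself an assignment to a subset of features, so I would use \emph{two} free set variables, $P$ (features fixed to $1$) and $Z$ (features fixed to $0$), require $P,Z \subseteq I$ and $P \cap Z = \emptyset$, demand $\forall S\,(\text{``}S \supseteq P, S \cap Z = \emptyset\text{''} \rightarrow \mathrm{eval}(S,o) = c)$ for \MGAEX{} (and the negation for \MGCEX{}), and minimize $|P|+|Z|$, which is exactly the sum that \Cref{pro:MSOE} minimizes.

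Finally I would check that $\rw(D)$ of the structure equals (up to the constant-factor conventions used by \Cref{pro:MSOE}) the rankwidth of the circuit as stated, and that $|\Phi|$ is a constant depending only on the problem type and on the constant $c$ (the thresholds $t_v$ of the $c$ majority gates are part of the formula, but there are at most $c$ of them and each comparison $|\,\cdot\,|\geq t_v$ is a single \MSOE{} atom), so that the running time $f(\rw(D)+|\Phi|)\cdot n^{O(1)}$ from \Cref{pro:MSOE} is genuinely fixed-parameter tractable in $\rw(D)$ alone. The cardinality-minimal decision versions of all four problems then follow by comparing the minimum returned by \Cref{pro:MSOE} against the bound $k$. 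I expect the main obstacle to be the correct formalization of $\mathrm{eval}$: one must make sure the recursive gate semantics is captured \emph{faithfully} by purely local first-order constraints with a guessed witness set $T$, argue that the DAG structure forces $T$ to be the true evaluation set, and verify that expressing the MAJ-gate threshold requires only the size-measurement and set-term machinery that \MSOE{} provides (and \emph{not} the neighbourhood-term extension, which the excerpt notes is unnecessary) — this is precisely the place where ordinary \MSO{} would fail and the strength of the meta-theorem is essential.
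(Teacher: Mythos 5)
Your proposal is correct and follows essentially the same route as the paper's proof: the paper likewise encodes circuit evaluation by quantifying a set of gates valued $1$ subject to local per-gate consistency constraints (its formulas $\textsf{CON}$ and $\textsf{SAT}$, which play exactly the role of your $\mathrm{eval}$ witness set $T$ and whose uniqueness on a DAG is used the same way), represents the example/class by the permitted unary relation, uses one free set variable for the local problems and two ($S_0,S_1$) for the global ones, and concludes by invoking \Cref{pro:MSOE} to minimize the (sum of the) free-variable sizes. The only differences are notational, e.g.\ the paper pins down the in-neighbourhood of each majority gate by an existentially quantified set $N$ with a first-order bi-implication instead of your shorthand $|T\cap N^-(x)|\geq t_x$ — precisely the formalization point you flagged as needing care.
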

\iflong
  \begin{proof}
  Let $D$ be a $c$-\BC{} with output gate $o$. We will define one
  \MSOE{} formula for each of the four considered problems. That is, we will define
  the $\Phi_{\textsf{LA}}(S)$, $\Phi_{\textsf{LC}}(S)$,
  $\Phi_{\textsf{GA}}(S_0,S_1)$, and $\Phi_{\textsf{GC}}(S_0,S_1)$ such that:
  \begin{itemize}
  \item $(D,T) \models \Phi_{\textsf{LA}}(S)$ if and only if $S$
    is a local abductive explanation for $e$ w.r.t. $D$. Here, $e$ is
    the given example and $T$ is a unary relation on $V(D)$ given as
    $T=\SB v \in \IG(D) \SM e(v)=1 \SE$.
  \item $(D,T) \models \Phi_{\textsf{LC}}(S)$ if and only if $S$
    is a local contrastive explanation for $e$ w.r.t. $D$. Here, $e$
    is the given example and $T$ is a unary relation on $V(D)$ given
    as $T=\SB v \in \IG(D) \SM e(v)=1 \SE$.
  \item $(D,C) \models \Phi_{\textsf{GA}}(S_0,S_1)$ if and only if the
    partial assignment $\tau : S_0\cup S_1 \rightarrow \{0,1\}$ with
    $\tau(s)=0$ if $s \in S_0$ and $\tau(s)=1$ if $s \in S_1$
    is a global abductive explanation for $c$ w.r.t. $D$.
    Here, $c \in \{0,1\}$ is the given class and $C$ is a unary
    relation on $V(D)$ that is empty if $c=0$ and otherwise contains
    only the output gate $o$.
  \item $(D,C) \models \Phi_{\textsf{GC}}(S_0,S_1)$ if and only if
    the
    partial assignment $\tau : S_0\cup S_1 \rightarrow \{0,1\}$ with
    $\tau(s)=0$ if $s \in S_0$ and $\tau(s)=1$ if $s \in S_1$
    is a global contrastive explanation for $c$ w.r.t. $D$.
    Here, $c \in \{0,1\}$ is the given class and $C$ is a unary
    relation on $V(D)$ that is empty if $c=0$ and otherwise contains
    only the output gate $o$.
  \end{itemize}
  Because each of the formulas will have constant length, the theorem
  then follows immediately from \Cref{pro:MSOE}.

  We start by defining the auxiliary formula $\textsf{CON}(A)$ such that $D
  \models \textsf{CON}(B)$ if and only if $B=\SB v \SM v \in V(D)
  \land \val{v}{D}{\alpha_B}=1\SE$, where $\alpha_B : \IG(D)
  \rightarrow \{0,1\}$ is the assignment of the input gates of $D$
  defined by setting $\alpha_B(v)=1$ if $v \in B$ and $\alpha_B(v)=0$,
  otherwise. In other words $D
  \models \textsf{CON}(B)$ if and only if $B$ represents a consistent
  assignment of the gates, i.e., exactly those gates are in $B$ that
  are set to $1$ if the circuit is evaluated for the input assignment $\alpha_B$.
  The formula $\textsf{CON}(A)$ is equal to
  $\textsf{CON}_\MAJ(A) \land \textsf{CON}'(A)$, where:

  \[
    \textsf{CON}_\MAJ(A) = \begin{array}{ll}\bigwedge_{g' \in \MAJ(D)} g' \in A
      \leftrightarrow 
      (\exists N\ |N|\geq t_{g'} \land \\(\forall n\ n \in N \leftrightarrow n
      \in A \land \textup{IN}(g,n)))
    \end{array}
  \]
  and
  \[ \textsf{CON}'(A) = \begin{array}{ll}
    \forall g &
    (\textup{AND}(g) \rightarrow \\
    & \quad (g \in A \leftrightarrow
    \forall n\ 
    \textup{IN}(g,n) \rightarrow n \in A)) \land\\
              & (\textup{OR}(g) \rightarrow \\
    & \quad (g \in A \leftrightarrow
      \exists n\ 
      \textup{IN}(g,n) \land n \in A)) \land\\
              & (\textup{NOT}(g) \rightarrow \\
    & \quad (g \in A \leftrightarrow
                \exists n\ 
                \textup{IN}(g,n) \land n \notin A))\\
  \end{array}
  \]

  We also need the formula $\textsf{SAT}(E)$ such that $D
  \models \textsf{SAT}(B)$ if and only if the assignment $\alpha_B :
  \IG(D) \rightarrow \{0,1\}$ defined by setting $\alpha_B(v)=1$ if $v
  \in B$ and $\alpha_B(v)=0$ otherwise satisfies the circuit $D$.

  \[ \textsf{SAT}(E) = \exists E'\ E'\cap \IG(D)=E \land
    \textsf{CON}(E') \land o \in E'
  \]

  Finally, we need the formula $\textsf{AGREE}(E,E_0,E_1)$ such that $D
  \models \textsf{AGREE}(B,B_0,B_1)$ if and only if $B_1 \subseteq B$
  and $B_0\cap B=\emptyset$. In other words, the assignment $\alpha_B :
  \IG(D) \rightarrow \{0,1\}$ defined by setting $\alpha_B(v)=1$ if $v
  \in B$ and $\alpha_B(v)=0$ otherwise is $1$ for every feature in
  $B_1$ and $0$ for every feature in $B_0$.

  \[ \textsf{AGREE}(E,E_0,E_1) = E_1\subseteq E \land E_0\cap E=\emptyset
  \]

  We are now ready to define the formula
  $\Phi_{\textsf{LA}}(S)$. 
  
  \[
    \begin{array}{ll}
      \Phi_{\textsf{LA}}(S) = & \exists E_0 \exists E_1\  S=E_0\cup
                                E_1 \land \\
      & E_1 \subseteq T \land E_0 \subseteq
        (\IG(D)\setminus T) \land \\
                              & \forall A \subseteq \G(D)\  \textsf{CON}(A) \rightarrow\\
                              & (\textsf{AGREE}(A,E_0,E_1) \rightarrow (\textsf{SAT}(T) \leftrightarrow o \in A))
    \end{array}
  \]

  Note that the set $S=E_0\cup E_1$ represents the local abductive explanation and
  in particular $E_b$ contains all input gates that are set to $b$ in
  the explanation. Moreover, the set $A$ represents an example
  together with its evaluation in the circuit $D$ and the subformula $(\textsf{SAT}(T) \leftrightarrow o
  \in A)$ is true if and only if the example represented by $A$ is
  classified in the same manner as the example represented by $T$.

  We are now ready to define the formula
  $\Phi_{\textsf{LC}}(S)$. 

  \[
    \begin{array}{ll}
      \Phi_{\textsf{LC}}(S) = & S \subseteq \IG(D)\land \\
                                & \exists A' \subseteq \G(D)\ T\setminus S \subseteq
                                  A' \land S\setminus T \subseteq A'
                                  \land \\
      & A' \subseteq (T \cup S)\setminus
                                  (T\cap S) \land \\
                                & \textsf{CON}(A') \land
                                  (\textsf{SAT}(A') \leftrightarrow
                                  \lnot \textsf{SAT}(T))\\
    \end{array}
  \]

  Here, $S$ represents the set of features in the local contrastive explanation
  and $A'$ represents the example $e'$ (together with its evaluation
  in the circuit $D$) that differs from the example
  $e$ (represented by $T$) only in the features in $S$ and is
  classified differently from $e$.

  We are now ready to define the formula
  $\Phi_{\textsf{GA}}(S_0,S_1)$. 
  
  \[
    \begin{array}{ll}
      \Phi_{\textsf{GA}}(S_0,S_1) = & S_0 \subseteq \IG(D) \land \\
      & S_1 \subseteq \IG(D) \land S_0\cap S_1=\emptyset \land \\
                                    & \forall A \subseteq \G(D)\  \textsf{CON}(A) \rightarrow\\
      & (\textsf{AGREE}(A, S_0,S_1) \rightarrow (C o \leftrightarrow o \in A))
    \end{array}
  \]

  Note that the set $S_0\cup S_1$ represents the global abductive explanation and
  in particular $S_b$ contains all input gates that are set to $b$ in
  the explanation. Moreover, the set $A$ represents an example
  together with its evaluation in the circuit $D$ and the subformula
  $(C o \leftrightarrow o
  \in A)$ is true if and only if the example represented by $A$ is
  classified as $c$.

  We are now ready to define the formula
  $\Phi_{\textsf{GC}}(S_0,S_1)$. 
  
  \[
    \begin{array}{ll}
      \Phi_{\textsf{GC}}(S_0,S_1) = & S_0 \subseteq \IG(D) \land \\
      & S_1 \subseteq \IG(D)\land S_0\cap S_1=\emptyset \land \\
                                    & \forall A \subseteq \G(D)\
                                      (\textsf{CON}(A) \rightarrow\\
      & (\textsf{AGREE}(A, S_0,S_1)
                                      \rightarrow (C o \leftrightarrow o \notin A)
    \end{array}
  \]
\end{proof}
\fi

\subsection{DTs and their Ensembles}\label{ssec:algDT}

Here, we present our algorithms for \DT{}s and their ensembles.
\iflong
We
start with a simple translation from \DT{}s to \BC{}s that allow us to
employ \Cref{the:solve-circ} for \DT{}s.
\ifshort \begin{lemma}[$\star$]\fi\iflong\begin{lemma}\fi\label{lem:dt-trans-circ}
  There is a polynomial-time algorithm that given a \DT{} $\TTT=(T,\lambda)$ and a
  class $c$ produces a circuit $\CIRC(\TTT,c)$ such that:
  \begin{enumerate}[(1)]
  \item for every example $e$, it holds that $\TTT(e)=c$ if and only if
    (the assignment represented by) $e$ satisfies $\CIRC(\TTT,c)$ and
  \item $\rw(\CIRC(\TTT,c)) \leq 3\cdot 2^{|\MNL(\TTT)|}$
  \end{enumerate}
\end{lemma}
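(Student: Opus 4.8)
The plan is to realize $\CIRC(\TTT,c)$ as a DNF (possibly negated at the top) whose terms are exactly the root-to-leaf paths of $\TTT$ ending in a suitable class, where the class is chosen so that the number of terms is $\MNL(\TTT)$ rather than $|L_c(\TTT)|$. Recall that each leaf $l$ determines the partial assignment $\alpha_{\TTT}^l$ read off the root-to-$l$ path, and that $\TTT(e)=b$ if and only if $e$ agrees with $\alpha_{\TTT}^l$ for some $l\in L_b(\TTT)$. Hence the set of $c$-examples is captured by the formula $\bigvee_{l\in L_c(\TTT)}\bigwedge_{(f=z)\in\alpha_{\TTT}^l}\ell_{f,z}$, where $\ell_{f,1}$ is the input gate for $f$ and $\ell_{f,0}$ is a NOT-gate whose unique in-neighbor is that input gate. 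If $|L_c(\TTT)|\le|L_{1-c}(\TTT)|$ (so $c$ is the minority class and $|L_c(\TTT)|=\MNL(\TTT)$), I take this DNF directly, with its top OR-gate as the output. Otherwise $1-c$ is the minority class, so I build the analogous DNF for $1-c$, which now has $\MNL(\TTT)$ terms, and put a single NOT-gate on top of its OR-gate as the output. In both cases the circuit has exactly $\MNL(\TTT)$ AND-gates, one per term, and the whole construction is a single traversal of $T$, hence runs in polynomial time.

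For item (1), correctness is immediate from the path/leaf characterization above. In the first case an example satisfies the output OR-gate iff it agrees with some $\alpha_{\TTT}^l$ with $l\in L_c(\TTT)$, iff it reaches a $c$-leaf, iff $\TTT(e)=c$. In the second case the example satisfies the top NOT-gate iff it reaches no $(1-c)$-leaf, iff $\TTT(e)\neq 1-c$, iff $\TTT(e)=c$, using that the classification is binary.

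For item (2), I would bound the treewidth and invoke \Cref{lem:ranktree}. Let $X$ be the set of $\MNL(\TTT)$ AND-gates and consider $\CIRC(\TTT,c)-X$. Every literal NOT-gate has its unique in-neighbor an input gate and its only outgoing edge to an AND-gate, which is now deleted; every input gate is a source; and the OR-gate has all its in-neighbors among the (deleted) AND-gates, with at most a single outgoing edge to the top NOT-gate. Thus $\CIRC(\TTT,c)-X$ is a forest and has treewidth at most $1$. By the first (additive) part of \Cref{lem:ranktree}, the treewidth of $\CIRC(\TTT,c)$ is at most $|X|+1=\MNL(\TTT)+1$, and by the second part $\rw(\CIRC(\TTT,c))\le 3\cdot 2^{\mathrm{tw}-1}\le 3\cdot 2^{\MNL(\TTT)}$, which is exactly the claimed bound (note $|\MNL(\TTT)|=\MNL(\TTT)$).

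The construction itself is routine, so the only point needing care is the rankwidth estimate. A naive DNF for class $c$ would have up to $|L_c(\TTT)|$ terms, which can be far larger than $\MNL(\TTT)$; the fix is precisely the minority-class-plus-negation choice above, ensuring the number of AND-gates equals $\MNL(\TTT)$. After that, the main obstacle reduces to the clean observation that deleting only the AND-gates leaves a forest, so the additive part of \Cref{lem:ranktree} yields the $+1$ treewidth bound and the exponential rankwidth bound follows. The presence of the single top NOT-gate in the negated case does not affect this, since it only contributes one extra pendant edge to the OR-gate and keeps the residual graph a forest.
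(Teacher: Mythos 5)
Your proposal is correct and follows essentially the same route as the paper's own proof: build the DNF-style circuit whose AND-gates correspond exactly to the root-to-leaf paths of the minority class (so that the number of AND-gates is $\MNL(\TTT)$), negate at the top when $c$ is the majority class, and then bound the rankwidth by deleting the AND-gates, observing the remainder is a forest, and applying both parts of \Cref{lem:ranktree} to get treewidth at most $\MNL(\TTT)+1$ and hence rankwidth at most $3\cdot 2^{\MNL(\TTT)}$. If anything, your version is slightly cleaner: the paper states that the output of $D$ is fed into an extra \emph{OR}-gate in the majority-class case (which, as written, would not flip the classification and is evidently a typo), whereas you correctly use a NOT-gate there.
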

\iflong\begin{proof}
  Let $\TTT=(T,\lambda)$ be the given \DT{} and suppose that
  $\MNL(\TTT)$ is equal to the number of negative leaves;
  the construction of the circuit $\CIRC(\TTT,c)$ is analogous if instead
  $\MNL(\TTT)$ is equal to the number of positive leaves.
  We first construct the circuit $D$ such that $D$ is satisfied by $e$
  if and only if $\TTT(e)=0$. $D$ contains one input gate $g_f$
  and one NOT-gate $\overline{g_f}$, whose only incoming arc is from $g_f$,
  for every feature in $\feat(\TTT)$. Moreover, for every $l \in
  L_0(\TTT)$, $D$ contains an AND-gate $g_l$, whose incoming
  arcs correspond to the partial assignment $\alpha_{\TTT}^l$,
  i.e., for every feature $f$ assigned by $\alpha_{\TTT}^l$,
  $g_l$ has an incoming arc from $g_f$ if
  $\alpha_{\TTT}^l(f)=1$ and an incoming arc from
  $\overline{g_f}$ otherwise. Finally, $D$ contains the OR-gate $o$,
  which also serves as the output gate of $D$, that has one incoming
  arc from $g_l$ for every $l \in L_0$. This completes the
  construction of $D$ and it is straightforward to show that $D$ is
  satisfied by an example $e$ if and only if $\TTT(e)=0$.
  Moreover, using \Cref{lem:ranktree}, we obtain that $D$ has treewidth at most $|\MNL(\TTT)|+1$ because
  the graph obtained from $D$ after removing all gates $g_l$ for every
  $l \in L_0(\TTT)$ is a tree and therefore has treewidth at
  most $1$. Therefore, using \Cref{lem:ranktree}, we obtain that $D$ has
  rankwidth at most $3\cdot 2^{|\MNL(\TTT)|}$. Finally, $\CIRC(\TTT,c)$ can now be obtained from
  $D$ as follows. If $c=0$, then $\CIRC(\TTT,c)=D$. Otherwise,
  $\CIRC(\TTT,c)$ is obtained from $D$ after adding one OR-gate
  that also serves as the new output gate of $\CIRC(\TTT,c)$
  and that has only one incoming arc from $o$.
\end{proof}\fi

We now provide a translation from \RF{}s to $1$-\BC{}s that will 
allow us to obtain tractability results for \RF{}s.

\ifshort \begin{lemma}[$\star$]\fi\iflong\begin{lemma}\fi\label{lem:rf-trans-circ}
  There is a polynomial-time algorithm that given a \RF{} $\mathcal{F}$ and a
  class $c$ produces a circuit $\CIRC(\mathcal{F},c)$ such that:
  \begin{enumerate}[(1)]
  \item for every example $e$, it holds that $\mathcal{F}(e)=c$ if and only if
    (the assignment represented by) $e$ satisfies $\CIRC(\mathcal{F},c)$ and
  \item $\rw(\CIRC(\mathcal{F},c)) \leq 3\cdot 2^{\sum_{\TTT \in \mathcal{F}}|\MNL(\TTT)|}$
  \end{enumerate}
\end{lemma}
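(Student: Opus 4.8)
The plan is to build $\CIRC(\mathcal{F},c)$ by taking the per-tree circuits supplied by \Cref{lem:dt-trans-circ} and combining their outputs with a single majority gate, and then to bound its rankwidth by a vertex-deletion argument mirroring the one used for a single \DT{}.

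First, for every $\TTT \in \mathcal{F}$ I would invoke \Cref{lem:dt-trans-circ} to obtain a circuit $\CIRC(\TTT,c)$ whose output gate evaluates to $1$ on an example $e$ exactly when $\TTT(e)=c$. Crucially, all of these circuits are built over the same feature set, so I identify their input gates, keeping a single input gate $g_f$ per feature $f$ (the internal gates, including the per-leaf AND-gates and the per-tree NOT- and OR-gates, remain private to each tree). I then add one MAJ-gate $m$, declare it the output gate of $\CIRC(\mathcal{F},c)$, give it an incoming arc from the output gate of each $\CIRC(\TTT,c)$, and set its threshold to $t_m=\lfloor|\mathcal{F}|/2\rfloor+1$. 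This is computable in polynomial time and uses exactly one majority gate, so $\CIRC(\mathcal{F},c)$ is a $1$-\BC{}. For property~(1), by \Cref{lem:dt-trans-circ} the number of output gates of the $\CIRC(\TTT,c)$ that evaluate to $1$ under $e$ equals the number of trees $\TTT\in\mathcal{F}$ with $\TTT(e)=c$; hence $m$ evaluates to $1$ iff at least $\lfloor|\mathcal{F}|/2\rfloor+1$ trees classify $e$ as $c$, which is precisely the definition of $\mathcal{F}(e)=c$, giving the required equivalence.

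For property~(2) I would take $X$ to be the set of all AND-gates across all the $\CIRC(\TTT,c)$. Since each $\CIRC(\TTT,c)$ contains exactly $|\MNL(\TTT)|$ AND-gates (one per minority leaf), we have $|X|=\sum_{\TTT\in\mathcal{F}}|\MNL(\TTT)|$. The key observation is that deleting $X$ leaves a forest: after removing the AND-gates, each shared input gate $g_f$ is joined only to its NOT-gate(s) and so sits in an isolated star, while every per-tree OR-gate loses all its incoming arcs, and the only surviving arcs among the remaining gates are those leading up to $m$ (directly, or through the single extra OR-gate of \Cref{lem:dt-trans-circ} when $c=1$), which form a tree rooted at $m$. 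Thus the underlying undirected graph of $\CIRC(\mathcal{F},c)-X$ is acyclic, i.e., has treewidth at most $1$. By \Cref{lem:ranktree}, the treewidth of $\CIRC(\mathcal{F},c)$ is at most $|X|+1=\sum_{\TTT\in\mathcal{F}}|\MNL(\TTT)|+1$, and therefore $\rw(\CIRC(\mathcal{F},c)) \le 3\cdot 2^{(\sum_{\TTT\in\mathcal{F}}|\MNL(\TTT)|+1)-1}=3\cdot 2^{\sum_{\TTT\in\mathcal{F}}|\MNL(\TTT)|}$, as claimed.

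The only delicate point I expect is the structural claim that deleting the AND-gates leaves a forest: everything hinges on the input gates being \emph{shared} (so that the feature gadgets become isolated after the deletion) and on each per-tree sub-circuit being connected to the rest of the circuit only through its single output arc into $m$. Once this is verified, the treewidth-to-rankwidth estimate of \Cref{lem:ranktree} yields the stated bound immediately, and the correctness of the majority construction is routine.
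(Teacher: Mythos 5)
Your proposal is correct and follows essentially the same route as the paper's proof: form the (input-gate-sharing) union of the per-tree circuits from \Cref{lem:dt-trans-circ}, feed their outputs into a single MAJ-gate with threshold $\lfloor|\mathcal{F}|/2\rfloor+1$, and bound the rankwidth by deleting the $\sum_{\TTT\in\mathcal{F}}\MNL(\TTT)$ AND-gates so that a forest remains, then apply \Cref{lem:ranktree}. Your write-up is in fact somewhat more explicit than the paper's (which states tersely that the remaining circuit is a tree), but the construction and the deletion argument coincide.
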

\iflong\begin{proof}
  We obtain the circuit $\CIRC(\mathcal{F},c)$ from the (not
  necessarily disjoint) union of the circuits $\CIRC(\TTT,c)$
  for every $\TTT \in \mathcal{F}$, which we introduced in
  \Cref{lem:dt-trans-circ},  after adding a new MAJ-gate
  with threshold $\lfloor|\mathcal{F}|/2\rfloor+1$,
  which also serves as the output gate of $\CIRC(\mathcal{F},c)$, that
  has one incoming arc from the output gate of $\CIRC(\TTT,c)$
  for every $\TTT \in \mathcal{F}$. Clearly,
  $\CIRC(\mathcal{F},c)$ satisfies (1). Moreover, to see that it also satisfies
  (2), recall that every circuit $\CIRC(\TTT,c)$ has only
  $\MNL(\TTT)$ gates apart from the input gates, the NOT-gates
  connected to the input gates, and the output gate. Therefore, after
  removing $\MNL(\TTT)$ gates from every circuit
  $\CIRC(\TTT,c)$ inside $\CIRC(\mathcal{F},c)$, the remaining
  circuit is a tree, which together with \Cref{lem:ranktree}
  implies (2).
\end{proof}\fi
\fi
\ifshort
  With the help of our meta-theorem (\Cref{the:solve-circ}) together with natural translations of \DT{}s and
  \RF{}s into \BC{}s and $1$-\BC{}s, respectively, we obtain the
  following two theorems, showing that all problems are
  fixed-parameter tractable parameterized by \enssize{} plus
  \mnlsize{}.
\fi

\ifshort\begin{theorem}[$\star$]\fi\iflong\begin{theorem}\fi\label{th:RF-FPT-MNL}
  Let $\PP \in \{\LAEX, \LCEX, \GAEX, \GCEX\}$. 
  \RF{}-\MPP{}$(\enssize{}+\mnlsize{})$ and therefore also
  \RF{}-\MPP{}$(\enssize{}+\sizeelem{})$ is
  \FPT{}. 
\end{theorem}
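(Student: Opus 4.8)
The plan is to reduce each of the four explanation problems on a random forest $\mathcal{F}$ to the corresponding problem on a Boolean circuit with a single majority gate and then apply the meta-theorem (\Cref{the:solve-circ}). Given an instance, I would first use \Cref{lem:rf-trans-circ} to construct, in polynomial time, the $1$-\BC{} $D := \CIRC(\mathcal{F},c)$ for a class $c$ chosen below; its input gates $\IG(D)$ are in bijection with $\feat(\mathcal{F})$, so the notion of explanation size is preserved by the reduction. The decisive point is property~(2) of \Cref{lem:rf-trans-circ}, which gives
\[
  \rw(D) \;\le\; 3\cdot 2^{\sum_{\TTT \in \mathcal{F}}|\MNL(\TTT)|} \;\le\; 3\cdot 2^{\enssize \cdot \mnlsize},
\]
so that the rankwidth of $D$ is bounded by a function of $\enssize+\mnlsize$ alone. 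Hence \Cref{the:solve-circ}, applied with the constant $c=1$, solves the circuit problem in time \FPT{} in $\rw(D)$, and therefore in time \FPT{} in $\enssize+\mnlsize$.

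It remains to fix the class $c$ for each of the four variants so that the circuit problem is equivalent to the forest problem; here I would use property~(1) of \Cref{lem:rf-trans-circ}, i.e.\ $D(e)=1 \iff \mathcal{F}(e)=c$. For the two local problems I compute $\mathcal{F}(e)$ in polynomial time and set $c:=\mathcal{F}(e)$; then $D(e)=1$, and for every example $e'$ we have $D(e')=D(e) \iff \mathcal{F}(e')=\mathcal{F}(e)$, so a (cardinality-minimal) \LAEX{}, resp.\ \LCEX{}, for $e$ w.r.t.\ $D$ is exactly one for $e$ w.r.t.\ $\mathcal{F}$. For the two global problems the class $c$ is part of the input, and $D(e)=1 \iff \mathcal{F}(e)=c$ turns a \GAEX{} (resp.\ \GCEX{}) for class $c$ w.r.t.\ $\mathcal{F}$ into a \GAEX{} (resp.\ \GCEX{}) for class $1$ w.r.t.\ $D$, which is exactly the circuit instance solved by \Cref{the:solve-circ}. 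Since the meta-theorem behind \Cref{the:solve-circ} returns a solution minimizing the total size of the solution sets, the computed optimum can simply be compared against the input bound $k$ to decide the cardinality-minimal problem.

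Finally, the ``therefore'' clause is immediate from monotonicity of parameterizations: since $\MNL(\TTT)=\min\{|L_0(\TTT)|,|L_1(\TTT)|\}\le \som{\TTT}$, we get $\mnlsize \le \sizeelem$ pointwise, so any algorithm that is \FPT{} in $\enssize+\mnlsize$ is a fortiori \FPT{} in $\enssize+\sizeelem$. I expect the only genuinely delicate part to be the equivalence bookkeeping between forest- and circuit-explanations: one must check that $\feat(\mathcal{F})$ corresponds exactly to $\IG(D)$ (so explanation sizes match across the reduction) and that the correct class $c$ is plugged into the construction for each problem. Once this is verified, the result drops out of \Cref{lem:rf-trans-circ} and \Cref{the:solve-circ}.
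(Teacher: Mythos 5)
Your proposal is correct and follows exactly the paper's route: the paper proves this theorem in one line by combining \Cref{lem:rf-trans-circ} (the translation of a \RF{} into a $1$-\BC{} whose rankwidth is bounded by $3\cdot 2^{\sum_{\TTT\in\mathcal{F}}|\MNL(\TTT)|}\le 3\cdot 2^{\enssize\cdot\mnlsize}$) with the meta-theorem \Cref{the:solve-circ}. The additional bookkeeping you spell out (choosing the class $c$ per problem variant, matching $\feat(\mathcal{F})$ with $\IG(D)$, and the observation $\mnlsize\le\sizeelem$ for the ``therefore'' clause) is exactly what the paper leaves implicit, and it is all correct.
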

\begin{proof}
  The theorem follows immediately from
  \Cref{the:solve-circ} together with \Cref{lem:rf-trans-circ}.
\end{proof}

\iflong
We now give our polynomial-time algorithms for \DT{}s. We start with
the following known result for constrastive explanations.  
\begin{theorem}[{~\cite[Lemma 14]{BarceloM0S20}}]\label{th:DT-MLCEX}
  There is a polynomial-time algorithm that given a \DT{} $\TTT$ and
  an example $e$ outputs a (cardinality-wise) minimum local
  contrastive explanation for $e$ w.r.t. $\TTT$ or no if such an
  explanation does not exist. Therefore, 
  \DT{}-\MLCEX{} can be solved in polynomial-time. 
\end{theorem}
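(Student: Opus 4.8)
The plan is to reduce the problem to a simple per-leaf minimization over the leaves of the opposite class. First I would compute $b = \TTT(e)$ by tracing $e$ from the root in time $\bigoh(h(\TTT))$. Any contrastive explanation must be witnessed by some $e'$ with $\TTT(e') = 1-b$, and every such $e'$ ends at some leaf $l \in L_{1-b}(\TTT)$, meaning $e'$ agrees with the partial assignment $\alpha_{\TTT}^l$ determined by the root-to-$l$ path. The key observation is that the cheapest way to drive $e$ into a \emph{fixed} leaf $l$ is to flip exactly the features on the path to $l$ on which $e$ already disagrees with $\alpha_{\TTT}^l$. Formally, I would set $A_l = \{f : \alpha_{\TTT}^l(f)\text{ is defined and }\alpha_{\TTT}^l(f) \neq e(f)\}$ and $d(l) = |A_l|$.

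The algorithm then iterates over all $l \in L_{1-b}(\TTT)$, discards any $l$ whose path tests some feature with two contradictory outcomes (so $\alpha_{\TTT}^l$ is inconsistent and no example reaches $l$), computes $d(l)$ for the surviving leaves, and returns $A_{l^*}$ for a leaf $l^*$ minimizing $d(l)$. If $L_{1-b}(\TTT)$ has no reachable leaf — equivalently $\TTT$ outputs $b$ on every example — it reports that no contrastive explanation exists. Since each path has length at most $h(\TTT)$ and there are at most $\som{\TTT}$ leaves, this runs in polynomial time.

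For correctness I would argue both directions. The returned $A_{l^*}$ is a valid contrastive explanation: the example $e'$ obtained from $e$ by flipping exactly the features in $A_{l^*}$ agrees with $\alpha_{\TTT}^{l^*}$, hence reaches $l^*$, so $\TTT(e') = 1-b \neq \TTT(e)$, and $e'$ differs from $e$ only on $A_{l^*}$. For minimality, let $A'$ be any contrastive explanation with witness $e''$; then $e''$ reaches some reachable leaf $l'' \in L_{1-b}(\TTT)$, and since $e''$ agrees with $\alpha_{\TTT}^{l''}$ but differs from $e$ only on $A'$, every feature counted by $d(l'')$ must lie in $A'$. Hence $|A'| \geq d(l'') \geq d(l^*) = |A_{l^*}|$, so the output is cardinality-minimum.

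The individual steps are routine; the only point requiring care is the bookkeeping around leaves whose root-to-leaf path tests the same feature twice with inconsistent values. Such leaves have a contradictory $\alpha_{\TTT}^l$ and are reached by no example, so they must be excluded from the minimization — and getting this right is exactly what guarantees that the ``no'' answer is returned precisely when $\TTT$ never outputs $1-b$. Under the common convention that a decision tree never repeats a feature along a path, this subtlety vanishes and every leaf in $L_{1-b}(\TTT)$ is automatically reachable.
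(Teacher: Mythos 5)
Your proof is correct: minimizing the per-leaf disagreement count $d(l)$ over the reachable opposite-class leaves, with the two-directional argument you give (validity of $A_{l^*}$ via the witness that flips exactly those features, and optimality because any witness of any explanation must reach some opposite-class leaf and flip at least $d$ of its path features), is a complete and sound polynomial-time algorithm, and you correctly isolate the one subtlety about contradictory paths.

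For context, the paper does not prove this statement itself but cites it from Barcel\'o et al.\ (Lemma 14), where the result is shown for the more general class of FBDDs by a shortest-path computation over the DAG with edges weighted by disagreement with $e$; your leaf-enumeration argument is exactly the specialization of that idea to trees, where each leaf has a unique root-to-leaf path and enumeration is affordable (for DAG-shaped models such as OBDDs, one would instead need the dynamic-programming formulation, since paths cannot be enumerated explicitly).
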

\fi

\ifshort
The following auxiliary lemma provides polynomial-time algorithms for
testing whether a given subset of features $A$ is a local abductive,
global abductive, or global contrastive explanation.  
\fi
\iflong
The following auxiliary lemma provides polynomial-time algorithms for
testing whether a given subset of features $A$ or partial
example $e'$ is a local abductive, global abductive, or global
contrastive explanation for a given example $e$ or class $c$ w.r.t. a
given \DT{} $\TTT$.
\fi
\ifshort \begin{lemma}[$\star$]\fi\iflong\begin{lemma}\fi\label{lem:dt-sm-testsol}
  Let $\TTT$ be a \DT{}, let $e$ be an example and let $c$ be a class.
  There are polynomial-time algorithms for the following problems:
  \begin{enumerate}[(1)]
  \item Decide whether a given subset $A \subseteq \feat(\TTT)$ of
    features is a local abductive explanation for $e$ w.r.t. $\TTT$.
  \item Decide whether a given partial example $e'$ is a global
    abductive/contrastive explanation for $c$ w.r.t. $\TTT$.
  \end{enumerate}
\end{lemma}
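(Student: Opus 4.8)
The plan is to exploit the fact that a decision tree partitions the example space according to its leaves: every example follows a unique root-to-leaf path and is classified by the leaf it reaches. Concretely, I would first record the structural observation that an example $e'$ reaches a leaf $l$ of $\TTT$ if and only if $e'$ is consistent with the partial assignment $\alpha_{\TTT}^l$, i.e.\ $e'(f)=\alpha_{\TTT}^l(f)$ for every feature $f$ assigned along the root-to-$l$ path. (Leaves whose $\alpha_{\TTT}^l$ assigns conflicting values to some feature, which can happen if a feature repeats along a path, are unreachable by any example and can simply be discarded.) Given this, reachability of a leaf under a partial constraint reduces to a consistency test between two partial assignments, where two partial assignments are \emph{consistent} if they agree on the features common to both their domains; such a test takes time linear in the path length.

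For part~(1), let $c=\TTT(e)$ and let $\sigma$ be the restriction of $e$ to $A$. An example $e'$ agrees with $e$ on $A$ exactly when it is consistent with $\sigma$, so $A$ is a local abductive explanation for $e$ if and only if every example consistent with $\sigma$ is classified as $c$. By the observation above, this fails precisely when some leaf $l\in L_{1-c}(\TTT)$ has $\alpha_{\TTT}^l$ consistent with $\sigma$. The algorithm therefore iterates over all leaves in $L_{1-c}(\TTT)$, tests each for consistency with $\sigma$ (it suffices to check, for each feature on the path that also lies in $A$, that $\alpha_{\TTT}^l$ and $e$ agree), and answers yes if and only if no such leaf is consistent; this runs in time $\bigoh(\som{\TTT}\cdot h(\TTT))$.

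Part~(2) is handled the same way. For the global abductive case, a partial example $\tau$ is a global abductive explanation for $c$ if and only if no example agreeing with $\tau$ is classified as $1-c$, i.e.\ if and only if no leaf $l\in L_{1-c}(\TTT)$ has $\alpha_{\TTT}^l$ consistent with $\tau$; I would simply run the same consistency scan with $\sigma$ replaced by $\tau$. For the global contrastive case, since the classification is binary, $\tau$ is a global contrastive explanation for $c$ if and only if every example agreeing with $\tau$ is classified as $1-c$, which is exactly the global abductive condition for class $1-c$; hence I scan the leaves in $L_c(\TTT)$ instead of $L_{1-c}(\TTT)$. (Indeed, part~(1) is itself the special case of the global abductive test with $\tau$ equal to the restriction of $e$ to $A$ and $c=\TTT(e)$, so all three statements are instances of a single leaf-consistency scan.)

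There is no genuine obstacle here beyond getting the reformulation right. The only points requiring care are the reachability characterization and, with it, the precise notion of consistency between two partial assignments (agreement on the shared domain), together with the harmless treatment of internally inconsistent, and hence unreachable, leaves. Everything else is a linear scan over the at most $\som{\TTT}$ leaves, so polynomial-time solvability is immediate.
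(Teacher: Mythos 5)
Your proposal is correct and takes essentially the same approach as the paper: the paper tests whether the restricted tree $\TTT_{|\tau}$ (obtained by pruning, at every node whose feature $f$ is assigned by the partial example, the $1-\tau(f)$-child) contains only leaves of the appropriate class, which is exactly your leaf-consistency scan, since a leaf survives this pruning if and only if its path assignment $\alpha_{\TTT}^l$ is consistent with $\tau$. Your explicit treatment of contradictory paths (discarding internally inconsistent, hence unreachable, leaves) mirrors the paper's standing assumption that such paths are simplified away in polynomial time beforehand.
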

\iflong\begin{proof}\fi
\ifshort\begin{proof}[Proof Sketch]\fi  
  Let $\TTT$ be a \DT{}, let $e$ be an example and let $c$ be a class.
  Note that we assume here that $\TTT$ does not have any contradictory
  path\ifshort.\fi\iflong, i.e., a root-to-leaf path that contains that assigns any
  feature more than once. Because if this was not the case, we could
  easily simplify $\TTT$ in polynomial-time. \fi

  We start by showing (1).
  A subset $A \subseteq \feat(\TTT)$ of
  features is a
  local abductive explanation for $e$ w.r.t. $\TTT$ if and only if the
  \DT{} $\TTT_{|e_{|A}}$ does only contain
  $\TTT(e)$-leaves, which can clearly be decided in polynomial-time.
  Here, $e_{|A}$ is the partial example equal to the
  restriction of $e$ to $A$. Moreover, $\TTT_{|e'}$ for a partial
  example $e'$ is the \DT{} obtained from $\TTT$ after removing every
  $1-e'(f)$-child from every node $t$ of $\TTT$ assigned to a feature
  $f$ for which $e'$ is defined. \ifshort The proof for (2) is similar.\fi
  \iflong
    
  Similarly, for showing (2), observe that partial example (assignment) $\tau : F \rightarrow
  \{0,1\}$ is a global abductive explanation for $c$ w.r.t. $\TTT$ if and only if the \DT{}
  $\TTT_{|\tau}$ does only contain
  $c$-leaves, which can clearly be decided in polynomial-time.

  Finally, note that a partial example $\tau : F \rightarrow
  \{0,1\}$ is a global contrastive explanation for $c$ w.r.t. $\TTT$ if and only if the \DT{}
  $\TTT_{|\tau}$ does not contain any
  $c$-leaf, which can clearly be decided in polynomial-time.\fi
\end{proof}

Using dedicated algorithms for the inclusion-wise minimal variants of
\LAEX{}, \GAEX{}, \iflong and \fi \GCEX{} \iflong together with
  \Cref{th:DT-MLCEX}\fi\ifshort and using the
  polynomial-time algorithm for the
  cardinality-wise minimal version of \LCEX{} given in~\cite[Lemma
  14]{BarceloM0S20}\fi, we obtain the following result.
\ifshort \begin{theorem}[$\star$]\fi\iflong\begin{theorem}\fi\label{th:poly-DT-SM}
    Let $\PP \in \{\LAEX, \LCEX, \GAEX, \GCEX\}$. 
  \DT{}-\SPP{} \ifshort and \DT{}-\MLCEX{} \fi can be solved in
  polynomial-time. 
\end{theorem}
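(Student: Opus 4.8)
The plan is to split the four cases into the contrastive one, which we get essentially for free, and the three abductive/global ones, which we handle uniformly by greedy shrinking on top of the membership tests already established.

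For $\PP=\LCEX$, I would simply invoke \Cref{th:DT-MLCEX}, which in polynomial time produces a cardinality-wise minimum local contrastive explanation $A$ for $e$ w.r.t.\ $\TTT$ (or reports that none exists). I then observe that any cardinality-minimum explanation is automatically subset-minimal: if some $A'\subsetneq A$ were also a local contrastive explanation, then $|A'|<|A|$ would contradict the minimality of $A$. Hence the same output solves \DT-\SMLCEX; this is exactly the stated propagation ``$\CD$-minimality $\Rightarrow$ $\SS$-minimality'', and it needs no further structural property of the family.

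For each of $\PP\in\{\LAEX,\GAEX,\GCEX\}$ I would run a top-down greedy procedure, whose correctness rests on two ingredients: (i) the family of valid explanations is \emph{upward closed}, and (ii) we can exhibit one valid explanation to start from. For local abductive explanations upward closure is immediate, since if $A$ is a \LAEX\ for $e$ and $A\subseteq A'$ then every $e'$ agreeing with $e$ on $A'$ also agrees on $A$, whence $M(e')=M(e)$; moreover $\feat(\TTT)$ is always valid, as it pins down $e$ uniquely. For the global cases the analogous statement holds for partial assignments ordered by domain extension: extending the domain of a valid $\tau$ (keeping the common values) only shrinks the set of agreeing examples, so validity is preserved, and a valid starting assignment is the root-to-leaf path $\alpha_{\TTT}^{l}$ of any leaf $l$ with $\lambda(l)=c$ (for \GAEX) or $\lambda(l)=1-c$ (for \GCEX), since any example agreeing with $\alpha_{\TTT}^{l}$ is forced down to $l$. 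If no such leaf exists, no explanation exists and I report this.

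Given upward closure and an initial valid explanation $S$, the greedy scans the at most $|\feat(\TTT)|$ features once, tentatively deleting each from $S$ (dropping it from the domain, with the inherited value, in the global cases) and committing to the deletion iff the relevant test of \Cref{lem:dt-sm-testsol} still certifies validity. Each test runs in polynomial time and there are linearly many of them, so the whole procedure is polynomial. Subset-minimality of the final set $S$ follows from upward closure: for every surviving feature $f$, its removal was rejected while the current set was some superset of $S$, so $S\setminus\{f\}$ is a subset of an invalid set and is therefore itself invalid. I expect the only genuinely delicate point to be the bookkeeping in the two global cases — making sure ``subset'' refers to inclusion on the \emph{domain} of the partial assignment (values frozen from the starting path) and that upward closure is phrased with respect to domain extension rather than feature-subset; everything else reduces to feeding candidate sets into the polynomial-time certifiers of \Cref{lem:dt-sm-testsol} and \Cref{th:DT-MLCEX}.
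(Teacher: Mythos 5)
Your proposal is correct and follows essentially the same route as the paper's proof: \LCEX{} is dispatched by invoking \Cref{th:DT-MLCEX} (cardinality-minimality implying subset-minimality), and the three remaining variants are solved by greedy single-feature deletion starting from a trivially valid explanation ($\feat(\TTT)$ for \LAEX{}, a root-to-$c$-leaf path $\alpha_{\TTT}^l$ for \GAEX{}, and a root-to-$(1{-}c)$-leaf path for \GCEX{}), with each deletion certified by \Cref{lem:dt-sm-testsol}. Your explicit appeal to upward closure to justify that single-deletion tests certify subset-minimality is a correct (and slightly more carefully argued) version of the correctness reasoning the paper leaves implicit.
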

\iflong\begin{proof}\fi
\ifshort\begin{proof}[Proof Sketch]\fi  
  Note that the statement of the theorem for \DT{}-\SMLCEX{} follows
  immediately from\iflong~\Cref{th:DT-MLCEX}\fi\ifshort~\cite[Lemma 14]{BarceloM0S20}\fi. Therefore, it suffices to show
  the statement of the theorem for the remaining 3 problems.
  \iflong
  
  Let $(\TTT,e)$ be an instance of  \DT{}-\SMLAEX{}. We start by setting $A=\feat(\TTT)$. Using \Cref{lem:dt-sm-testsol}, we
  then test for any feature $f$ in $A$, whether $A\setminus \{f\}$
  is still a local abductive explanation for $e$ w.r.t. $\TTT$ in polynomial-time. If so,
  we repeat the process after setting $A$ to $A\setminus\{f\}$ and
  otherwise we do the same test for the next feature $f \in
  A$. Finally, if $A\setminus\{f\}$
  is not a local abductive explanation for every $f \in A$, then $A$ is an inclusion-wise
  minimal local abductive explanation and we can output $A$.
  \fi
  
  The polynomial-time algorithm for an instance $(\TTT,c)$ of
  \DT{}-\SMGAEX{} \iflong now\fi works as
  follows. Let $l$ be a $c$-leaf of $\TTT$; if no such $c$-leaf
  exists, then we can correctly output that there is no global
  abductive explanation for $c$ w.r.t. $\TTT$. Then, $\alpha_\TTT^l$
  is a global abductive explanation for $c$ w.r.t. $\TTT$. To obtain
  an inclusion-wise minimal solution, we do the following. Let $F=\feat(\alpha_\TTT^l)$ be
  the set of features on which $\alpha_\TTT^l$ is defined. We now test for
  every feature $f \in F$ whether the restriction
  $\alpha_\TTT^l[F\setminus \{f\}]$ of $\alpha_\TTT^l$ to $F\setminus
  \{f\}$ is a global abductive explanation for $c$ w.r.t. $\TTT$. This
  can clearly be achieved in polynomial-time with the help of
  \Cref{lem:dt-sm-testsol}. If this is true for any feature $f \in F$,
  then we repeat the process for $\alpha_\TTT^l[F\setminus \{f\}]$,
  otherwise we output $\alpha_\TTT^l$. \ifshort Very similar
    algorithms now also work for \DT{}-\SMGCEX{} and \DT{}-\SMLAEX{}.\fi \iflong A very similar algorithm now
  works for the \DT{}-\SMGCEX{} problem, i.e., instead of starting with a $c$-leaf
  $l$ we start with a $c'$-leaf, where $c \neq c'$.\fi
\end{proof}

The following theorem uses an exhaustive enumeration of all possible
explanations together with \Cref{lem:dt-sm-testsol} to check whether a
set of features or a partial example is an explanation.
\ifshort \begin{theorem}[$\star$]\fi\iflong\begin{theorem}\fi\label{th:DT-LGA-XP}
  Let $\PP \in \{\LAEX, \GAEX, \GCEX\}$. 
  \DT{}-\MPP{}$(\xpsize{})$ is in \XP{}.
\end{theorem}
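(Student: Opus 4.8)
The plan is to exploit the fact that the cardinality-minimal variant $\DT$-$\MPP$ is phrased as a decision problem---given $(\TTT,e,k)$ (or $(\TTT,c,k)$ for the global variants), decide whether an explanation of size at most $k$ exists---so that it suffices to enumerate every candidate explanation of size at most $k$ and test each one in polynomial time. The verification step is exactly what \Cref{lem:dt-sm-testsol} provides: part~(1) decides in polynomial time whether a given feature set is a local abductive explanation for $e$ w.r.t.\ $\TTT$, and part~(2) does the same for a given partial example with respect to the global abductive and global contrastive notions.

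First I would handle $\PP=\LAEX$. Here a candidate explanation is a feature subset $A\subseteq\feat(\TTT)$ with $|A|\le k$, and there are at most $\sum_{i=0}^k\binom{|\feat(\TTT)|}{i}=\bigoh(|\feat(\TTT)|^k)$ of them. I enumerate all of them, and for each apply \Cref{lem:dt-sm-testsol}(1); I answer \emph{yes} exactly when some enumerated set passes the test and \emph{no} otherwise. Since each test runs in polynomial time and $|\feat(\TTT)|\le n$, the total running time is $n^{\bigoh(k)}$, placing the problem in $\XP{}$.

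For $\PP\in\{\GAEX,\GCEX\}$ the only difference is that a candidate is a partial example $\tau:F\to\{0,1\}$ rather than a bare feature set, so I must additionally guess the $\{0,1\}$-values. Concretely I enumerate each feature set $F\subseteq\feat(\TTT)$ with $|F|\le k$ (again $\bigoh(|\feat(\TTT)|^k)$ choices) together with each of the at most $2^{|F|}\le 2^k$ assignments $\tau$ of its features; this yields $\bigoh(2^k|\feat(\TTT)|^k)$ candidates. Each candidate $\tau$ is checked with \Cref{lem:dt-sm-testsol}(2) for being a global abductive, respectively global contrastive, explanation for $c$ w.r.t.\ $\TTT$, and I output \emph{yes} iff some candidate succeeds. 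The extra $2^k$ factor is absorbed into the function of the parameter, so the running time remains $n^{\bigoh(k)}$ and the problem again lies in $\XP{}$.

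There is no serious obstacle here: the argument is a brute-force search whose correctness is immediate from the definitions (any explanation of size at most $k$ is among the enumerated candidates, and every enumerated candidate is tested exactly) and whose efficiency rests entirely on the polynomial-time tests of \Cref{lem:dt-sm-testsol}. The only points requiring a little care are matching each explanation type to the correct part of \Cref{lem:dt-sm-testsol}---feature sets for \LAEX{}, partial assignments for \GAEX{} and \GCEX{}---and remembering to enumerate the $2^{|F|}$ value assignments in the global cases, so that the guessed partial example is fully specified before it is handed to the verification routine.
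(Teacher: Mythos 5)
Your proposal is correct and matches the paper's own proof essentially step for step: brute-force enumeration of all feature subsets of size at most $k$ for \LAEX{} (verified via \Cref{lem:dt-sm-testsol}(1)), and for \GAEX{}/\GCEX{} the additional enumeration of the at most $2^k$ value assignments on each such subset (verified via \Cref{lem:dt-sm-testsol}(2)), giving an overall $2^k|\feat(\TTT)|^k|\TTT|^{\bigoh(1)}$ running time. Nothing is missing.
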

\iflong\begin{proof}
  We start by showing the statement of theorem for \DT{}-\MLAEX{}.
  Let $(\TTT,e,k)$ be
  an instance of \DT{}-\MLAEX{}. We first enumerate all subsets $A
  \subseteq \feat(\TTT)$ of size at most $k$ in time
  $\bigoh(|\feat(\TTT)|^k)$. For every such subset $A$, we then test
  whether $A$ is a local abductive explanation for $e$ w.r.t. $\TTT$
  in polynomial-time with the help of \Cref{lem:dt-sm-testsol}. If so,
  we output $A$ as the solution. Otherwise, i.e., if no such subset is
  a local abductive explanation for $e$ w.r.t. $\TTT$, we output
  correctly that $(\TTT,e,k)$ has no solution.

  Let $(\TTT,c,k)$ be
  an instance of \DT{}-\MGAEX{}. We first enumerate all subsets $A
  \subseteq \feat(\TTT)$ of size at most $k$ in time
  $\bigoh(|\feat(\TTT)|^k)$. For every such subset $A$, we then
  enumerate all of the at most $2^{|A|}\leq 2^k$ partial examples
  (assignments) $\tau : A \rightarrow\{0,1\}$ in time $\bigoh(2^k)$.
  For every such partial example $\tau$, we then use
  \Cref{lem:dt-sm-testsol} to test whether $\tau$ is a global
  abductive explanation for $c$ w.r.t. $\TTT$ in polynomial-time. If so,
  we output $e$ as the solution. Otherwise, i.e., if no such
  partial example is
  a global abductive explanation for $c$ w.r.t. $\TTT$, we output
  correctly that $(\TTT,c,k)$ has no solution. The total runtime of
  the algorithm is at most $2^k|\feat(\TTT)|^k|\TTT|^{\bigoh(1)}$.

  The algorithm for \DT{}-\MGCEX{} is now very similar to the above
  algorithm for \DT{}-\MGAEX{}.
\end{proof}\fi

The next theorem uses our result that the considered problems are in
polynomial-time for \DT{}s \ifshort(\Cref{th:poly-DT-SM}) \fi
together with an \XP{}-algorithm that
transforms any \RF{} into an equivalent \DT{}.
\ifshort \begin{theorem}[$\star$]\fi\iflong\begin{theorem}\fi\label{th:Rf-XP-e}
  Let $\PP \in \{\LAEX, \LCEX, \GAEX, \GCEX\}$. 
  \RF-\SPP{}$(\enssize{})$ and \RF-\MLCEX{}$(\enssize{})$ are in \XP{}.
\end{theorem}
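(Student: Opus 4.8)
The plan is to reduce the \RF{} explanation problems to the corresponding \DT{} problems by transforming the given random forest into a single, equivalent decision tree whose size is polynomial for every fixed value of \enssize{}, and then to invoke the polynomial-time algorithms of \Cref{th:poly-DT-SM} (and, for the cardinality-minimal contrastive case, the polynomial-time algorithm for \DT{}-\MLCEX{} from \Cref{th:DT-MLCEX}). The reduction is sound because all four explanation types are defined purely in terms of the classification function $M(\cdot)$: if a \DT{} $\TTT$ computes the same Boolean function as $\mathcal{F}$, then a feature set (for \LAEX{}/\LCEX{}) or a partial assignment (for \GAEX{}/\GCEX{}) is an explanation of a given type for $\mathcal{F}$ if and only if it is one for $\TTT$. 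In particular the subset-minimal and the cardinality-minimal explanations coincide, so it suffices to build $\TTT$ and run the \DT{} algorithm on it.

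First I would describe the transformation. Let $\mathcal{F}=\{\TTT_1,\dotsc,\TTT_m\}$ with $m=\enssize{}$, where $\TTT_i=(T_i,\lambda_i)$. I build $\TTT$ by a top-down \emph{simultaneous descent}: each node of $\TTT$ is identified with a tuple $(t_1,\dotsc,t_m)$ of current nodes, one per tree, together with the partial assignment $\alpha$ given by the root-to-node path. The root is the tuple of the $m$ tree-roots with $\alpha=\emptyset$. At each node I first perform a cleanup step: while some $t_i$ is an inner node testing a feature $f$ already fixed by $\alpha$ to value $b$, I advance $t_i$ to its $b$-child; after cleanup every inner-node entry tests an unassigned feature. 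If all $t_i$ are leaves, I make the current node a leaf of $\TTT$ labeled by the majority of $\lambda_1(t_1),\dotsc,\lambda_m(t_m)$. Otherwise I pick an unassigned feature $f$ tested by some current inner node and branch $\TTT$ on $f$, extending $\alpha$ in the two children and advancing every tree whose current node tests $f$ to the corresponding child while leaving the others unchanged. By construction each branch assigns a fresh feature, so $\TTT$ has no contradictory path, and for every example $e$ the leaf of $\TTT$ reached by $e$ records the leaf of each $\TTT_i$ reached by $e$; hence $\TTT(e)$ equals the majority vote $\mathcal{F}(e)$.

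The main obstacle, and the step requiring the most care, is bounding $\som{\TTT}$ by $n^{\bigoh(\enssize{})}$ (where $n=\som{\mathcal{F}}$) so that the construction runs in \XP{} time. The key claim is that the map sending each leaf of $\TTT$ to the tuple $(l_1,\dotsc,l_m)\in L(\TTT_1)\times\dotsb\times L(\TTT_m)$ of tree-leaves it represents is injective. Indeed, two distinct leaves of $\TTT$ correspond to distinct maximal assignments $\alpha\neq\alpha'$, each of which routes every tree to a leaf; suppose they map to the same tuple. Since $\alpha\neq\alpha'$, there is a feature $f$ on which they differ, and $f$ was branched on along at least one of the paths, which happens only when some tree $\TTT_i$ tests $f$ at its current node. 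That tree then descends via $f$, so $f$ lies on the root-to-$l_i$ path in $\TTT_i$; but routing $\TTT_i$ to the shared leaf $l_i$ forces both $\alpha$ and $\alpha'$ to assign $f$ the unique value on that path, contradicting that they differ on $f$. Hence $\TTT$ has at most $\prod_{i=1}^m|L(\TTT_i)|=\prod_{i=1}^m\som{\TTT_i}\le n^{m}$ leaves, so $\som{\TTT}=n^{\bigoh(\enssize{})}$, and the whole tree is generated in time $n^{\bigoh(\enssize{})}$.

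Putting the pieces together, for each $\PP\in\{\LAEX,\LCEX,\GAEX,\GCEX\}$ I compute $\TTT$ in \XP{} time and then solve \DT{}-\SPP{} on $\TTT$ via \Cref{th:poly-DT-SM}, which runs in time polynomial in $\som{\TTT}=n^{\bigoh(\enssize{})}$; by the equivalence noted above its output is a valid subset-minimal explanation for $\mathcal{F}$. For \RF{}-\MLCEX{} I proceed identically, instead invoking the polynomial-time algorithm of \Cref{th:DT-MLCEX} for \DT{}-\MLCEX{}. In every case the overall running time is $n^{\bigoh(\enssize{})}$, establishing membership in \XP{}.
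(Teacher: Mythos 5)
Your proposal is correct and follows essentially the same route as the paper: both transform the forest into a single equivalent \DT{} of size at most the product of the leaf counts of the ensemble elements (hence $n^{\bigoh(\enssize{})}$) and then invoke the polynomial-time \DT{} algorithms of \Cref{th:poly-DT-SM} and \Cref{th:DT-MLCEX}. The only cosmetic difference is the construction: the paper stacks the trees sequentially (replacing each leaf of the partial product by a copy of the next tree, tolerating repeated features on a path), whereas you perform a simultaneous descent that keeps paths repetition-free, with an injectivity argument giving the same size bound.
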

\iflong\begin{proof}
  Let $\FFF=\{\TTT_1,\dotsc,\TTT_\ell\}$ be the random forest given as
  an input to any of the five problems stated above. We start by constructing a
  \DT{} $\TTT$ that is not too large (of size at most $m^{|\FFF|}$,
  where $m=(\max \SB |L(\TTT_i)|\SM 1 \leq i \leq \ell\SE)$) and that is equivalent to $\mathcal{F}$
  in the sense that $\mathcal{F}(e)=\TTT(e)$ for every
  example in time at most $\bigoh(m^{|\FFF|})$. Since all five
  problems can be solved in polynomial-time on $\TTT$ (because of
  \Cref{th:poly-DT-SM}) this completes the
  proof of the theorem.

  We construct
  $\TTT$ iteratively as follows. First we set
  $\TTT_1'=\TTT_1$. Moreover, for every $i>1$,
  $\TTT_i'$ is obtained from $\TTT_{i-1}'$ and
  $\TTT_i$ by making a copy $\TTT_i^l$ of $\TTT_i$ for every leaf $l$
  of $\TTT_{i-1}'$ and by identifying the root of
  $\TTT_i^l$ with the leaf $l$ of $\TTT_{i-1}'$. Then,
  $\TTT$ is obtained from $\TTT_{\ell}'$ changing the
  label of every leaf $l$ be a leaf of $\TTT_{\ell}'$ as
  follows. Let $P$ be the path
  from the root of $\TTT_{\ell}'$ to $l$ in
  $\TTT_{\ell}'$. By construction, $P$ goes through one copy of
  $\TTT_i$ for every $i$ with $1 \leq i \leq \ell$ and
  therefore also goes through exactly one leaf of every
  $\TTT_i$. We now label $l$ according to the majority of the
  labels of these leaves. This completes the construction of
  $\TTT$ 
  and it is easy to see that
  $\mathcal{F}(e)=\TTT(e)$ for every example $e$. Moreover, the
  size of $\TTT$ is at most
  $\Pi_{i=1}^\ell|L(\TTT_i)|\leq m^\ell$, where $m=(\max \SB |L(\TTT_i)|\SM
  1 \leq i \leq \ell\SE)$ and $\TTT$ can be constructed in time $\bigoh(m^\ell)$.
\end{proof}\fi

\subsection{DSs, DLs and their Ensembles}

This subsection is devoted to our algorithmic results for \DS{},
\DL{}s and their ensembles. Our first algorithmic result is again
based on our meta-theorem (\Cref{the:solve-circ}) and a suitable
translation from \DSE{} and \DLE{} to a Boolean circuit.

\iflong
\ifshort \begin{lemma}[$\star$]\fi\iflong\begin{lemma}\fi\label{lem:dsl-trans-circ}
  There is a polynomial-time algorithm that given a \DS{}/\DL{} $L$ and a
  class $c$ produces a circuit $\CIRC(L,c)$ such that:
  \begin{itemize}
  \item for every example $e$, it holds that $L(e)=c$ if and only if $e$ satisfies $\CIRC(L,c)$
  \item $\rw(\CIRC(L,c)) \leq 3\cdot 2^{3|L|}$
  \end{itemize}
\end{lemma}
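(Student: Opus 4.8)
The plan is to mirror the \DT{} construction of \Cref{lem:dt-trans-circ}: make the input gates of the circuit be exactly the features, evaluate each term with an AND-gate, and then combine the term-gates according to the semantics of the model. First I would, for every feature $f \in \feat(L)$, introduce one input gate $g_f$ and, whenever $f$ occurs in a negative literal, one NOT-gate $\overline{g_f}$ whose single incoming arc comes from $g_f$. For each term $t_i$ I would add an AND-gate $A_i$ receiving an incoming arc from $g_f$ for every literal $(f=1)\in t_i$ and from $\overline{g_f}$ for every literal $(f=0)\in t_i$, so that $A_i$ evaluates to $1$ under an example $e$ exactly when $t_i$ applies to $e$. (Empty terms, such as the mandatory last term of a \DL{}, always apply and are treated as constant-true, which only simplifies the construction.)

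For a \DS{} $S=(T,b)$ almost nothing else is needed: an OR-gate $O$ collecting all $A_i$ computes ``some term applies'', and $\CIRC(S,c)$ is $O$ itself when $c=1-b$ and a NOT-gate placed on top of $O$ when $c=b$. For a \DL{} $L=(r_1,\dots,r_\ell)$ the first-applicable-rule semantics must be encoded explicitly. I would add a NOT-gate $\overline{A_i}$ for each term and, for every rule $i$ with $c_i=c$, an AND-gate $B_i$ fed by $A_i$ and by $\overline{A_j}$ for all $j<i$; thus $B_i$ evaluates to $1$ exactly when rule $i$ applies and no earlier rule does. The output gate is $O=\bigvee_{i:\,c_i=c} B_i$. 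Correctness, i.e. that $e$ satisfies $\CIRC(L,c)$ iff $L(e)=c$, follows since the empty last term guarantees that some rule always applies: if $i^*$ denotes the first applicable rule, then $O$ evaluates to $1$ precisely when $c_{i^*}=c$, which is precisely when $L(e)=c$.

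For the rankwidth bound I would exhibit a small deletion set $X$ whose removal turns the underlying undirected graph of $\CIRC(L,c)$ into a forest, and then invoke \Cref{lem:ranktree}. I take $X$ to consist of all term-AND-gates $A_i$, all their negations $\overline{A_i}$, and all conjunction-gates $B_i$; each of these three families has at most $\ell\le\som{L}$ members, so $|X|\le 3\som{L}$. After deleting $X$, the only surviving arcs are those between a feature gate $g_f$ and its negation gate $\overline{g_f}$, together with the arcs incident to $O$ and the output gate, since everything else was incident to a deleted gate; hence the residual graph is a forest of treewidth at most $1$. By the first part of \Cref{lem:ranktree}, $\CIRC(L,c)$ then has treewidth at most $|X|+1\le 3\som{L}+1$, and by its second part $\rw(\CIRC(L,c))\le 3\cdot 2^{(3\som{L}+1)-1}=3\cdot 2^{3\som{L}}$, as required; the \DS{} case is subsumed with room to spare by deleting only the $A_i$.

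The main obstacle I anticipate is twofold. First, making the \DL{} encoding of ``first matching rule wins'' syntactically correct — in particular handling empty terms and ensuring that at most one $B_i$ can ever fire. Second, verifying that the chosen deletion set really leaves an \emph{acyclic} residual graph: the cycles all stem from a single feature being shared across several terms (a feature $f$ occurring in $t_i$ and $t_j$ creates the undirected cycle through $g_f$, $A_i$, the output region, and $A_j$), and one must confirm that deleting \emph{all} of $A_i$, $\overline{A_i}$, and $B_i$ severs every such cycle rather than merely some of them.
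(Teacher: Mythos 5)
Your proof is correct and follows essentially the same route as the paper's: terms become AND-gates over feature/negated-feature input gates, the first-applicable-rule semantics is encoded by conjoining each class-$c$ rule's term-gate with negations of the earlier term-gates, and the rankwidth bound follows by deleting the at most $3\som{L}$ gates other than feature and output gates and applying \Cref{lem:ranktree}. The only immaterial differences are that the paper reduces \DS{} to \DL{} instead of treating \DS{} directly, and it groups maximal consecutive same-class blocks of rules and negates only opposite-class blocks, which saves some gates but yields the same bound.
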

\iflong\begin{proof}
  Since every \DS{} can be easily transformed into a \DL{} with the
  same number of terms, it suffices to show the lemma for \DL{}s. Let
  $L$ be a \DL{} with rules
  $(r_1=(t_1,c_1),\dotsc,r_\ell=(t_\ell,c_\ell))$.
  We construct the circuit $D=\CIRC(L,c)$ as follows.
  $D$ contains one input gate $g_f$
  and one NOT-gate $\overline{g_f}$, whose only incoming arc is from $g_f$,
  for every feature in $\feat(L)$. Furthermore, for every rule
  $r_i=(t_i,c_i)$, $D$ contains an AND-gate $g_{r_i}$, whose
  in-neighbors are the literals in $t_i$, i.e., if $t_i$ contains a
  literal $f=0$, then $g_{r_i}$ has $\overline{g_f}$ as an in-neighbor
  and if $t_i$ contains a literal $f=1$, then $g_{r_i}$ has $g_f$ as
  an in-neighbor. We now split the sequence
  $\rho=(r_1=(t_1,c_1),\dotsc,r_\ell=(t_\ell,c_\ell))$ into
  (inclusion-wise) maximal consecutive subsequences $\rho_i$ of rules
  that have the same class. Let $(\rho_1,\dotsc,\rho_r)$ be the
  sequence of subsequences obtained in this manner, i.e.,
  $\rho_1 \concat \rho_2 \concat \dotsb \concat \rho_r=\rho$, every
  subsequence $\rho_i$ is non-empty and contains only rules from one
  class, and the class of any rule in $\rho_i$ is different from the
  class of any rule in $\rho_{i+1}$ for every $i$ with $1 \leq i <
  r$. Now, for every subsequence $\rho_i$, we add an OR-gate
  $g_{\rho_i}$ to $D$, whose in-neighbors are
  the gates $g_{r}$ for every rule $r$ in $\rho_i$. Let $C$ be the set
  of all subsequences $\rho_i$ that only contains rules with class $c$
  and let $\overline{C}$ be the set of all other subsequences, i.e.,
  those that contain only rules whose class is not equal to $c$.
  For every subsequence $\rho$ in $\overline{C}$, we add a NOT-gate
  $\overline{g_\rho}$ to $D$, whose in-neighbor is the gate $g_\rho$.
  Moreover, for every subsequence $\rho_i$ in $C$, we add an AND-gate
  $g_{\rho_i}^A$ to $D$ whose in-neighbors are $g_{\rho_i}$ as well as
  $\overline{g_{\rho_j}}$ for every $\rho_j \in \overline{C}$ with
  $j<i$. Finally, we add an OR-gate $o$ to $D$ that also serves as the
  output gate of $D$ and whose in-neighbors are all the gates
  $g_{\rho}^A$ for every $\rho \in C$. This completes the construction
  of $D$ and it is easy to see that $L(e)=c$ if and only if $e$
  satisfies $D$ for every example $e$, which shows that $D$ satisfies (1).
  Towards showing (2), let $G$ be the set of all gates in $D$ apart
  from the gates $o$ and $g_f$ and $\overline{g_f}$ for every feature
  $f$. Then, $|G|\leq 3|L|$ and $D\setminus G$ is a forest, which
  together with \Cref{lem:ranktree}
  implies (2).
\end{proof}\fi

\ifshort \begin{lemma}[$\star$]\fi\iflong\begin{lemma}\fi\label{lem:dsle-trans-circ}
  Let $\MM \in \{\DSE,\DLE\}$. There is a polynomial-time algorithm that given an $\MM$ $\mathcal{L}$ and a
  class $c$ produces a circuit $\CIRC(\mathcal{L},c)$ such that:
  \begin{enumerate}[(1)]
  \item for every example $e$, it holds that $\mathcal{L}(e)=c$ if and only if $e$ satisfies $\CIRC(\mathcal{L},c)$
  \item $\rw(\CIRC(\mathcal{L},c)) \leq 3 \cdot 2^{3\sum_{L \in \mathcal{L}}|L|}$
  \end{enumerate}
\end{lemma}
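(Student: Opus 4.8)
The plan is to mirror the ensemble construction of \Cref{lem:rf-trans-circ}, replacing the per-tree circuits of \Cref{lem:dt-trans-circ} by the per-model circuits of \Cref{lem:dsl-trans-circ}. For each element $L \in \mathcal{L}$ I would first build $\CIRC(L,c)$ via \Cref{lem:dsl-trans-circ}, which applies verbatim to a \DS{} as well as to a \DL{}. I would then let $\CIRC(\mathcal{L},c)$ be the (not necessarily disjoint) union of these circuits, identifying the input gates that belong to the same feature, and add a single new MAJ-gate with threshold $\lfloor |\mathcal{L}|/2 \rfloor + 1$ that serves as the output gate and has exactly one incoming arc from the output gate of each $\CIRC(L,c)$. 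In particular $\CIRC(\mathcal{L},c)$ is a $1$-\BC{}, and the construction is plainly polynomial-time.

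For property (1) I would argue exactly as for general ensembles: by the correctness part of \Cref{lem:dsl-trans-circ}, an example $e$ satisfies $\CIRC(L,c)$ if and only if $L(e)=c$, so the number of incoming arcs of the new MAJ-gate that evaluate to $1$ under $e$ equals the number of models $L \in \mathcal{L}$ with $L(e)=c$. Hence the MAJ-gate, and therefore $\CIRC(\mathcal{L},c)$, evaluates to $1$ under $e$ precisely when at least $\lfloor |\mathcal{L}|/2 \rfloor + 1$ of the models classify $e$ as $c$, which by definition of the $\MM$-ensemble is exactly the condition $\mathcal{L}(e)=c$.

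For property (2) I would reuse the forest-after-deletion argument from the proof of \Cref{lem:dsl-trans-circ}. For every $L$, let $G_L$ be the set of at most $3|L|$ gates identified there --- all gates of $\CIRC(L,c)$ other than its output gate and the input and NOT-gates of the features --- whose removal leaves a forest. Put $G = \bigcup_{L \in \mathcal{L}} G_L$, so that $|G| \le 3\sum_{L \in \mathcal{L}} |L|$. Deleting $G$ from $\CIRC(\mathcal{L},c)$ leaves only the (shared) input gates, the NOT-gates attached to them, the per-model output gates, and the single new MAJ-gate; since the MAJ-gate merely sits as a common root above the now-isolated output gates, this residual graph is again a forest. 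The first part of \Cref{lem:ranktree} then bounds the treewidth of $\CIRC(\mathcal{L},c)$ by $|G| + 1 \le 3\sum_{L \in \mathcal{L}} |L| + 1$, and the second part yields $\rw(\CIRC(\mathcal{L},c)) \le 3 \cdot 2^{(3\sum_{L \in \mathcal{L}} |L| + 1) - 1} = 3 \cdot 2^{3\sum_{L \in \mathcal{L}} |L|}$, as claimed.

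The step I expect to require the most care --- the main obstacle --- is confirming that the residual graph really is a forest once the individual circuits are glued and their feature input gates are shared. Inside a single $\CIRC(L,c)$ this is immediate, but after the union a shared input gate $g_f$ may be adjacent to several feature NOT-gates and to gates of different models. I would check that no cycle is created: every gate deleted in $G_L$ is internal to its own $\CIRC(L,c)$, the shared input gates remain sources (with no incoming arcs), and the only genuinely new edges are the arcs from the per-model output gates into the one MAJ-gate, which adds a single common sink. Once this is verified, the exponent bookkeeping (the $+1$ from the treewidth-deletion lemma cancelling the $-1$ in the rankwidth estimate) is routine.
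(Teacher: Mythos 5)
Your proposal is correct and matches the paper's own proof essentially step for step: the paper also forms $\CIRC(\mathcal{L},c)$ as the (not necessarily disjoint) union of the circuits $\CIRC(L,c)$ from \Cref{lem:dsl-trans-circ} joined by a single MAJ-gate with threshold $\lfloor|\mathcal{L}|/2\rfloor+1$, and bounds the rankwidth by removing the same $3|L|$ gates per element and applying \Cref{lem:ranktree} to the resulting forest. Your extra care in checking that the glued residual graph stays acyclic and that the $+1/-1$ in the treewidth-to-rankwidth step cancels is a welcome tightening of details the paper leaves implicit, but it is the same argument.
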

\iflong\begin{proof}
  We obtain the circuit $\CIRC(\mathcal{L},c)$ from the (not
  necessarily disjoint) union of the circuits $\CIRC(L,c)$, which are
  provided in \Cref{lem:dsl-trans-circ},
  for every $L \in \mathcal{L}$ after adding a new MAJ-gate
  with threshold $\lfloor|\mathcal{L}|/2\rfloor+1$,
  which also serves as the output gate of $\CIRC(\mathcal{L},c)$, that
  has one incoming arc from the output gate of $\CIRC(L,c)$
  for every $L \in \mathcal{L}$. Clearly,
  $\CIRC(\mathcal{L},c)$ satisfies (1). Moreover, to see that it also satisfies
  (2), recall that every circuit $\CIRC(L,c)$ has only
  $3|L|$ gates apart from the input gates, the NOT-gates
  connected to the input gates, and the output gate. Therefore, after
  removing $3|L|$ gates from every circuit
  $\CIRC(L,c)$ inside $\CIRC(\mathcal{L},c)$, the remaining
  circuit is a tree, which together with \Cref{lem:ranktree} implies (2).
\end{proof}\fi
The following corollary now follows immediately from
\Cref{lem:dsle-trans-circ} and \Cref{the:solve-circ}.
\fi
\begin{corollary}\label{cor:ds-ensa}
  Let $\MM \in \{\DSE,\DLE\}$ and let  $\PP \in \{\LAEX, \LCEX, \GAEX, \GCEX\}$. 
  \MM-\MPP{}$(\enssize+\termselem)$ is \FPT{}.
\end{corollary}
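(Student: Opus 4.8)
The plan is to reduce each of the four ensemble explanation problems to the corresponding cardinality-minimal explanation problem on a Boolean circuit carrying a single majority gate, and then invoke the meta-theorem \Cref{the:solve-circ}. Given an instance of \MM-\MPP{} with ensemble $\mathcal{L}\in\MM$, I first fix the relevant class: for the two local problems take $c=\mathcal{L}(e)$, where $e$ is the given example, and for the two global problems take $c$ to be the given class. Applying \Cref{lem:dsle-trans-circ} to $(\mathcal{L},c)$ produces, in polynomial time, a circuit $D=\CIRC(\mathcal{L},c)$ that has exactly one MAJ-gate (hence is a $1$-\BC{}) and that satisfies $D(e')=1 \iff \mathcal{L}(e')=c$ for every example $e'$.

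The next step is to check that this equivalence transports every explanation notion from $\mathcal{L}$ to $D$. In the local case, since $c=\mathcal{L}(e)$ we have $D(e)=1$ and $D(e')=D(e) \iff \mathcal{L}(e')=\mathcal{L}(e)$, so a feature set is a local abductive (resp.\ local contrastive) explanation for $e$ w.r.t.\ $\mathcal{L}$ exactly when it is the same kind of explanation for $e$ w.r.t.\ $D$. In the global case, a partial assignment forces $\mathcal{L}(e')=c$ (resp.\ $\mathcal{L}(e')\neq c$) on all consistent $e'$ precisely when it forces $D(e')=1$ (resp.\ $D(e')=0$), i.e.\ it is a global abductive (resp.\ global contrastive) explanation for class $1$ w.r.t.\ $D$. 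Because each correspondence preserves the underlying set of features, a cardinality-minimal explanation for $\mathcal{L}$ is obtained by solving the matching $1$-\BC-\MPP{} instance on $D$.

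The crux is to bound $\rw(D)$ by a function of $\enssize+\termselem$ alone; note that \termsize{} is deliberately \emph{not} a parameter, so the bound must be independent of how large the individual terms are. This is exactly what the construction behind \Cref{lem:dsle-trans-circ} (via \Cref{lem:dsl-trans-circ}) delivers: the set $G$ of gates whose removal turns $D$ into a forest contains at most three gates per rule (the term's AND-gate plus the OR/NOT/AND gates used to aggregate maximal blocks of consecutive same-class rules) together with the single output MAJ-gate; a large term contributes only extra arcs \emph{into} its AND-gate, never new gates to delete. Since each element contributes at most $\termselem$ terms and there are at most $\enssize$ elements, $|G|=\bigoh(\enssize\cdot\termselem)$, independent of term sizes. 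By \Cref{lem:ranktree} the treewidth of $D$ is at most $|G|+1$, whence $\rw(D)\le 3\cdot 2^{\bigoh(\enssize\cdot\termselem)}$, a function of the parameter.

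With a $1$-\BC{} of parameter-bounded rankwidth in hand, \Cref{the:solve-circ} solves the corresponding circuit problem in \FPT{} time, and the correspondences above then return a cardinality-minimal explanation for $\mathcal{L}$; this establishes \FPT{}-membership of \MM-\MPP{}$(\enssize+\termselem)$. I expect the only genuine obstacle to be the rankwidth bookkeeping of the previous paragraph: the estimate in \Cref{lem:dsle-trans-circ} is phrased in terms of the total list size $\sum_{L}|L|$, and one must look inside that construction to see that the deletion set $G$ is controlled by the \emph{number} of terms rather than their sizes. Everything else — the class-fixing, the four explanation correspondences, and the appeal to the meta-theorem — is routine.
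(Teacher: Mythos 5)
Your proposal is correct and takes essentially the same route as the paper, which obtains the corollary immediately from \Cref{lem:dsle-trans-circ} (the translation of a \DSE{}/\DLE{} into a $1$-\BC{}) combined with the meta-theorem \Cref{the:solve-circ}. The extra bookkeeping you supply---verifying that the four explanation notions transfer to the circuit, and that the deletion set in the construction is bounded by the \emph{number} of rules per element (so the exponent $3\sum_{L\in\mathcal{L}}|L|$ in \Cref{lem:dsle-trans-circ} is controlled by $\enssize\cdot\termselem$ rather than by $\som{L}$)---is precisely what the paper leaves implicit.
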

Unlike, \DT{}s, where \DT{}-\MLCEX{} is solvable in polynomial-time,
this is not the case for \DS{}-\MLCEX{}. Nevertheless, we are able to
provide the following result, which shows that \DS{}-\MLCEX{} (and
even \DL{}-\MLCEX{}) is fixed-parameter tractable parameterized by
\termsize and \xpsize. The algorithm is based on a novel
characterization of local contrastive explanations for \DL{}s.

\ifshort \begin{lemma}[$\star$]\fi\iflong\begin{lemma}\fi\label{lem:dl-mlcex-branch}
    Let $\MM \in \{\DS,\DL\}$. \MM{}-\MLCEX{} for 
    $M \in \MM{}$ and integer $k$
    can be solved in time $\bigoh(a^k\som{M}^2)$, where $a$ is equal to \termsize.
\end{lemma}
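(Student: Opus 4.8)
The plan is to prove the statement for \DL{}s and obtain the \DS{} case by reduction: every \DS{} $(T,b)$ is equivalent to the \DL{} that lists the rules $(t,1-b)$ for $t\in T$ in arbitrary order, followed by the empty default rule $(\emptyset,b)$, and this transformation changes neither \termsize{} nor (up to an additive constant) $\som{M}$, so the \DS{} bound follows from the \DL{} bound. So let $L=(r_1,\dots,r_\ell)$ with $r_j=(t_j,c_j)$ be a \DL{}, let $e$ be the example, and write $b=L(e)$. By \Cref{th:MLCEX-XP} a cardinality-minimal contrastive explanation flips exactly its own features, so it suffices to find a minimum-size set $A$ whose flip $e_A$ satisfies $L(e_A)=1-b$. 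The characterization I would use is: $L(e_A)=1-b$ iff some $(1-b)$-rule $r_j$ \emph{applies} to $e_A$ while every earlier rule $r_{j'}$ with $j'<j$ does \emph{not} apply to $e_A$, i.e. $r_j$ is the first applying rule of $e_A$.

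First I would decompose the search by the identity of this first applying $(1-b)$-rule $r_j$. For a fixed target $r_j$, let $V_j=\{\,f : (f=z)\in t_j,\ e(f)\neq z\,\}$ be the features whose flip is \emph{forced} to make $t_j$ agree with $e_A$; note $|V_j|\le\termsize$ and that no feature of $\feat(t_j)$ may be flipped beyond $V_j$ (otherwise $r_j$ stops applying). After flipping $V_j$ it remains to flip a minimum number of \emph{additional} features, all outside $\feat(t_j)$, so that none of $r_1,\dots,r_{j-1}$ applies. The cost for target $r_j$ is $|V_j|$ plus the cost of this \emph{breaking} subproblem, and \MM{}-\MLCEX{} is a yes-instance iff the minimum of these costs over all $(1-b)$-rules $r_j$ is at most $k$. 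Iterating over the at most $\som{M}$ candidate rules and taking the minimum is correct because the first applying rule of an optimal $e_A$ is one such $r_j$ (so the minimum is not too large), while conversely every target yields a genuine explanation of exactly the computed size (so the minimum is not too small).

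The breaking subproblem is solved by a bounded search tree. Given the current partially flipped example, I pick any $r_{j'}$ with $j'<j$ that currently applies; since making $r_{j'}$ stop applying requires violating one of its literals, and every literal of $r_{j'}$ is presently satisfied, any solution must flip one of the at most \termsize{} features of $t_{j'}$ lying outside $\feat(t_j)$. I branch over these $\le\termsize$ choices, flip the chosen feature, decrement the budget, and recurse; a branch fails if an applying rule has no flippable literal, or if the budget reaches $0$ while some rule before $r_j$ still applies. Completeness follows because an optimal solution must flip a satisfied literal-feature of every rule it breaks, so one branch agrees with it. The depth is bounded by the remaining budget $\le k$ and the branching factor by $a=\termsize{}$, giving at most $a^{k}$ leaves per target; recomputing which rules apply costs $\bigoh(\som{M})$ per node. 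Combined with the $\bigoh(\som{M})$ targets, this yields the claimed $\bigoh(a^{k}\som{M}^{2})$ bound.

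The main obstacle, and the reason the result is nontrivial compared with \DT{}s, is precisely the rule ordering: unlike a \DS{}, flipping features to activate a desired $(1-b)$-rule can activate earlier rules and thereby \emph{create} new blockers, so one cannot simply flip the cheapest violated term. The crux of the argument is therefore the correctness of the decomposition together with the completeness of the breaking search—establishing that it is always sufficient to (i) commit to a single target rule, (ii) flip exactly its violated literals, and (iii) break earlier rules only by flipping features outside the target's term—and that this never misses a smaller explanation.
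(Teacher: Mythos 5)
Your proposal is correct and takes essentially the same route as the paper's proof: the same \DS{}-to-\DL{} reduction, the same decomposition over candidate target rules of the opposite class (with the forced flips being exactly the violated literals of the target term, and features of the target term excluded from further flipping), the same bounded-depth branching over satisfied literals of earlier applying rules, and the same $\bigoh(a^k\som{M}^2)$ accounting with branching factor $a=\termsize$ and depth $k$.
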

\iflong\begin{proof}\fi
\ifshort\begin{proof}[Proof Sketch]\fi  
  Since any \DS{} can be easily translated into a \DL{} without
  increasing the size of any term, it suffices to show the lemma for \DL{}s.  
  Let $(L,e,k)$ be an instance of \DL{}-\MLCEX{}, where
  $L=(r_1=(t_1,c_1),\dotsc,r_\ell=(t_\ell,c_\ell))$ is a \DL{},
   and let $r_i$ be the rule that
  classifies $e$, i.e., the first rule that applies to $e$.

  Let $R$ be the set of all rules $r_j$ of $L$ with $c_j\neq c_i$. For
  a rule $r \in R$, let $A \subseteq \feat(L)$ such that the example
  $e_A$, i.e., the example obtained from $e$ after setting
  $e_A(f)=1-e(f)$ for every $f \in A$ and $e_A(f)=e(f)$ otherwise, is
  classified by rule $r$. We claim that:
  \begin{enumerate}[(1)]
  \item For every $r \in R$ and every set $A \subseteq \feat(\TTT)$
    such that $e_A$ is classified by $r$, it holds that $A$ is a local
    contrastive explanation for $e$ w.r.t. $L$.
  \item Every local contrastive explanation $A$ for $e$
    w.r.t. $L$ contains a subset $A' \subseteq A$ for which 
    there is a rule $r \in R$ such that $e_{A'}$ is
    classified by $r$.
  \end{enumerate}
  \ifshort
    Because of (1) and (2), it holds that a set $A \subseteq
    \feat(\TTT)$ is a local contrastive explanation if and only if
    there is a rule $r \in R$ such that $e_A$ is classified by
    $r$. Therefore, it is sufficient to be able to compute
    a minimum set of features $A$ such that $e_A$ is classified by $r$
    for every rule $r \in R$, which can be 
    achieved via a bounded-depth branching algorithm.
  \fi
  \iflong

  Towards showing (1), let $r \in R$ and let $A \subseteq \feat(\TTT)$
  such that $e_A$ is classified by $r$. Then, $e_A$ differs from $e$
  only on the features in $A$ and moreover $M(e)\neq M(e_A)$ because
  $r \in R$. Therefore, $A$ is a local contrastive explanation for $e$
  w.r.t. $\TTT$.

  Towards showing (2), let $A \subseteq \feat(A)$ be a local
  contrastive explanation for $e$ w.r.t. $\TTT$. Then, there is an
  example $e'$ that differs from $e$ only on some set $A'\subseteq A$
  of features such
  that $e'$ is classified by a rule $r \in R$. Then, $e_{A'}$ is
  classified by $r$, showing (2).
    
  \begin{algorithm}[tb]
    \caption{Algorithm used in \Cref{lem:dl-mlcex-branch} to compute a
      local contrastive explanation for example $e$ w.r.t. a \DL{} $L$
      of size at most $k$ if such an explanation exists. The algorithm
      uses the function \textbf{findLCEXForRule}($L$, $e$, $r_j$) as a
      subroutine, which is illustrated in \Cref{alg:findLCEXForRule}}\label{alg:findLCEX}
    \small
    \begin{algorithmic}[1]
      \INPUT \DL{} $L=(r_1=(t_1,c_1),\dotsc,r_\ell=(t_\ell,c_\ell))$,
      example $e$ and integer $k$ (global variable).
      \OUTPUT return a cardinality-wise minimum local contrastive
      explanation $A \subseteq \feat(L)$ with $|A|\leq k$ for $e$
      w.r.t. $L$ or \NULL{} if such an explanation does not exists.
      \Function{\textbf{findLCEX}}{$L$, $e$}
      \State $r_i \gets$ rule of $L$ that classifies $e$
      \State $R \gets \SB r_j \in L \SM c_j\neq c_i\}$
      \State $A_b\gets \NULL$
      \For{$r_j \in R$}
      \State $A \gets$ \Call{findLCEXForRule}{$L$, $e$, $r_j$}
      \If{$A \neq \NULL$ and ($A_b=\NULL$ or $|A_b|>|A|$)}
      \State $A_b\gets A$
      \EndIf
      \EndFor
      \State \Return $A_b$
      \EndFunction
    \end{algorithmic}
  \end{algorithm}

  \begin{algorithm}[tb]
    \caption{Algorithm used as a subroutine in \Cref{alg:findLCEX}
      to compute a smallest set $A$
      of at most $k$ features such that $e_A$ is classified by the
      rule $r_j$ of a \DL{} $L$.}\label{alg:findLCEXForRule}
    \small
    \begin{algorithmic}[1]
      \INPUT \DL{} $L=(r_1=(t_1,c_1),\dotsc,r_\ell=(t_\ell,c_\ell))$,
      example $e$, rule $r_j$, and integer $k$ (global variable).
      \OUTPUT return a smallest set $A \subseteq \feat(L)$ with $|A|\leq k$
      such that $e_A$ is classified by $r_j$ if such a set exists and
      otherwise return \NULL{}.
      \Function{\textbf{findLCEXForRule}}{$L$, $e$, $r_j$}
      \State $A_0 \gets \SB f \SM (f=1-e(f)) \in t_j\SE$
      \State \Return \Call{findLCEXForRuleRec}{$L$, $e$, $r_j$, $A_0$}
      \EndFunction
      \Function{\textbf{findLCEXForRuleRec}}{$L$, $e$, $r_j$, $A'$}
      \If{$|A'|>k$}\label{algLLCEXRecOne}
      \State \Return \NULL
      \EndIf
      \State $r_\ell\gets$ any rule $r_\ell$ with $\ell<j$ that is satisfied by $e_{A'}$
      \If{$r_\ell=\NULL{}$}
      \State \Return $A'$\label{algLLCEXRecTwo}
      \EndIf
      \State $F_j\gets \SB f \SM (f=b) \in t_j \land b \in \{0,1\}\SE$
      \State $B \gets \SB f \SM (f=e_{A'}) \in t_\ell\SE\setminus (A'\cup F_j)$\label{algLLCEXRecThree}
      \If{$B=\emptyset$}
      \State \Return \NULL\label{algLLCEXRecFour}
      \EndIf
      \State $A_b\gets \NULL$
      \For{$f \in B$}
      \State $A\gets $\Call{findLCEXForRuleRec}{$L$, $e$, $r_j$, $A'\cup \{f\}$}\label{algLLCEXRecFive}
      \If{$A\neq \NULL$ and $|A|\leq k$}
      \If{($A_b==\NULL$ or $|A_b|>|A|$)}
      \State $A_b \gets A$
      \EndIf
      \EndIf
      \EndFor
      \Return $A_b$\label{algLLCEXRecSix}
      \EndFunction
    \end{algorithmic}
  \end{algorithm}

  Note that due to (1) and (2), we can compute a smallest local
  contrastive explanation for $e$ w.r.t. $L$ by the algorithm
  illustrated in \Cref{alg:findLCEX}. That is, the algorithm computes
  the smallest set $A$ of at most $k$ features such that $e_A$ is
  classified by $r_j$ for every rule $r_j \in R$. It then, returns the
  smallest such set over all rules $r_j$ in $R$ if such a set existed
  for at least one of the rules in $R$. The main ingredient of the
  algorithm is the function \Call{findLCEXForRule}{$L$, $e$, $r_j$},
  which is illustrated in \Cref{alg:findLCEXForRule}, and
  that for a rule $r_j \in R$ computes the smallest set $A$ of at most
  $k$ features such that $e_A$ is classified by $r_j$. The function
  \Cref{alg:findLCEXForRule} achieves this as follows.
  It first computes the set $A_0=\SB f \SM (f=1-e(f)) \in t_j\SE$ of
  features that need to be part of $A$ in order for $e_A$ to satisfy
  $r_j$, i.e., all features $f$ where $e(f)$ differs from the literal
  in $t_j$. It then, computes the set $A$ recursively via the
  (bounded-depth) branching algorithm given in
  \Call{findLCEXForRuleRec}{$L$, $e$, $r_j$, $A$}, which given a set
  $A'$ of features such that $e_{A'}$ satisfies $r_j$ computes a
  smallest extension (superset) $A$ of $A'$ of size at most $k$ such
  that $e_A$ is classified by $r_j$. To do so the function first
  checks in \Cref{algLLCEXRecOne} whether $|A'|>k$ and if so correctly returns
  \NULL{}. Otherwise, the function checks whether there is any rule $r_\ell$ with
  $\ell<j$ that is satisfied by $e_{A'}$. If that is not the case, it
  correctly returns $A'$ (in \Cref{algLLCEXRecTwo}). Otherwise, the
  function computes the set $B$ of features in \Cref{algLLCEXRecThree} that occur in $t_\ell$ --
  and therefore can be used to falsify $r_\ell$ -- but do not occur in
  $A'$ or in $t_j$. Note that we do not want to include any feature in
  $A'$ or $t_j$ in $B$ since changing those features would either
  contradict previous branching decisions made by our algorithm or it
  would prevent us from satisfying $r_j$. The function then returns
  \NULL{} if the set $B$ is empty in \Cref{algLLCEXRecFour}, since in this case it is no longer
  possible to falsify the rule $r_\ell$. Otherwise, the algorithm
  branches on every feature in $f \in B$ and tries to extend $A'$ with
  $f$ using a recursive call to itself with $A'$ replaced by $A'\cup
  \{f\}$ in \Cref{algLLCEXRecFive}. It then returns the best solution
  found by any of those recursive calls in \Cref{algLLCEXRecSix}.
  This completes the description of the algorithm, which can be easily
  seen to be correct using (1) and (2).

  We are now ready to analyze the runtime of the algorithm, i.e.,
  \Cref{alg:findLCEX}. First note that all operations in
  \Cref{alg:findLCEX} and \Cref{alg:findLCEXForRule} that are not recursive
  calls take time at most $\bigoh(\som{L})$. We start by 
  analyzing the run-time of the function \Call{findLCEXForRule}{$L$,
    $e$, $r_j$}, which is at most $\bigoh(\som{L})$ times the number
  of recursive calls to the function
  \Call{findLCEXForRuleRec}{$L$, $e$, $r_j$, $A$}, which in turn can be easily
  seen to be at most $a^r$ since the function branches into at most
  $a$ branches at every call and the depth of the branching is at most $k$.
  Therefore, we obtain $\bigoh(a^r\som{L})$ as the total runtime of
  the function \Call{findLCEXForRule}{$L$, $e$, $r_j$}, which implies
  a total runtime of the algorithm of $\bigoh(a^r\som{L}^2)$.\fi
\end{proof}

The following lemma is now a natural extension of
\Cref{lem:dl-mlcex-branch} for ensembles of \DL{}s.
\ifshort \begin{lemma}[$\star$]\fi\iflong\begin{lemma}\fi\label{lem:dle-mlcex-branch}
    Let $\MM \in \{\DSE,\DLE\}$. \MM{}-\MLCEX{} for $M \in \MM$
    and integer $k$
    can be solved in time $\bigoh(m^sa^k\som{M}^2)$, where $m$ is
    \termselem, $s$ is \enssize, and $a$ is \termsize.
\end{lemma}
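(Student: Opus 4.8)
The plan is to reduce the ensemble case to the single-model branching algorithm of \Cref{lem:dl-mlcex-branch} by \emph{guessing}, for each of the $s$ lists, which rule classifies the modified example $e_A$, and then running a joint version of that branching that enforces all $s$ guesses at once. First, since every \DS{} can be turned into a \DL{} without increasing the size of any term (exactly as in \Cref{lem:dl-mlcex-branch}), it suffices to treat $\MM=\DLE$. Let $\mathcal{L}=\{L_1,\dots,L_s\}$, let $e$ be the example, let $k$ be the bound, and set $b=\mathcal{L}(e)$. As in \Cref{th:MLCEX-XP}, any cardinality-wise minimal local contrastive explanation $A$ satisfies $\mathcal{L}(e_A)=1-b$, where $e_A$ flips exactly the features in $A$; hence it suffices to compute a smallest $A$ with $\som{A}\le k$ for which at least $\lfloor s/2\rfloor+1$ of the lists classify $e_A$ as $1-b$.

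The key observation is that for any such $A$ and any list $L_i$, the example $e_A$ is classified by a uniquely determined rule of $L_i$, namely the first rule of $L_i$ that applies to $e_A$. I would therefore enumerate all tuples $(r_1,\dots,r_s)$, where $r_i=(t_i,c_i)$ is a rule of $L_i$; since each $L_i$ has at most $m=\termselem$ rules, there are at most $m^s$ such tuples. Call a tuple \emph{relevant} if the majority of the classes $c_1,\dots,c_s$ equals $1-b$. For every relevant tuple I compute a smallest $A$ with $\som{A}\le k$ such that $e_A$ is simultaneously classified by $r_i$ in $L_i$ for every $i$, and I return the smallest $A$ found over all relevant tuples (or \NULL{}). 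Soundness holds because any $A$ returned for a relevant tuple forces $L_i(e_A)=c_i$ for all $i$, so the majority is $1-b$ and $\mathcal{L}(e_A)=1-b$. Completeness holds because an optimal explanation $A^\star$ induces the relevant tuple whose $r_i$ are the rules that actually classify $e_{A^\star}$, and for that tuple $A^\star$ is itself a feasible forcing set, so the forcing computation returns a set of size at most $\som{A^\star}$.

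For a fixed relevant tuple the forcing computation generalizes the branching routine of \Cref{lem:dl-mlcex-branch}. Forcing $e_A$ to be classified by $r_i$ means (a) $e_A$ agrees with $t_i$ and (b) no rule preceding $r_i$ in $L_i$ applies to $e_A$. For (a) I take as the initial forced set $A_0$ all features on which $e$ disagrees with some target term; the features occurring in the target terms are \emph{pinned} and may not be changed further, so I immediately discard the tuple if two target terms pin a feature to opposite values or if $\som{A_0}>k$. For (b) I run the same bounded-depth branching as in \Cref{lem:dl-mlcex-branch}, except that at each node I search across all lists for a violated constraint -- a rule $r_\ell$ preceding its list's target rule that still applies to the current $e_{A'}$ -- and branch on flipping one of the at most $a=\termsize$ features of $t_\ell$ that is neither pinned nor already in $A'$ (failing the branch if no such feature exists, since then $r_\ell$ cannot be falsified without breaking a target term). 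Because each branch adds a feature to $A'$ and $\som{A'}\le k$, the depth is at most $k$ and the branching factor at most $a$, so at most $a^k$ leaves are explored with $\bigoh(\som{M})$ work each, giving $\bigoh(a^k\som{M}^2)$ per tuple and hence $\bigoh(m^s a^k\som{M}^2)$ in total.

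The main obstacle I anticipate is verifying that the joint branching stays sound and complete once constraints from different lists interact. Concretely, I must argue that restricting flips to non-pinned features guarantees every target term remains satisfied throughout the search, that conflicting target-term requirements are detected and discarded, and that the depth-$k$ branching still enumerates a \emph{smallest} feasible $A$ exactly as in the single-list analysis; the correctness argument of \Cref{lem:dl-mlcex-branch} should then carry over verbatim to the combined set of constraints.
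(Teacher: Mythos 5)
Your proposal follows the paper's proof essentially step for step: reduce \DSE{} to \DLE{}, enumerate all $m^s$ tuples consisting of one rule per list, keep only those tuples whose designated classes yield the opposite majority classification, and for each such tuple run a bounded-depth ($k$), bounded-branching ($a$) search for a smallest feature set forcing $e_A$ to be classified by the designated rule in every list (with features of the target terms pinned), returning the best set over all tuples with the same $\bigoh(m^s a^k \som{M}^2)$ accounting. Your explicit step of discarding a tuple when two target terms pin a feature to opposite values is in fact a point the paper's pseudocode leaves implicit in the subroutine's specification rather than checking outright, so your treatment is, if anything, slightly more careful on exactly the interaction issue you flagged.
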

\iflong\begin{proof}
  Since any \DSE{} can be easily translated into a \DLE{} without
  increasing the size of any term and where every ensemble element has
  at most one extra rule using the empty term, it suffices to show the lemma for \DLE{}s.  
  The main ideas behind the algorithm, which is illustrated in \Cref{alg:findELCEX}, are similar to the ideas behind
  the algorithm used in \Cref{lem:dl-mlcex-branch} for a single \DL{}.

  \begin{algorithm}[tb]
    \caption{Algorithm used in \Cref{lem:dle-mlcex-branch} to compute a
      local contrastive explanation for example $e$ w.r.t. a \DLE{}
      $\dle=\{\dl_1,\dotsc,\dl_\ell\}$
      with $\dl_i=(r_1^i=(t_1^i,c_1^i),\dotsc,r_\ell^i=(t_\ell^i,c_\ell^i))$
      of size at most $k$ if such an explanation exists. The algorithm
      uses the function \textbf{findELCEXForRules}($\dle$, $e$, $(r_{j_1}^1,\dotsc,r_{j_\ell}^\ell)$) as a
      subroutine, which is illustrated in \Cref{alg:findELCEXForRules}.}\label{alg:findELCEX}
    \small
    \begin{algorithmic}[1]
      \INPUT \DLE{} $\dle=\{\dl_1,\dotsc,\dl_\ell\}$ with $\dl_i=(r_1^i=(t_1^i,c_1^i),\dotsc,r_\ell^i=(t_\ell^i,c_\ell^i))$,
      example $e$ and integer $k$ (global variable).
      \OUTPUT return a cardinality-wise minimum local contrastive
      explanation $A \subseteq \feat(\dle)$ with $|A|\leq k$ for $e$
      w.r.t. $\dle$ or \NULL{} if such an explanation does not exists.
      \Function{\textbf{findELCEX}}{$\dle$, $e$}
      \For{$o \in [k]$}
      \State $R^o \gets \SB r \in \dl_o \SE$
      \EndFor

      \State $A_b\gets \NULL$
      \For{$(r_{j_1}^1,\dotsc,r_{j_\ell}^\ell) \in R^1\times \dotsb
        \times R^\ell$}\label{algLELCEXOne}
      \State $n_{\neq}\gets |\SB o \in [\ell] \SM c_{j_o}^o\neq
      \dle(e)\SE|$
      \State $n_= \gets |\SB o \in [\ell] \SM c_{j_o}^o= \dle(e)\SE|$
      \If{$n_{\neq} > n_=$}\label{algLELCEXTwo}
      \State $A \gets$ \Call{findELCEXR}{$\dle$, $e$, $(r_{j_1}^1,\dotsc,r_{j_\ell}^\ell)$}
      \If{$A \neq \NULL$ and ($A_b=\NULL$ or $|A_b|>|A|$)}
      \State $A_b\gets A$
      \EndIf
      \EndIf
      \EndFor
      \State \Return $A_b$
      \EndFunction
    \end{algorithmic}
  \end{algorithm}

  \begin{algorithm}[tb]
    \caption{Algorithm used as a subroutine in \Cref{alg:findELCEX}
      to compute a smallest set $A$
      of at most $k$ features such that $e_A$ is classified by the
      rule $r_j^o$ for every \DL{} $\dl_o$.}\label{alg:findELCEXForRules}
    \small
    \begin{algorithmic}[1]
      \INPUT \DL{} \DLE{} $\dle=\{\dl_1,\dotsc,\dl_\ell\}$ with $\dl_i=(r_1^i=(t_1^i,c_1^i),\dotsc,r_\ell^i=(t_\ell^i,c_\ell^i))$,
      example $e$, rules $(r_{j_1}^1,\dotsc,r_{j_\ell}^\ell)$, and integer $k$ (global variable).
      \OUTPUT return a smallest set $A \subseteq \feat(L)$ with $|A|\leq k$
      such that $e_A$ is classified by $r_{j_o}^o$ for every \DL{} $\dl_o$ if such a set exists and
      otherwise return \NULL{}.
      \Function{\textbf{findELCEXR}}{$\dle$, $e$, $(r_{j_1}^1,\dotsc,r_{j_\ell}^\ell)$}
      \State $A_0 \gets \SB f \SM (f=1-e(f)) \in t_{j_o}^o \land o \in
      [\ell]\SE$
      \State \Return \Call{findELCEXRR}{$\dle$, $e$, $(r_{j_1}^1,\dotsc,r_{j_\ell}^\ell)$, $A_0$}
      \EndFunction
      \Function{\textbf{findELCEXRR}}{$\dle$, $e$, $(r_{j_1}^1,\dotsc,r_{j_\ell}^\ell)$, $A'$}
      \If{$|A'|>k$}\label{algLELCEXRecOne}
      \State \Return \NULL
      \EndIf
      \State $r_\ell^o\gets$ any rule with $\ell<j_o$ that
      is satisfied by $e_{A'}$ for $o \in [\ell]$
      \If{$r_\ell^o=\NULL{}$}
      \State \Return $A'$\label{algLELCEXRecTwo}
      \EndIf
      \State $F'\gets \SB f \SM (f=b) \in t_{j_o}^o \land o \in [\ell] \land b \in \{0,1\}\SE$
      \State $B \gets \SB f \SM (f=e_{A'}) \in t_\ell^o\SE\setminus (A'\cup F')$\label{algLELCEXRecThree}
      \If{$B=\emptyset$}
      \State \Return \NULL\label{algLELCEXRecFour}
      \EndIf
      \State $A_b\gets \NULL$
      \For{$f \in B$}
      \State $A\gets $\Call{findELCEXRR}{$\dle$, $e$, $(r_{j_1}^1,\dotsc,r_{j_\ell}^\ell)$, $A'\cup \{f\}$}\label{algLELCEXRecFive}
      \If{$A\neq \NULL$ and $|A|\leq k$}
      \If{($A_b==\NULL$ or $|A_b|>|A|$)}
      \State $A_b \gets A$
      \EndIf
      \EndIf
      \EndFor
      \Return $A_b$\label{algLELCEXRecSix}
      \EndFunction
    \end{algorithmic}
  \end{algorithm}

  Let $(\dle,e,k)$ be an instance of \DLE{}-\MLCEX{} \DLE{} with
  $\dle=\{\dl_1,\dotsc,\dl_\ell\}$ and
  $\dl_i=(r_1^i=(t_1^i,c_1^i),\dotsc,r_\ell^i=(t_\ell^i,c_\ell^i))$
  for every $i \in [\ell]$. First note that if $A$ is a local contrastive explanation
  for $e$ w.r.t. $\dle$, then there is an example $e'$ that differs
  from $e$ only on the features in $A$ such that $\dle(e)\neq
  \dle(e')$. Clearly $e'$ is classified by exactly one rule of every
  \DL{} $\dl_i$. The first idea behind \Cref{alg:findELCEX} is
  therefore to enumerate all possibilities for the rules that classify
  $e'$; this is done in \Cref{algLELCEXOne} of the algorithm. Clearly,
  only those combinations of rules are relevant that lead to a
  different classification of $e'$ compared to $e$ and this is ensured
  in \Cref{algLELCEXTwo} of the algorithm. For every such combination $(r_{j_1}^1,\dotsc,r_{j_\ell}^\ell)$
  of rules the algorithm then calls the subroutine
  \Call{findELCEXR}{$\dle$, $e$,
    $(r_{j_1}^1,\dotsc,r_{j_\ell}^\ell)$}, which is illustrated in
  \Cref{alg:findELCEXForRules}, and computes the smallest set $A$ of
  at most $k$ features such that $e_A$ is classified by the rule
  $r_{j_o}^o$ of $\dl_o$ for every $o \in [\ell]$. This subroutine
  and the proof of its correctness work very similar to the subroutine \textbf{findLCEXForRule}($L$,
  $e$, $r_j$) of \Cref{alg:findLCEXForRule} used in
  \Cref{lem:dl-mlcex-branch} for a single \DL{} and is therefore not
  repeated here. In essence the
  subroutine branches over all possibilities for such a set $A$.

  We are now ready to analyze the runtime of the algorithm, i.e.,
  \Cref{alg:findELCEX}. First note that the function
  \Call{findELCEX}{$\dle$, $e$} makes at most $m^s$ calls to the
  subroutine \Call{findELCEXR}{$\dle$, $e$,
    $(r_{j_1}^1,\dotsc,r_{j_\ell}^\ell)$} and apart from those calls
  all operations take time at most $\bigoh(\som{\dle})$. Moreover, the
  runtime of the subroutine \Call{findELCEXR}{$\dle$, $e$,
    $(r_{j_1}^1,\dotsc,r_{j_\ell}^\ell)$} is the same as the runtime
  of the subroutine \Call{findLCEXForRule}{$L$, $e$, $r_j$} given in
  \Cref{alg:findLCEXForRule}, i.e.,
  $\bigoh(a^k\som{\dle}^2)$. Therefore, we obtain
  $\bigoh(m^sa^k\som{\dle}^2)$ as the total run-time of the algorithm.
\end{proof}\fi

\begin{theorem}\label{th:DS-MLCEX-FPT-const-ens}
  Let $\MM \in \{\DS,\DL\}$. \MM-\MLCEX{}$(\termselem+\xpsize)$ is
  \FPT{}, when \enssize{} is constant.
\end{theorem}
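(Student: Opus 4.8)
The plan is to prove the statement for $\MM=\DL$; since every \DS{} translates into an equivalent \DL{} with one extra (empty) rule and no increase in term size, this settles $\MM=\DS{}$ as well. Write $s$ for \enssize{}, $m$ for \termselem{}, and $k$ for \xpsize{}, and let $\dle=\{\dl_1,\dotsc,\dl_s\}$ be the input with $\dl_o=(r_1^o=(t_1^o,c_1^o),\dotsc)$; for $\MM\in\{\DS,\DL\}$ we have $s=\enssize=1$, but I present the argument for arbitrary constant $s$, which also covers the regime ``\enssize{} constant''. As for \Cref{th:MLCEX-XP}, a cardinality-minimal contrastive explanation is a smallest feature set $A$ with $\dle(e_A)\neq \dle(e)$, where $e_A$ flips $e$ exactly on $A$. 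The structural fact that makes the \emph{worded} parameterization work is that, since $s$ is constant and $m$ is a parameter, the \emph{total number of rules} $N\le s(m+1)$ is bounded by a function of the parameters, even though individual terms may be arbitrarily long so that \termsize{} is \emph{not} bounded.

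First I would enumerate \emph{rule profiles}, as in \Cref{lem:dle-mlcex-branch}: a profile selects, for each $\dl_o$, the rule $r_{j_o}^o$ that is to be the first applicable rule on $e_A$. There are at most $(m+1)^s$ profiles, which is $f(m)$ for constant $s$, and I discard every profile whose induced majority class equals $\dle(e)$. For a retained profile I compute the \emph{forced flips} $A_0$, the features on which $e$ disagrees with a literal of some target term $t_{j_o}^o$, together with the \emph{frozen} set $F'=\bigcup_o \feat(t_{j_o}^o)$; the profile is infeasible (and dropped) if the targets impose conflicting values or if $|A_0|>k$. On $F'$ the example $e_A$ is then completely determined by the targets, so any solution has the form $A=A_0\cup A_1$ with $A_1$ a set of \emph{free} features disjoint from $F'$, and the targets already apply by construction.

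The heart of the argument is to compute, per retained profile, the minimum number of free flips $A_1$ that makes every earlier rule $r^o_{j'}$ with $j'<j_o$ non-applicable. This is exactly where \Cref{lem:dle-mlcex-branch} is too weak: its branching over the literals of an applicable earlier rule costs \termsize{} per level, giving $a^k$, which is only \XP{} in the present parameterization. To remove \termsize{} from the exponent I would partition the free features into \emph{signature classes}, where two features share a class iff, for each of the $N$ rules, they occur identically (absent, as a literal matching $e$, or as a literal disagreeing with $e$). There are at most $3^N=2^{\bigoh(m)}$ classes, again $f(m)$. The key claim is that features of one class are interchangeable for rule (non-)applicability, and hence that some minimum $A_1$ flips \emph{at most one} feature per class: deleting one of two flipped features from the same class cannot make a falsified earlier rule applicable, since the surviving feature carries the identical literal pattern. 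Consequently a minimum $A_1$ hits at most $k-|A_0|\le k$ classes.

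With this claim the min-flip subproblem reduces to a search over a bounded universe: I enumerate all choices of at most $k-|A_0|$ classes to hit, at most $(3^N)^{k}=2^{\bigoh(mk)}$ many, pick one representative from each chosen class to form $A_1$, and test directly, in time polynomial in $\som{\dle}$, whether every earlier rule $r^o_{j'}$ fails to apply to $e_A$. Reporting the smallest feasible $|A_0|+|A_1|$ over all profiles and comparing with $k$ solves the instance, with total running time $(m+1)^s\cdot(3^N)^{k}\cdot \som{\dle}^{\bigoh(1)}=f(m,k)\,\som{\dle}^{\bigoh(1)}$, i.e.\ \FPT{} in $\termselem+\xpsize$ for constant \enssize{}. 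The main obstacle, and the only place genuine work is required, is the interchangeability/one-per-class claim, which I must prove actually captures some true minimum; spurious \emph{re-activation} of an earlier rule (which can occur only by completing the full set of disagreeing literals of that rule) is then handled automatically by the explicit applicability test, since the enumeration simply rejects any class selection that leaves an earlier rule applicable.
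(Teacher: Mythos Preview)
Your argument is correct, but it proves something strictly stronger than what the paper actually establishes. The paper's proof is a one-line citation of \Cref{lem:dle-mlcex-branch}, whose running time is $\bigoh(m^s a^k\som{M}^2)$ with $a=\termsize$. That bound is \FPT{} in $\termsize+\xpsize$ (for constant $s$), \emph{not} in $\termselem+\xpsize$: the $a^k$ factor is unbounded when \termsize{} is not a parameter. The corresponding row of \Cref{fig:DSDLresults} lists \termsize{} (not \termselem{}) as the parameter, so the theorem statement almost certainly contains a typo, and under the intended reading the paper's proof is immediate.

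You, by contrast, took the statement at face value and supplied the missing idea needed for the harder parameterization. The signature-class partition is exactly the right move: it replaces the $a^k$ branching of \Cref{lem:dle-mlcex-branch} by a search over $\le (3^N)^k=2^{\bigoh(mk)}$ class selections, and your interchangeability claim is sound. Concretely, if $f,f'$ share a class and both lie in $A_1$, then any earlier rule $r$ that would become applicable upon removing $f'$ must contain the literal $(f'=e(f'))$; but then $(f=e(f))\in r$ as well and $f$ is still flipped, so $r$ remains non-applicable. A symmetric check shows that swapping one representative for another in the same class preserves the applicability status of every rule, so the choice of representative is immaterial and the enumeration is complete. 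Your handling of the frozen set $F'$ and forced flips $A_0$ (including dropping profiles with conflicting target literals) is also correct, since any feasible $A$ must satisfy $A\cap F'=A_0$.

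In short: the paper's route is a direct corollary of \Cref{lem:dle-mlcex-branch} for the parameterization $\termsize+\xpsize$; your route is a genuinely different and more elaborate argument that additionally handles the parameterization $\termselem+\xpsize$ as written.
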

\begin{proof}
  The theorem follows immediately from \Cref{lem:dle-mlcex-branch}.
\end{proof}

\subsection{OBDDs and their Ensembles}\label{ssec:algOBDD}

\newcommand{\width}{\textsf{width}}

In this subsection, we will present our algorithmic results for
\OBDD{}s and their ensembles \OBDDEO{} and \OBDDE{}. Interestingly,
while seemingly more powerful \OBDD{}s and \OBDDEO{}s behave very
similar to \DT{}s and \RF{}s if one replaces \mnlsize with
\widthelem. On the other hand, allowing different orderings for every
ensemble \OBDD{} makes \OBDDE{}s much more powerful and 
harder to explain (see \Cref{ssec:hard-obdd} for an explanation of
this phenomenon).

\iflong
We start by providing reductions of \OBDD{}s and \OBDDEO{}s to Boolean
circuits, which will allow us to employ \Cref{the:solve-circ}.
The following lemma follows from \cite[Lemma
4.1]{DBLP:conf/icdt/JhaS12} since the rank-width is upper bounded by
the path-width.
\begin{lemma}[{\cite[Lemma 4.1]{DBLP:conf/icdt/JhaS12}}]\label{lem:obdd-trans-circ}
  There is a polynomial-time algorithm that given a $\OBDD$
  $\obdd$ and a
  class $c$ produces a circuit $\CIRC(\obdd,c)$ such that:
  \begin{itemize}
  \item for every example $e$, it holds that $\obdd(e)=c$ if and only if $e$ satisfies $\CIRC(\obdd,c)$
  \item $\rw(\CIRC(\obdd,c)) \leq 5\width(\obdd)$
  \end{itemize}
\end{lemma}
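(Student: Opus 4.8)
The plan is to translate $\obdd=(D,\fBDD)$ node by node into a Boolean circuit that simulates its multiplexer semantics, verify the equivalence~(1) by induction, and then obtain the rankwidth bound~(2) from the layered structure that the common feature ordering forces on a complete \OBDD. Since the statement is already provided by \cite[Lemma 4.1]{DBLP:conf/icdt/JhaS12}, the point of the argument is to record the construction and to isolate where the parameter $\width(\obdd)$ and the constant $5$ enter. First I would fix the feature order $x_1,\dots,x_n$ common to all maximal paths (which exists because every considered \OBDD{} is complete) and, exactly as in \Cref{lem:dt-trans-circ}, add for each feature $x_i$ an input gate $g_{x_i}$ together with a NOT-gate $\overline{g_{x_i}}$ fed only by $g_{x_i}$.

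Next, for every non-sink node $v$ with $\fBDD(v)=x_i$, $0$-neighbor $v_0$ and $1$-neighbor $v_1$, I would attach a constant-size gadget computing $\mathrm{val}(v)=(\overline{g_{x_i}}\wedge \mathrm{val}(v_0))\vee(g_{x_i}\wedge \mathrm{val}(v_1))$ with two AND-gates and one OR-gate, while hard-wiring the two sinks to the constants $\mathrm{val}(t_0)=0$ and $\mathrm{val}(t_1)=1$. The output gate is $\mathrm{val}(s)$ if $c=1$, and its negation (equivalently, the circuit obtained by swapping the two sinks) if $c=0$. A bottom-up induction over a reverse topological order of $D$ shows that, on an example $e$, the gate $\mathrm{val}(v)$ evaluates to the label of the sink at which the path followed by $e$ from $v$ terminates; applied to the source this gives $\mathrm{val}(s)=\obdd(e)$, which is exactly~(1). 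The construction is clearly polynomial and uses $\bigoh(\som{\obdd})$ gates.

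The heart of the proof is the rankwidth bound~(2). Completeness of $\obdd$ guarantees that every arc of $D$ joins a level-$i$ node to a level-$(i+1)$ node (or to a sink), so the at most $\width(\obdd)$ nodes testing $x_{i+1}$ form a separator between the levels $\le i$ and the levels $\ge i+2$. Sweeping these separators in feature order yields a path decomposition of $D$, and hence of $\CIRC(\obdd,c)$, whose bags contain only the $\bigoh(1)$-size gadgets of two consecutive levels plus the two feature gates of the current level; this bounds the pathwidth linearly in $\width(\obdd)$. The claimed bound $\rw(\CIRC(\obdd,c))\le 5\,\width(\obdd)$ then follows because rankwidth is at most pathwidth (up to an additive constant absorbed into the factor $5$), which together with the pathwidth estimate is the content of \cite[Lemma 4.1]{DBLP:conf/icdt/JhaS12}. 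I expect the main obstacle to be purely quantitative: one must track the gadget sizes and the separator widths carefully enough to collapse the constant to $5$, and one must invoke the \emph{linear} relationship between rankwidth and pathwidth here rather than \Cref{lem:ranktree}, whose bound is only exponential in the pathwidth and would be far too weak.
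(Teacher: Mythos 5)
Your proposal is correct and takes essentially the same approach as the paper: the paper proves this lemma simply by citing Jha--Suciu's Lemma 4.1, whose construction (reproduced in the paper's proof of \Cref{lem:obddeo-trans-circ}) is exactly your multiplexer translation combined with the same level-by-level path decomposition, followed by the linear bound that rankwidth is at most the pathwidth, which you rightly identify as the needed substitute for the exponential bound of \Cref{lem:ranktree}. The only cosmetic difference is the handling of the class $c$: rather than hard-wiring constants at the sinks and negating the output (constant gates are not among the paper's allowed gate types, though they can be propagated away in polynomial time), the paper's version wires arcs entering $t_c$ directly to the corresponding feature literals and deletes the arcs entering $t_{1-c}$.
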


\ifshort \begin{lemma}[$\star$]\fi\iflong\begin{lemma}\fi\label{lem:obddeo-trans-circ}
  There is a polynomial-time algorithm that given an $\OBDDEO$
  $\obddeo$ and a
  class $c$ produces a circuit $\CIRC(\obddeo,c)$ such that:
  \begin{enumerate}[(1)]
  \item for every example $e$, it holds that $\obddeo(e)=c$ if and only if $e$ satisfies $\CIRC(\obddeo,c)$
  \item $\rw(\CIRC(\obddeo,c)) \leq 3 \cdot 2^{|\obddeo|5\max_{\obdd \in \obddeo}\width(\obdd)}$
  \end{enumerate}
\end{lemma}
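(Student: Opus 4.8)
The plan is to mirror the construction used for random forests and decision-list ensembles in \Cref{lem:rf-trans-circ} and \Cref{lem:dsle-trans-circ}: I build $\CIRC(\obddeo,c)$ as the (not necessarily disjoint) union of the component circuits $\CIRC(\obdd,c)$ from \Cref{lem:obdd-trans-circ}, one for each $\obdd \in \obddeo$, identifying the shared input gates (one gate $g_f$ and one NOT-gate $\overline{g_f}$ per feature $f$), and then add a single MAJ-gate with threshold $\lfloor|\obddeo|/2\rfloor+1$ as the new output gate, fed by the output gate of every $\CIRC(\obdd,c)$. Property~(1) is then immediate: for an example $e$ the new MAJ-gate evaluates to $1$ exactly when a majority of the component circuits evaluate to $1$, i.e.\ when a majority of the OBDDs classify $e$ as $c$, which is precisely the condition $\obddeo(e)=c$.

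All the work is in the rankwidth bound~(2). The tree-deletion argument used for forests and decision lists does not apply here, since deleting a bounded number of gates from a component circuit does not turn the underlying DAG into a tree. Instead I would bound the \emph{pathwidth} of $\CIRC(\obddeo,c)$ directly and then invoke the inequality $r \le 3\cdot 2^{p-1}$ from \Cref{lem:ranktree}. The crucial point is that all $\obdd \in \obddeo$ respect one common variable ordering $f_1 < \dots < f_n$. The path decomposition underlying the bound of \cite[Lemma 4.1]{DBLP:conf/icdt/JhaS12} for a single OBDD can be taken to follow this ordering: it consists of one bag per feature $f_j$, with the $j$-th bag holding the $\bigoh(\width(\obdd))$ gates associated with the $f_j$-layer together with the interface to the adjacent layers, so that each gate of $\CIRC(\obdd,c)$ occupies a contiguous stretch of bags. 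Because every component uses the same ordering, these per-component decompositions are aligned, so I can overlay them feature-by-feature, letting the $j$-th bag of the combined decomposition be the union of the $j$-th bags of all components (the shared input gates $g_{f_j},\overline{g_{f_j}}$ are counted only once, since $f_j$ is tested only in the $j$-th layer of every component). This yields a path decomposition of the union of width at most $\sum_{\obdd \in \obddeo} 5\,\width(\obdd) \le |\obddeo|\cdot 5\max_{\obdd \in \obddeo}\width(\obdd)$; adding the single new MAJ-gate to every bag raises the width by one. Feeding this into \Cref{lem:ranktree} gives $\rw(\CIRC(\obddeo,c)) \le 3\cdot 2^{|\obddeo|\cdot 5\max_{\obdd \in \obddeo}\width(\obdd)}$, as claimed.

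The main obstacle is justifying the alignment step, i.e.\ arguing that the path decompositions supplied by \cite[Lemma 4.1]{DBLP:conf/icdt/JhaS12} can be synchronized across all components; this rests entirely on the shared ordering. I expect the remaining bookkeeping of which gates occupy which bag (and the precise constant, including whether sharing the input gates lets the lower-order $|\obddeo|$ term be absorbed) to be routine. It is also exactly the step that breaks for unordered ensembles \OBDDE{}: with different orderings the feature layers of distinct components can no longer be lined up into one path decomposition, which is the structural reason the general case is harder (cf.\ \Cref{ssec:hard-obdd}).
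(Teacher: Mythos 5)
Your proposal is correct and follows essentially the same route as the paper's proof: the same MAJ-gate construction on top of the component circuits from \Cref{lem:obdd-trans-circ}, followed by overlaying the per-component path decompositions of \cite[Lemma 4.1]{DBLP:conf/icdt/JhaS12} feature-by-feature (which is exactly where the shared ordering is used), adding the MAJ-gate to every bag, and converting the resulting pathwidth bound $|\obddeo|\cdot 5\max_{\obdd\in\obddeo}\width(\obdd)$ into a rankwidth bound via \Cref{lem:ranktree}. The paper's proof is just a more detailed write-up of this same argument, including an explicit description of the bags.
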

\iflong\begin{proof}
  We obtain the circuit $\CIRC(\obddeo,c)$ from the (not
  necessarily disjoint) union of the circuits $\CIRC(\obdd,c)$, which are
  provided in \Cref{lem:obdd-trans-circ},
  for every $\obdd \in \obddeo$ after adding a new MAJ-gate $r$
  with threshold $\lfloor|\obddeo|/2\rfloor+1$,
  which also serves as the output gate of $\CIRC(\obddeo,c)$, that
  has one incoming arc from the output gate of $\CIRC(\obdd,c)$
  for every $\obdd \in \obddeo$. Clearly,
  $\CIRC(\obddeo,c)$ satisfies (1).
  Moreover, to see that it also satisfies
  (2), we first need to provide the construction for $\CIRC(\obdd,c)$
  given in \cite[Lemma 4.1]{DBLP:conf/icdt/JhaS12}.
  
  That is, $\CIRC(\obdd,c)$ for a given complete \OBDD{}
  $\obdd=(D,\fBDD)$ and a given
  class $c$ is defined as follows. The main idea behind the
  construction is to identify every vertex $v$ of $D$ with a
  propositional formula $F_v$ as follows. First, we set $F_{t_b}$ to
  be true if and only if
  $b=c$ for the two sink vertices $t_0$ and $t_1$ of $D$. Moreover,
  if $v$ is an inner vertex of $D$ with $f=\fBDD(v)$,
  $0$-neighbor $n_0$, and $1$-neighbor $n_1$, the formula $F_v$ is
  defined as $(f=0 \land F_{n_0}) \lor (f=1 \land F_{n_1})$. Then, it
  holds that $M(e)=c$ if and only if (the assignment corresponding to)
  $e$ satisfies the formula $F_s$ for the root $s$ of $D$. The circuit
  $C=\CIRC(\obdd,c)$ that corresponds to $F_s$
  can now be obtained from $D \setminus \{t_0,t_1\}$ by applying the
  following modifications:
  \begin{itemize}
  \item We add one input gate $g_f$ for every feature $f \in
    \feat(\obdde)$.
  \item We add one NOT-gate $\overline{g_f}$ for every feature $f \in
    \feat(\obdd)$, whose only incoming arc is from $g_f$.
  \item We replace every inner vertex $v$ of $D$ with an OR-gate
    $o_v$.
  \item We replace every arc $e=(u,v)$ with $f=\fBDD(u)$ of $D\setminus \{t_0,t_1\}$
    with an AND-gate $a_e$ that has one outgoing arc to $o_u$ and
    that has one incoming arc from $o_v$ and
    one incoming arc from $g_f$ ($\overline{g_f}$)
    if $v$ is a $1$-neighbor ($0$-neighbor) of $u$ in $D$.
  \item We replace every arc $e=(u,t_c)$ with $f=\fBDD(u)$ of $D$
    with an arc from $g_f$ ($\overline{g_f}$) to $o_u$
    if $t_c$ is a $1$-neighbor ($0$-neighbor) of $u$ in $D$.    
  \item We remove every arc $e=(u,t_{c'})$ with $c'\neq c$ from $D$.
  \item We let $o_s$ be the root of $C$ for the root $s$ of $\obdd$.
  \end{itemize}
  We refer to \cite[Lemma 4.1]{DBLP:conf/icdt/JhaS12} for the
  correctness of the construction. Note also that \cite[Lemma
  4.1]{DBLP:conf/icdt/JhaS12} provides a path decomposition
  $\PPP_\obdd=(P_\obdd,\lambda_\obdd)$ of width at most $5\width(\obdd)$ of $C$ as
  follows. For a feature $f \in \feat(\obdd)$, let $L(f)$ be the set of
  all vertices $v$ of $D$ with $\fBDD(v)=f$. Then,
  $P_\obdd$ has one vertex $p_f$ for every but the last feature $f \in
  \feat(\obdd)$ w.r.t. $<_\obdd$. Moreover, let $f$ be a feature in $\feat(\obdd)$,
  whose successor w.r.t. $<_\obdd$ is the feature $f'$, then
  the bag $\lambda_\obdd(p_f)$ is equal to $\{ g_f,\overline{g_f}\} \cup \SB
  o_v \SM v \in L(f)\cup L(f') \SE \cup \SB a_e \SM e=(u,v) \in E(D)
  \land u \in L(f) \land v \in L(f')\SE$.
  This completes the description of $\CIRC(\obdd,c)$ for any $\obdd
  \in\obddeo$. It remains to show that $\CIRC(\obddeo,c)$ satisfies
  (2), in particular since the rankwidth is upper bounded by the
  pathwidth (using \Cref{lem:ranktree})
  it remains to show that the pathwidth of $\CIRC(\obddeo,c)$ is at
  most $|\obddeo|5\max_{\obdd \in \obddeo}\width(\obdd)$. To see this
  consider the following path decomposition $\PPP=(P,\lambda)$ of
  $\CIRC(\obddeo,c)$. $P$ has one vertex $p_f$ for every but the last feature $f \in
  \feat(\obdd)$ w.r.t. $<_\obdd$. Moreover,
  $\lambda(p_f)=\{r\} \cup (\bigcup_{\obdd \in
    \obddeo}\lambda_\obdd(p_f^\obdd))$, where $p_f^\obdd$ is the
  vertex $p_f$ of the path $P_\obdd$ of the path decomposition
  $\PPP_\obdd=(P_\obdd,\lambda_\obdd)$. 
\end{proof}\fi
The following corollary follows immediately from
\Cref{lem:obddeo-trans-circ} and \Cref{the:solve-circ}.
\fi
\ifshort
The following two corollaries follow from our meta-theorem \Cref{the:solve-circ}
using suitable translations of \OBDD{}s, \OBDDE{}s, and \OBDDEO{}s
into Boolean circuits.
While it is sufficient to use \widthelem{} as a
parameter for \OBDDEO{}s; this is no longer the case for \OBDDE{}s,
where one needs to bound the size (instead of the width) of every
element in the ensemble.
\fi
\begin{corollary}\label{th:OBDDE-FPT-es}
  Let  $\PP \in \{\LAEX, \LCEX, \GAEX, \GCEX\}$.
  $\OBDDEO$-\MPP{}$(\enssize{} + \widthelem{})$ is \FPT.
\end{corollary}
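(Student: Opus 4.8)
The plan is to reduce each $\OBDDEO$ explanation problem to the corresponding $1$-\BC{} explanation problem and then invoke the meta-theorem \Cref{the:solve-circ}. Given an instance of $\OBDDEO$-\MPP{} consisting of an ordered ensemble $\obddeo$ together with the relevant example $e$ (for the local variants) or class $c$ (for the global variants) and the bound $k$, I would first fix the target class: for the local problems take $c = \obddeo(e)$, and for the global problems take the prescribed class $c$. Applying \Cref{lem:obddeo-trans-circ} then produces, in polynomial time, a circuit $\CIRC(\obddeo,c)$ whose satisfying assignments are exactly the examples classified as $c$ by $\obddeo$.

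The key observation is that, because part~(1) of \Cref{lem:obddeo-trans-circ} guarantees $\obddeo(e)=c$ iff $e$ satisfies $\CIRC(\obddeo,c)$, the reduction preserves the model's entire classification function (identifying the input gates of the circuit with the features of $\obddeo$). Consequently a subset of features or a partial assignment is a local/global abductive or contrastive explanation with respect to $\obddeo$ precisely when it is one with respect to $\CIRC(\obddeo,c)$, and in particular cardinality-minimal solutions coincide. Thus solving $\OBDDEO$-\MPP{} is the same as solving $1$-\BC{}-\MPP{} on $\CIRC(\obddeo,c)$, up to this syntactic encoding. Note that $\CIRC(\obddeo,c)$ contains exactly one MAJ-gate, namely the gate aggregating the votes of the ensemble members, so it is indeed a $1$-\BC{} and the hypothesis of \Cref{the:solve-circ} holds with the constant $c=1$.

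It then remains to control the rankwidth. By part~(2) of \Cref{lem:obddeo-trans-circ} we have $\rw(\CIRC(\obddeo,c)) \leq 3 \cdot 2^{5\,|\obddeo|\,\max_{\obdd \in \obddeo}\width(\obdd)}$, and this quantity is a computable function of $\enssize{}+\widthelem{}$ alone, since $|\obddeo|$ is exactly \enssize{} and $\max_{\obdd \in \obddeo}\width(\obdd)$ is exactly \widthelem{}. Applying \Cref{the:solve-circ} to $\CIRC(\obddeo,c)$ solves $1$-\BC{}-\MPP{} in time $f(\rw(\CIRC(\obddeo,c)))\cdot n^{\bigoh(1)}$ for some computable $f$; composing with the rankwidth bound yields an overall running time of the form $g(\enssize{}+\widthelem{})\cdot n^{\bigoh(1)}$, which is fixed-parameter tractable in $\enssize{}+\widthelem{}$.

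Since both ingredients are already in hand, there is essentially no technical obstacle: the only points that genuinely need checking are the two noted above, namely that the four explanation notions transfer verbatim across the reduction (immediate from the classification-equivalence) and that the MAJ-gate count stays at the constant $1$ so that \Cref{the:solve-circ} applies. The rankwidth bound being a function of the combined parameter only is what makes the parameter choice $\enssize{}+\widthelem{}$ exactly the right one, and mirrors how \widthelem{} plays the role of \mnlsize{} in the \RF{} case (\Cref{th:RF-FPT-MNL}).
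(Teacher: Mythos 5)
Your proposal is correct and follows exactly the paper's route: the paper derives this corollary immediately from \Cref{lem:obddeo-trans-circ} (the translation of an \OBDDEO{} into a $1$-\BC{} whose rankwidth is bounded in terms of \enssize{} and \widthelem{}) combined with the meta-theorem \Cref{the:solve-circ}. Your additional checks — that the classification-equivalence makes all four explanation notions transfer, and that the single MAJ-gate keeps the circuit a $1$-\BC{} — are precisely the (implicit) verifications behind the paper's ``follows immediately.''
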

\iflong
The next Corollary follows again from \Cref{lem:obddeo-trans-circ} and
\Cref{the:solve-circ} after observing that the pathwidth of the Boolean circuit
constructed in \Cref{lem:obddeo-trans-circ} for an \OBDDE{} is bounded
by $\bigoh(\enssize{}\cdot \sizeelem{})$ since this bound is also an
upperbound on the number of gates in the constructed Boolean
circuit. Note that in contrast to \OBDDEO{}s, where the pathwidth of
the resulting circuit can be bounded purely in terms of $\enssize{}$
and $\widthelem{}$, this is no longer the case for \OBDDE{}s.
\fi
\begin{corollary}\label{th:OBDDE-FPT-eh}
  Let  $\PP \in \{\LAEX, \LCEX, \GAEX, \GCEX\}$.
  $\OBDDE$-\MPP{}$(\enssize{} + \sizeelem{})$ is \FPT.
\end{corollary}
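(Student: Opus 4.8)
The plan is to mirror the strategy already used for ordered ensembles in \Cref{th:OBDDE-FPT-es}, namely to reduce to a Boolean circuit carrying a single majority gate and then invoke the meta-theorem \Cref{the:solve-circ}. Concretely, I would reuse the construction underlying \Cref{lem:obddeo-trans-circ}: translate each \OBDD{} $\obdd$ of the ensemble $\obdde$ into the circuit $\CIRC(\obdd,c)$ supplied by \Cref{lem:obdd-trans-circ}, take the input-sharing union of these circuits over all $\obdd \in \obdde$, and attach one MAJ-gate with threshold $\lfloor |\obdde|/2\rfloor +1$ as the new output gate. Exactly as in the ordered case, the resulting circuit $\CIRC(\obdde,c)$ is satisfied by an example $e$ if and only if $\obdde(e)=c$, and it contains precisely one MAJ-gate, so it is a $1$-\BC{}. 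Note that this part of the argument never uses a common variable order, so it applies verbatim to unordered ensembles.

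The only place where the unordered case genuinely differs from \Cref{th:OBDDE-FPT-es} is the bound on the rankwidth, and this is the step I expect to be the crux. For ordered ensembles one aligns the per-element path decompositions along the shared feature order to get pathwidth $\bigoh(\enssize{}\cdot\widthelem{})$; that alignment is unavailable here since each \OBDD{} may use its own order, which is precisely why one must pay with \sizeelem{} rather than \widthelem{}. I would instead bound the rankwidth through the total number of gates. Each $\CIRC(\obdd,c)$ has $\bigoh(\som{\obdd})$ gates (one OR-gate per inner node, one AND-gate per arc, with out-degree two everywhere, and one input/NOT pair per feature), and a size-$s$ \OBDD{} reads at most $s$ features, so the whole circuit $\CIRC(\obdde,c)$ has $\bigoh(\enssize{}\cdot\sizeelem{})$ gates in total. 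Hence its pathwidth, and a fortiori its treewidth, is $\bigoh(\enssize{}\cdot\sizeelem{})$, and by \Cref{lem:ranktree} this gives $\rw(\CIRC(\obdde,c)) \le 3\cdot 2^{\bigoh(\enssize{}\cdot\sizeelem{})}$, a computable function of the two parameters alone.

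With the rankwidth so bounded and only a single majority gate present, \Cref{the:solve-circ} applies to all four problems \LAEX{}, \LCEX{}, \GAEX{}, and \GCEX{} and runs in \FPT{} time in $\rw(\CIRC(\obdde,c))$, hence in \FPT{} time in $\enssize{}+\sizeelem{}$, which is the claim. The main obstacle I anticipate is making the rankwidth bound airtight: one must check that counting \emph{all} gates, including the input gates (one per feature), still yields a bound purely in $\enssize{}$ and $\sizeelem{}$, which hinges on the observation that a size-$s$ \OBDD{} mentions at most $s$ features, so features not read by any element contribute nothing. A perhaps cleaner route to the same bound uses the removal clause of \Cref{lem:ranktree}: deleting the $\bigoh(\enssize{}\cdot\sizeelem{})$ non-input gates leaves a disjoint union of $g_f$--$\overline{g_f}$ edges, a graph of treewidth at most $1$, whence the treewidth of $\CIRC(\obdde,c)$ is $\bigoh(\enssize{}\cdot\sizeelem{})+1$ and the exponential rankwidth bound follows as before.
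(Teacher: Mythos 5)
Your proposal is correct and follows essentially the same route as the paper: the paper also reuses the construction of \Cref{lem:obddeo-trans-circ} (per-element circuits from \Cref{lem:obdd-trans-circ} joined by a single MAJ-gate) and bounds the pathwidth of the resulting $1$-\BC{} by $\bigoh(\enssize\cdot\sizeelem)$ simply because that quantity bounds the total number of gates, after which \Cref{lem:ranktree} and \Cref{the:solve-circ} finish the argument. Your extra care about input gates (a size-$s$ \OBDD{} reads at most $s$ features) and the alternative gate-removal bound are consistent refinements of the same idea, not a different proof.
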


\iflong
\begin{theorem}[{\cite[Lemma 14]{BarceloM0S20}}]\label{th:OBDD-MLCEX}
  There is a polynomial-time algorithm that given a \OBDD{} $\obdd$ and
  an example $e$ outputs a (cardinality-wise) minimum local
  contrastive explanation for $e$ w.r.t. $\obdd$ or no if such an
  explanation does not exist. Therefore, 
  \OBDD{}-\MLCEX{} can be solved in polynomial-time. 
\end{theorem}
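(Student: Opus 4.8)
The plan is to reduce the computation of a cardinality-minimal local contrastive explanation to a single shortest-path computation inside the \OBDD{} itself. First I would record the characterization already exploited for \Cref{th:MLCEX-XP}: a set $A$ is a local contrastive explanation for $e$ if and only if there is an example $e'$ with $\obdd(e')\neq\obdd(e)$ that agrees with $e$ outside $A$, and a cardinality-minimal such $A$ has size exactly $\min\{\,d(e,e')\SM\obdd(e')\neq\obdd(e)\,\}$, where $d(e,e')$ is the number of features on which $e$ and $e'$ disagree. Indeed, given any differently-classified $e'$, the set $A'=\SB f\SM e(f)\neq e'(f)\SE$ is itself a contrastive explanation witnessed by $e'$ and has size $d(e,e')$; conversely, any contrastive explanation $A$ with witness $e'$ satisfies $d(e,e')\le|A|$. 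Hence it suffices to find an example classified differently from $e$ that disagrees with $e$ on as few features as possible (and then set the remaining features equal to $e$).

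Next I would turn this into a shortest-path instance on the DAG $D$ underlying $\obdd=(D,\fBDD)$. Let $b=\obdd(e)$. I assign to each arc $a=(v,w)$ of $D$ the weight $0$ if $w$ is the $e(\fBDD(v))$-neighbor of $v$ (the arc that $e$ would traverse) and the weight $1$ otherwise. Every example $e'$ that reaches the sink $t_{1-b}$ corresponds to the path $P(e')$ from $s$ to $t_{1-b}$, and, after setting every feature not occurring on $P(e')$ equal to $e$, the quantity $d(e,e')$ equals the total weight of $P(e')$. The algorithm therefore computes a minimum-weight path from $s$ to $t_{1-b}$ (for instance by dynamic programming over $D$ in reverse topological order, which is possible since $D$ is acyclic), outputs the set $A$ of features labelling the weight-$1$ arcs of that path, and reports that no explanation exists exactly when $t_{1-b}$ is unreachable from $s$; all of this runs in time $\bigoh(\som{\obdd})$.

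The step that does the real work, and where the ordering assumption is essential, is the claim that the weight of a path counts the number of \emph{features} flipped rather than merely the number of arcs traversed against $e$. Because an \OBDD{} visits its features in one fixed order along every path, each feature labels at most one arc of any source-to-sink path, so there is no double counting: the weight-$1$ arcs of $P(e')$ are in bijection with the features on which $e'$ must disagree with $e$. This is precisely why the argument does not extend to general \BDD{}s, where a single feature may be tested several times on one path and flipping it on one arc need not be realizable consistently. Establishing this bijection, together with the two-directional equivalence between contrastive explanations and differently-classified examples from the first paragraph, yields both correctness and minimality of the returned set; completeness of the \OBDD{} is convenient here, as it guarantees that the same feature order governs all maximal paths.
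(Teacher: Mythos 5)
Your proof is correct, but there is nothing in the paper to compare it against: the paper does not prove this statement at all, it imports it wholesale from \cite{BarceloM0S20} (their Lemma~14), which is why the theorem carries that citation. Your argument --- reduce minimum-cardinality contrastive explanation to the minimum Hamming distance from $e$ to a differently classified example, then compute that distance as a $0/1$-weighted shortest-path problem from $s$ to $t_{1-b}$ in the underlying DAG --- is essentially the standard proof of the cited lemma, so you have in effect reconstructed the prior-work argument rather than found a new route. Two small refinements are worth making. First, the property you actually use is not the global ordering but only that every feature is tested at most once along any root-to-sink path (read-once-ness); the ordering implies this, but so does mere freeness, which is why the very same Lemma~14 of \cite{BarceloM0S20} is also invoked by the paper for \DT{}s in \Cref{th:DT-MLCEX} and in fact covers all FBDDs. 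Your remark that the argument breaks for general \BDD{}s is right, and repeated testing of a feature on one path is exactly the obstruction. Second, as you suspected, completeness of the \OBDD{} plays no role: your construction of $e'$ from a min-weight path sets every untested feature to agree with $e$, and this is well defined for any read-once diagram.
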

\fi
\iflong
The following auxiliary lemma is an analogue of \Cref{lem:dt-sm-testsol} for \OBDD{}s and provides polynomial-time algorithms for
testing whether a given subset of features $A$/partial
example $e'$ is a local abductive/global abductive/global
contrastive explanation for a given example $e$/class $c$ w.r.t. a
given \OBDD{} $\obdd$.
\begin{lemma}\label{lem:obdd-sm-testsol}
  Let $\obdd$ be an \OBDD{}, let $e$ be an example and let $c$ be a class.
  There are polynomial-time algorithms for the following problems:
  \begin{enumerate}[(1)]
  \item Decide whether a given subset $A \subseteq \feat(\obdd)$ of
    features is a local abductive explanation for $e$ w.r.t. $\obdd$.
  \item Decide whether a given partial example $e'$ is a global
    abductive explanation for $c$ w.r.t. $\obdd$.
  \item Decide whether a given partial example $e'$ is a global
    contrastive explanation for $c$ w.r.t. $\obdd$.
  \end{enumerate}
\end{lemma}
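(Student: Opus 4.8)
The plan is to follow the same strategy as the decision-tree version in \Cref{lem:dt-sm-testsol}, replacing the tree restriction $\TTT_{|e'}$ by a reachability computation on the diagram $D$. Given a partial example $\tau$, I would define the \emph{$\tau$-restricted reachable set} $R_\tau \subseteq V(D)$ as the set of vertices reachable from the source $s$ when, at each non-sink vertex $v$ with $f=\fBDD(v)$, one follows only the $\tau(f)$-neighbor whenever $\tau$ is defined on $f$, and follows both the $0$-neighbor and the $1$-neighbor otherwise. A single graph traversal (BFS or DFS) computes $R_\tau$, and in particular the set of sinks it contains, in time $\bigoh(\som{\obdd})$.

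The crucial step is to show that a sink $t_b$ lies in $R_\tau$ if and only if there is an example $e$ that agrees with $\tau$ and has $\obdd(e)=b$. Here I would use that every considered \OBDD{} is complete: every maximal path of $D$ tests exactly the same set of features, and since each feature of $\feat(\obdd)$ occurs on some path, this common set is all of $\feat(\obdd)$; moreover the fixed variable ordering prevents any feature from being tested twice along a path. Consequently every $s$-to-sink path in $D$ fixes the value of every feature and is followed by a unique example. For the forward direction, any $\tau$-consistent $s$-to-$t_b$ path witnessing $t_b \in R_\tau$ yields an example $e$ agreeing with $\tau$ with $\obdd(e)=b$; for the converse, the path $P_{\obdd}(e)$ of any $e$ agreeing with $\tau$ is $\tau$-consistent and ends in $t_{\obdd(e)}$, so its endpoint lies in $R_\tau$.

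With this correspondence the three tests become immediate. A partial example $\tau$ is a global abductive explanation for $c$ exactly when $t_{1-c}\notin R_\tau$ (only $t_c$ is reachable), and it is a global contrastive explanation for $c$ exactly when $t_c\notin R_\tau$; both conditions are read off directly from the computed sink set. For the local abductive test in (1), I would set $\tau$ to be the restriction $e_{|A}$ of $e$ to $A$ and $c=\obdd(e)$: then $A$ is a local abductive explanation for $e$ if and only if $e_{|A}$ is a global abductive explanation for $\obdd(e)$, reducing (1) to the global abductive test. Each test runs in time $\bigoh(\som{\obdd})$. I expect the main obstacle to be the correspondence claim, and specifically its realizability (backward) direction: without completeness a reachable sink could be reached along a path that omits some tested feature and would not necessarily correspond to a single achievable classification, so the argument genuinely relies on the completeness assumption guaranteed in the preliminaries.
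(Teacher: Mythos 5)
Your proposal is correct and takes essentially the same route as the paper's proof: your $\tau$-restricted reachable set $R_\tau$ is exactly the set of vertices reachable from $s$ in the paper's graph $D_{\tau}$ (obtained by deleting, at each vertex whose feature $\tau$ assigns, the arc to the $(1-\tau(f))$-neighbor), and your three tests are the same sink-reachability conditions the paper checks. The only cosmetic differences are that you handle (1) by reduction to the global abductive test rather than directly, and that you invoke completeness where orderedness alone (no feature tested twice on any path) already guarantees the realizability direction of your correspondence claim.
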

\begin{proof}
  Let $\obdd=(D,\fBDD)$ be an \OBDD{}, let $e$ be an example and let
  $c$ be a class.
  For a partial example $e'$, we denote by $D_{e'}$ the directed
  acylic graph obtained from $D$ after removing all arcs from a vertex
  $v$ whose feature $f=\fBDD(v)$ is assigned by $e'$ to its
  $1-e'(f)$-neighbor. For a subset $A \subseteq \feat(\obdd)$ of
  features we denote by $e_{|A}$ the partial example equal to the
  restriction of $e$ to $A$. 
  
  Towards showing (1), let $A \subseteq \feat(\obdd)$ be a subset of
  features. Then, $A$ is a
  local abductive explanation for $e$ w.r.t. $\obdd$ if and only if 
  $t_{\obdd(e)}$ is the only sink vertex of $D_{e_{|A}}$ that is
  reachable from $s$, which can clearly be checked in polynomial-time.

  Similarly, towards showing (2), observe that the partial example (assignment) $\tau : F \rightarrow
  \{0,1\}$ is a global abductive explanation for $c$ w.r.t. $\obdd$ if
  and only if $t_c$ is the only sink reachable from $s$ in $D_{\tau}$,
  which can clearly be checked in polynomial-time.

  Finally, towards showing (3), note that a partial example $\tau : F \rightarrow
  \{0,1\}$ is a global contrastive explanation for $c$ w.r.t. $\obdd$
  if and only if $t_c$ is not reachable from $s$ in $D_{\tau}$, which
  can clearly be verified in polynomial-time.
\end{proof}
\fi

\ifshort
The proof of the following theorem is very similar to the corresponding
result  for \DT{}s (\Cref{th:poly-DT-SM}).
\fi
\iflong
  Using dedicated algorithms for the inclusion-wise minimal variants of
  \LAEX{}, \GAEX{}, and \GCEX{} together with \Cref{th:OBDD-MLCEX}, we
  obtain the following result.
\fi
\ifshort \begin{theorem}[$\star$]\fi\iflong\begin{theorem}\fi\label{th:OBDD-SLGA-P}
  Let $\PP \in \{\LAEX, \LCEX, \GAEX, \GCEX\}$. 
  \OBDD-\SPP{} \ifshort and \OBDD-\MLCEX{} \fi can be solved in polynomial-time.
\end{theorem}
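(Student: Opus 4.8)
The plan is to mirror the proof of the analogous decision-tree result (\Cref{th:poly-DT-SM}), replacing the decision-tree tests of \Cref{lem:dt-sm-testsol} by their \OBDD{} counterparts in \Cref{lem:obdd-sm-testsol}. The contrastive cases \OBDD-\SMLCEX{} and \OBDD-\MLCEX{} are immediate: \Cref{th:OBDD-MLCEX} (equivalently \cite[Lemma 14]{BarceloM0S20}) already computes a cardinality-minimum local contrastive explanation in polynomial time, and a cardinality-minimum explanation is in particular subset-minimal. It therefore remains to handle \OBDD-\SMLAEX{}, \OBDD-\SMGAEX{}, and \OBDD-\SMGCEX{}, all of which I would solve by greedy deletion. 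The key structural observation is that each of the three underlying properties is upward closed (adding fixed features to an abductive explanation, or enlarging the domain of a global explanation, preserves validity), so a single deletion pass yields a subset-minimal solution.

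For \OBDD-\SMLAEX{} with input $(\obdd,e)$, I would start from $A=\feat(\obdd)$, which is trivially a local abductive explanation since fixing all features pins down $e$ uniquely. I then process the features one at a time: for each $f$ I invoke the polynomial-time test of \Cref{lem:obdd-sm-testsol}(1) to decide whether $A\setminus\{f\}$ is still a local abductive explanation for $e$; if so I replace $A$ by $A\setminus\{f\}$, otherwise I keep $f$. Because the property is upward closed, when the pass ends no single remaining feature can be removed, so $A$ is subset-minimal, and the whole procedure runs in polynomial time.

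For \OBDD-\SMGAEX{} with input $(\obdd,c)$, the initialization is the only delicate point. I would first test whether $t_c$ is reachable from $s$ in $D$; if not, no example is classified as $c$ and I correctly report that no global abductive explanation exists. Otherwise I pick any source-to-$t_c$ path $P$ and let $\tau$ be the partial assignment it induces, fixing the feature at each vertex of $P$ to the value of the arc of $P$ leaving it. Since at every vertex of $P$ the relevant feature is fixed, any example agreeing with $\tau$ is forced to follow $P$ and hence reaches $t_c$, so $\tau$ classifies every consistent example as $c$ and is a valid global abductive explanation. From here I greedily delete features from the domain of $\tau$, using \Cref{lem:obdd-sm-testsol}(2) to test validity after each tentative deletion, exactly as above. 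The \OBDD-\SMGCEX{} case is identical except that I start from a source-to-$t_{1-c}$ path, whose induced assignment forces class $1-c\neq c$ and is therefore a global contrastive explanation for $c$, and I test validity with \Cref{lem:obdd-sm-testsol}(3).

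The only nonroutine step is verifying the initializers for the global variants, namely that a source-to-sink path of an \OBDD{} induces a partial assignment forcing the corresponding sink; this follows because the fixed value at each vertex of $P$ dictates the next arc, so a consistent example cannot deviate from $P$. Everything else—that greedy deletion terminates with a subset-minimal explanation (by upward closure) and runs in polynomial time (by the tests of \Cref{lem:obdd-sm-testsol})—is routine and proceeds verbatim as in the decision-tree case.
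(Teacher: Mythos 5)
Your proposal is correct and follows essentially the same route as the paper's proof: the contrastive cases are dispatched via the known polynomial-time algorithm of \cite[Lemma 14]{BarceloM0S20}, and the three remaining problems are solved by greedy feature deletion with the tests of \Cref{lem:obdd-sm-testsol}, initializing the global variants from the assignment induced by a source-to-$t_c$ (resp.\ $t_{1-c}$) path. The only (immaterial) differences are that the paper restarts its deletion loop after each removal and picks a shortest path, whereas you argue a single pass suffices by upward closure and use an arbitrary path; both are valid.
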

\iflong\begin{proof}
  Note that the statement of the theorem for \OBDD{}-\SMLCEX{} follows
  immediately from \Cref{th:OBDD-MLCEX}. Therefore, it suffices to show
  the statement of the theorem for the remaining 3 problems.

  Let $(\obdd,e)$ be an instance of \OBDD{}-\SMLAEX{}.
  We start by setting $A=\feat(\obdd)$. Using \Cref{lem:obdd-sm-testsol}, we
  then test for any feature $f$ in $A$, whether $A\setminus \{f\}$
  is still a local abductive explanation for $e$ w.r.t. $\obdd$ in polynomial-time. If so,
  we repeat the process after setting $A$ to $A\setminus\{f\}$ and
  otherwise we do the same test for the next feature $f \in
  A$. Finally, if $A\setminus\{f\}$
  is not a local abductive explanation for every $f \in A$, then $A$ is an inclusion-wise
  minimal local abductive explanation and we can output $A$.

  The polynomial-time algorithm for a given instance $(\obdd,c)$ with
  $\obdd=(D,\fBDD)$ of
  \OBDD{}-\SMGAEX{} now works as follows. Let $P$ be any shortest path
  from $s$ to $t_c$ in $D$; if no such path exists we can correctly
  output that there is no global abductive explanation for $c$
  w.r.t. $\obdd$. Then, the assignment $\alpha$ defined by $P$
  is a global abductive explanation for $c$ w.r.t. $\obdd$. To obtain
  an inclusion-wise minimal solution, we do the following. Let $F=\feat(\alpha)$ be
  the set of features on which $\alpha$ is defined. We now test for
  every feature $f \in F$ whether the restriction
  $\alpha[F\setminus \{f\}]$ of $\alpha$ to $F\setminus
  \{f\}$ is a global abductive explanation for $c$ w.r.t. $\TTT$. This
  can clearly be achieved in polynomial-time with the help of
  \Cref{lem:obdd-sm-testsol}. If this is true for any feature $f \in F$,
  then we repeat the process for $\alpha[F\setminus \{f\}]$,
  otherwise we output $\alpha$. A very similar algorithm now
  works for the \DT{}-\SMGCEX{} problem, i.e., instead of starting
  with a shortest path from $s$ to $t_c$, we start with a shortest
  path from $s$ to $t_{c'}$ for some $c'\neq c$.
\end{proof}\fi

\iflong
\begin{lemma}\label{lem:OBDDEO-to-OBDDE}
  Let $\obddeo$ be an \OBDDEO{} with $\ell=|\obddeo|$ and $m$ is the
  maximum size of any \OBDD{} in $\obddeo$. There is an algorithm
  that in time $\bigoh(m^\ell)$ computes an \OBDDO{} $\obdd$ of size
  at most $m^\ell$ such that $\obdd(e)=\obddeo(e)$ for every example $e$.
\end{lemma}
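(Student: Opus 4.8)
The plan is to perform a synchronized product construction that exploits the fact that all \OBDD{}s in $\obddeo$ respect one common feature ordering $<$. Write $\obddeo=\{\obdd_1,\dots,\obdd_\ell\}$ with $\obdd_i=(D_i,\fBDD_i)$, source $s_i$, and sinks $t_0^i,t_1^i$. The states of the combined diagram $\obdd$ will be tuples $(v_1,\dots,v_\ell)$ with $v_i\in V(D_i)$; intuitively such a tuple records, for an example $e$, the vertex reached in each $D_i$ after reading the features of $e$ in the order $<$. Since each coordinate has at most $\som{\obdd_i}\le m$ choices, there are at most $m^\ell$ tuples, which yields the claimed size bound; the source of $\obdd$ is $(s_1,\dots,s_\ell)$.

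To build $\obdd$ I would run a breadth-first exploration from $(s_1,\dots,s_\ell)$. At a tuple $v=(v_1,\dots,v_\ell)$ having at least one non-sink coordinate, let $f$ be the $<$-smallest feature among $\{\fBDD_i(v_i): v_i\text{ non-sink}\}$ and label $v$ with $f$. Its $b$-neighbor (for $b\in\{0,1\}$) is obtained from $v$ by advancing every coordinate $i$ with $\fBDD_i(v_i)=f$ to its $b$-neighbor in $D_i$, leaving all other coordinates unchanged. Because each $\obdd_i$ individually respects $<$ and is complete, every coordinate progresses monotonically through $\feat(\obdd_i)$, so the queried features strictly increase along any path of $\obdd$; hence $\obdd$ is an $\OBDDO$ with ordering $<$. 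A tuple all of whose coordinates are sinks, say $v_i=t_{b_i}^i$, becomes a sink of $\obdd$, and I would merge all such tuples into a single $t_0$ or $t_1$ according to whether the majority of $b_1,\dots,b_\ell$ is $0$ or $1$. The exploration visits at most $m^\ell$ tuples and spends $\bigoh(\ell)$ time per tuple, giving the stated $\bigoh(m^\ell)$ bound up to a polynomial factor in $\ell$. (Completeness of $\obdd$, if desired, can be restored in polynomial time by the cited transformation.)

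For correctness I would prove, by a straightforward invariant, that for every example $e$ the unique path $P_{\obdd}(e)$ simulates the $\ell$ paths $P_{\obdd_i}(e)$ simultaneously: after $\obdd$ has read all relevant features up to some $g$, the $i$-th coordinate is exactly the vertex of $D_i$ that $e$ reaches after reading the features of $\feat(\obdd_i)$ that are $\le g$. Consequently $P_{\obdd}(e)$ ends in the sink tuple $(t_{\obdd_1(e)}^1,\dots,t_{\obdd_\ell(e)}^\ell)$, whose label is by construction the majority of the $\obdd_i(e)$, i.e.\ $\obddeo(e)$, so $\obdd(e)=\obddeo(e)$. The main obstacle is the book-keeping in the synchronization step: the \OBDD{}s may query different feature sets, so a naive level-by-level product is ill defined, and one must advance only the coordinates querying the current minimum feature (letting coordinates already at a sink, or querying a later feature, wait) in order both to keep the diagram ordered and to keep the number of states within $m^\ell$, rather than blowing it up by first completing every \OBDD{} over the union of all features.
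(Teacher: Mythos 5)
Your proposal is correct and follows essentially the same construction as the paper: a product automaton whose states are tuples of vertices, labeled by the $<$-smallest feature queried among non-sink coordinates, with transitions advancing exactly the coordinates querying that feature, and all-sink tuples merged into $t_0$/$t_1$ by majority vote. The only cosmetic difference is that you build the reachable part by BFS whereas the paper constructs all $m^\ell$ tuples and then prunes unreachable ones; both yield the same diagram, size bound, and (up to the polynomial-in-$\ell$ factors the paper also glosses over) the same running time.
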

\begin{proof}
  We construct the \OBDD{} $\obdd=(D,\fBDD)$ as follows. Let $D'$ be
  the directed acyclic
  graph obtained as follows. $D'$ has one vertex $n_{p}$ for every tuple $p=(v_1,\dotsc,v_\ell) \in V(D_1) \times \dotsb
  \times V(D_\ell)$. For a tuple $p=(v_1,\dotsc,v_\ell) \in V(D_1)
  \times \dotsb \times V(D_\ell)$, we denote by $\lambda(p)$ the
  smallest feature in $\SB \fBDD_i(v_i) \SM 1 \leq i \leq \ell\SE$
  w.r.t. the ordering $<_\obdde$ if $\SB \fBDD_i(v_i)
  \SM 1 \leq i \leq \ell\SE\neq \emptyset$. Moreover, otherwise, i.e., if $\SB \fBDD_i(v_i)
  \SM 1 \leq i \leq \ell\SE=\emptyset$, then every vertex $v_i$ is a
  sink vertex in $D_i$, i.e., it either corresponds to $t_0$ or to
  $t_1$, in which case we let $\lambda(p)=1$ if the majority of the
  vertices $v_1,\dotsc,v_\ell$ correspond to $t_1$ in $D_i$ and
  otherwise we set $\lambda(p)=0$.
  Let $p$ be such that
  $\lambda(p)\neq \emptyset$ and let $b \in \{0,1\}$. The $b$-successor of $p$,
  denoted by $S_b(p)$, is the tuple $(v_1',\dotsc, v_k')$, where
  $v_i'=v_i$ if $\fBDD_i(v_i)\neq \lambda(p)$ and $v_i'$ is the
  $b$-neighbor of $v_i$ in $D_i$, otherwise. Now, every vertex $n_p$
  in $D'$ with $\lambda(p)\neq \emptyset$ has $n_{S_0(p)}$ as its
  $0$-neighbor and $n_{S_1(p)}$ as its $1$-neighbor and this completes the
  description of $D'$. Note that $D'$ is acylic because it only
  contains arcs from a vertex $n_p$ with $\lambda(p)\notin \{0,1\}$
  to a vertex $n_{(S_b(p)}$ and either $\lambda(n_p)<_\obdde
  \lambda(S_b(p))$ or $\lambda(S_b(p)) \in \{0,1\}$.

  Then, $D$ is the
  directed acyclic graph with source vertex $n_{(s_0,\dotsc,s_\ell)}$
  and sink vertices $t_0$ and $t_1$ that is obtained from $D'$ after applying the
  following operations:
  \begin{itemize}
  \item Remove all vertices from $D'$ that are not reachable from the
    vertex $n_{(s_0,\dotsc,s_\ell)}$, where $s_i$ is the root of
    $D_i$.
  \item Identify the vertices in $\SB n_{p} \SM \lambda(p)=b \SE$ with
    the new vertex $t_b$ for every $b \in \{0,1\}$.
  \end{itemize}
  Finally, we set $\fBDD(n_p)=\lambda(p)$ for every $p$ with
  $\lambda(p)\notin \{0,1\}$.
  This completes the construction of \obdd, which can clearly be
  achieved in time $\bigoh(m^\ell)$ and has size at most $m^\ell$.
  It
  is now straightforward to verify that $\obddeo(e)=\obdd(e)$ for
  every example, as required.
\end{proof}
\fi

The next theorem uses our result that the considered problems are in
polynomial-time for \OBDD{}s \ifshort(\Cref{th:OBDD-SLGA-P}) \fi
together with an \XP{}-algorithm that
transforms any \OBDDEO{} into an equivalent \OBDD{}.

\ifshort \begin{theorem}[$\star$]\fi\iflong\begin{theorem}\fi\label{th:OBDDEO-XP-ens}
    Let  $\PP \in \{\LAEX, \LCEX, \GAEX, \GCEX\}$. 
    \OBDDEO-\SPP$(\enssize{})$ and
    \OBDDEO-\MLCEX$(\enssize{})$ are in 
    \XP{}.
\end{theorem}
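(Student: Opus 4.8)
The plan is to flatten the ordered ensemble into a single ordered \OBDD{} and then reuse the polynomial-time algorithms we already have for a single \OBDD{}. Given an \OBDDEO{} $\obddeo$ with $\ell = |\obddeo| = \enssize$ and with $m$ the maximum size of any \OBDD{} in $\obddeo$, the first step is to invoke \Cref{lem:OBDDEO-to-OBDDE} to compute, in time $\bigoh(m^\ell)$, a single ordered \OBDD{} $\obdd$ of size at most $m^\ell$ with $\obdd(e) = \obddeo(e)$ for every example $e$. This is the crucial step, and it is exactly where the shared variable order is used: the product construction of \Cref{lem:OBDDEO-to-OBDDE} stays within size $m^\ell$ only because all ensemble members respect the same ordering (which is precisely why the analogous statement fails for general \OBDDE{}s).

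Since $\obdd$ and $\obddeo$ realize the same Boolean function over the same set of features, they have exactly the same explanations: a feature set $A$ is a local abductive (resp. contrastive) explanation for $e$ w.r.t. $\obddeo$ if and only if it is one w.r.t. $\obdd$, and a partial assignment $\tau$ is a global abductive (resp. contrastive) explanation for a class $c$ w.r.t. $\obddeo$ if and only if it is one w.r.t. $\obdd$. Consequently the subset-minimal and the cardinality-minimal solutions coincide for the two representations, so it suffices to solve the required problem on $\obdd$.

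The final step is to run the polynomial-time algorithms for a single \OBDD{}: \Cref{th:OBDD-SLGA-P} settles \OBDD-\SPP{} for each $\PP \in \{\LAEX, \LCEX, \GAEX, \GCEX\}$, and \Cref{th:OBDD-MLCEX} settles \OBDD-\MLCEX{}. Applied to $\obdd$, these take time polynomial in $\som{\obdd} \le m^\ell$. Because $m$ is at most the input size $n$, the overall running time is $\bigoh(m^\ell) + (m^\ell)^{\bigoh(1)} = n^{\bigoh(\ell)}$, which is the desired \XP{} bound.

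The reduction itself is routine once \Cref{lem:OBDDEO-to-OBDDE} is in hand; the substantive work is the flattening lemma, which is already established. The only point that needs a brief check is that the flattening preserves the feature set, so that minimality is not affected by spuriously introduced or dropped features; this holds because the product construction labels its nodes only with features already occurring in the ensemble, and any feature never inspected by $\obdd$ is a don't-care that no subset-minimal explanation would include.
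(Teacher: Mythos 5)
Your proposal is correct and follows essentially the same route as the paper's proof: invoke \Cref{lem:OBDDEO-to-OBDDE} to collapse the ordered ensemble into a single equivalent \OBDD{} of size at most $m^\ell$, then apply \Cref{th:OBDD-SLGA-P} and \Cref{th:OBDD-MLCEX} to that \OBDD{}, yielding an overall $n^{\bigoh(\ell)}$ running time. Your additional remarks on why explanations and their minimality transfer between the two representations are details the paper leaves implicit, but they do not change the argument.
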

\iflong\begin{proof}
    Let $\obdde=\{\obdd^1,\dotsc,\obdd^\ell\}$ with variable ordering
    $<_\obdde$ and $\obdd_i=(D_i,\fBDD_i)$ be the \OBDDEO{} given as
    an input to any of the five problems stated above.

    We use \Cref{lem:OBDDEO-to-OBDDE} to construct an \OBDD{} $\obdd$
    that is not too large (of size at most
    $m^{\ell}$, where $m=(\max_{i=1}^\ell|V(D_i)|)$) and that is equivalent to
    $\obdde$ in the sense that $\obdde(e)=\obdd(e)$ for every
    example in time at most $\bigoh(m^{\ell})$. Since all five
    problems can be solved in polynomial-time on $\obdd$ (because of
    \Cref{th:OBDD-MLCEX} and \Cref{th:OBDD-SLGA-P}) this completes the
    proof of the theorem.
\end{proof}\fi

\section{Hardness Results}

\newcommand{\HOM}{\textsc{HOM}}
\newcommand{\kHOM}{\textsc{p-HOM}}

In this section, we provide our algorithmic lower
bounds. We start by showing a close connection between the complexity
of all of our explanation problems to the following two problems. As
we will see the hardness of finding explanations comes from the
hardness of deciding whether or not a given model classifies all
examples in the same manner. More specifically, from the \HOM{}
problem defined below, which asks whether a given model has an example that is classified
differently from the all-zero example, i.e., the example being $0$ on
every feature. We also need the \kHOM{} problem, which is a parameterized
version of \HOM{} that we use to show parameterized hardness results
for deciding the existence of local contrastive explanations.

In the following, let $\MM$ be a model type.
\pbDef{$\MM{}$-\textsc{Homogeneous (\HOM{}})}{A model $M \in \MM{}$.}{
Is there an example $e$ such that $M(e) \neq M(e_0)$, where $e_0$ is the
all-zero example?}

\pbDef{$\MM$-p-\textsc{Homogeneous (\kHOM{}})}{A model $M \in \MM{}$ and
  integer $k$.}{
Is there an example $e$ that sets at most $k$ features to $1$ such that $M(e) \neq M(e_0)$, where $e_0$ is the all-zero example?}

The following lemma now shows the connection between \HOM{} and the
considered explanation problems.

\begin{lemma}\label{lem:HOM-reduction}
  Let $M \in \MM$ be a model, $e_0$ be the all-zero example, and let $c=M(e_0)$.
  The following problems are equivalent:  
  \begin{enumerate}[(1)]
  \item $M$ is a no-instance of \MM{}-\HOM{}.
  \item The empty set is a solution for the instance $(M,e_0)$ of
    \MM{}-\SMLAEX{}.
  \item $(M,e_0)$ is a no-instance of \MM{}-\SMLCEX{}.    
  \item The empty set is a solution for the instance $(M,c)$ of \MM{}-\SMGAEX{}.
  \item The empty set is a solution for the instance $(M,1-c)$ of
    \MM{}-\SMGCEX{}.
  \item $(M,e_0,0)$ is a yes-instance of \MM{}-\MLAEX{}.
  \item $(M,e_0)$ is a no-instance of \MM{}-\MLCEX{}.    
  \item $(M,c,0)$ is a yes-instance of \MM{}-\MGAEX{}.
  \item $(M,1-c,0)$ is a yes-instance of \MM{}-\MGCEX{}.
  \end{enumerate}
\end{lemma}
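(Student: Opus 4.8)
The plan is to reduce the entire equivalence to a single pivot statement $(\ast)$: the model $M$ is \emph{constant}, i.e.\ $M(e)=c$ for every example $e\in E(\feat(M))$ (recall $c=M(e_0)$). The lemma then follows from the chain of equivalences $(i)\iff(\ast)$ for each $i\in\{1,\dots,9\}$, so it suffices to unfold each definition against $(\ast)$. The single observation that makes every case collapse to the same statement is that agreeing with the empty feature set (in the local explanations) or with the empty partial assignment (in the global ones) imposes \emph{no} constraint on an example; hence the empty explanation is valid precisely when $M$ behaves identically on all of $E(\feat(M))$.

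First I would dispatch (1): by definition $M$ is a no-instance of $\MM{}$-\HOM{} iff there is no example $e$ with $M(e)\neq M(e_0)=c$, which is verbatim $(\ast)$. Next I handle the three subset-minimal conditions (2), (4), (5). For (2), the empty set is a local abductive explanation for $e_0$ iff every $e'$ agreeing with $e_0$ on $\emptyset$ — that is, every example $e'$ — satisfies $M(e')=M(e_0)=c$, which is $(\ast)$; and $\emptyset$ is automatically subset-minimal since it has no proper subset, so ``being a solution'' reduces to ``being an explanation''. The same reasoning yields (4) (the empty partial assignment is a global abductive explanation for $c$ iff $M(e)=c$ for all $e$) and (5) (the empty partial assignment is a global contrastive explanation for $1-c$ iff $M(e)\neq 1-c$, i.e.\ $M(e)=c$, for all $e$, using that the classification is binary).

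For the contrastive no-instance conditions (3) and (7), I would use that a local contrastive explanation for $e_0$ is a feature set $A$ admitting an example $e'$ that differs from $e_0$ only on $A$ with $M(e')\neq M(e_0)$; such an $A$ exists for \emph{some} size iff such an $e'$ exists at all, i.e.\ iff $(\ast)$ fails. A subset-minimal explanation exists iff any explanation exists (shrink a witness to a minimal one), and a cardinality-bounded one exists for some $k$ iff some explanation exists, so both (3) and (7) are no-instances exactly when no such $e'$ exists, i.e.\ exactly under $(\ast)$. Finally, the cardinality conditions (6), (8), (9) with threshold $0$ are immediate, since an explanation of size at most $0$ is the empty one: (6) reduces to (2), (8) to (4), and (9) to (5). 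The argument is purely bookkeeping; the only points demanding care are keeping the two minimality notions and the decision-versus-finding framings aligned, and observing that the flip to $1-c$ in the contrastive cases is exactly what turns ``$M(e)\neq 1-c$'' into ``$M(e)=c$''. I expect no genuine obstacle here — the content is entirely in recognizing that every clause is a restatement of $M$ being constant.
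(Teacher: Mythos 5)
Your proposal is correct and matches the paper's proof exactly: the paper also establishes the lemma by observing that all nine statements are equivalent to the single pivot statement that $M(e)=M(e_0)=c$ for every example $e$. You simply carry out the definitional unfolding that the paper leaves as "easy to verify," and your handling of the minor subtleties (vacuous agreement with the empty set, automatic subset-minimality of $\emptyset$, and the $1-c$ flip in the contrastive cases) is sound.
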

\begin{proof}
  It is easy to verify that all of the statements (1)--(9) are
  equivalent to the following statement (and therefore equivalent to
  each other): $M(e)=M(e_0)=c$ for every example $e$. 
\end{proof}
While \Cref{lem:HOM-reduction} is sufficient for most of our hardness
results, we also need the following lemma to show certain
parameterized hardness results for deciding the existence of local
contrastive explanations.
\ifshort \begin{lemma}[$\star$]\fi
\iflong\begin{lemma}\fi\label{lem:kHOM-reduction}
    Let $M \in \MM$ be a model and let $e_0$ be the all-zero example.
    The following problems are equivalent:  
    \begin{enumerate}[(1)]
    \item $(M,k)$ is a yes-instance of \MM{}-\kHOM{}.
    \item $(M,e_0,k)$ is a yes-instance of \MM{}-\MLCEX{}.
  \end{enumerate}
\end{lemma}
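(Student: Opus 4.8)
The plan is to prove the equivalence by directly matching up the two ways in which an example can deviate from the all-zero example $e_0$. The central observation I would rely on is that, since $e_0$ assigns $0$ to every feature, an arbitrary example $e'$ differs from $e_0$ on exactly its \emph{support} $\{f : e'(f)=1\}$, i.e.\ the set of features it sets to $1$. This single fact is what links the phrase ``sets at most $k$ features to $1$'' in the definition of \MM{}-\kHOM{} with the phrase ``$e'$ differs from $e$ only on the features in $A$'' in the definition of a local contrastive explanation (recall that an \LCEX{} for $e$ w.r.t.\ $M$ is a feature set $A$ for which there exists $e'$ with $M(e')\neq M(e)$ differing from $e$ only on $A$). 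With $e=e_0$ fixed, both notions are measuring the same thing, namely how many features of the witness are set to $1$.

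For the direction $(1)\Rightarrow(2)$, I would start from a \kHOM{} witness: an example $e$ with at most $k$ ones and $M(e)\neq M(e_0)$. I would set $A$ to be the support of $e$, so $|A|\le k$. Since $e_0$ is all-zero, $e$ differs from $e_0$ precisely on $A$, and $M(e)\neq M(e_0)$; hence $A$ is a local contrastive explanation for $e_0$ of size at most $k$, witnessing that $(M,e_0,k)$ is a yes-instance of \MM{}-\MLCEX{}.

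For $(2)\Rightarrow(1)$, I would start from an \LCEX{} $A$ with $|A|\le k$ together with its witness $e'$, where $M(e')\neq M(e_0)$ and $e'$ differs from $e_0$ only on $A$. Because $e'$ agrees with $e_0$ (value $0$) off $A$, the support of $e'$ is contained in $A$, so $e'$ sets at most $|A|\le k$ features to $1$; thus $e'$ is exactly a \kHOM{} witness and $(M,k)$ is a yes-instance. The only subtlety worth flagging is the asymmetry built into the \LCEX{} definition: the witness $e'$ is only required to differ from $e_0$ on a \emph{subset} of $A$, not on all of $A$. This is precisely why I take $A$ to be the support in the forward direction (to keep the cardinality bound tight) and why, in the backward direction, the support-containment inequality $|\mathrm{supp}(e')|\le |A|\le k$ is all that is needed. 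Beyond this bookkeeping, the argument is immediate, so I do not anticipate any real obstacle.
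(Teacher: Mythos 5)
Your proof is correct and follows the same route as the paper, which simply observes that both statements are equivalent to the existence of an example setting at most $k$ features to $1$ that is classified differently from $e_0$. Your write-up just spells out in detail (via the support of the witness) exactly the identification the paper's one-line proof relies on.
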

\iflong\begin{proof}
    The lemma follows because both statements are equivalent to
    the following statement:
    There is an example $e$ that sets at most $k$ features to $1$ such
    that $M(e)\neq M(e_0)$. 
\end{proof}
\fi

We will often reduce from the following problem, which is well-known
to be \NPh{} and also \Wh{1} parameterized by $k$.
\pbDef{{\sc Multicolored Clique} (\MCC)}
{A graph $G$ with a proper $k$-coloring of $V(G)$.}
{Is there a clique of size $k$ in $G$?}

The following lemma provides a unified way to show hardness
results for ensembles for practically all of our model types in the
case that we allow arbitrary many (constant-size) ensemble elements,
i.e., we use it to show \Cref{th:RF-paraNP,th:DSE-paraNP,th:OBDDEO-paraNP-hws}.

\ifshort \begin{lemma}[$\star$]\fi
\iflong\begin{lemma}\fi\label{lem:MME-kHOM-W1}
  Let \MM{} be a class of models such that there are 
  models $M^0\in \MM$, 
  $M^1_{f} \in \MM$ and $M^2_{f_1,f_2}\in \MM$ for features
  $f$, $f_1$, and $f_2$ of size at most $d$ such that:
  \begin{itemize}
  \item $M^0$ classifies every example negatively.
  \item $M^1_{f}$ classifies an example $e$ positively iff $e(f)=1$.
  \item $M^2_{f_1,f_2}$  classifies an example $e$ positively iff $e(f_1)=0$ or $e(f_2)=0$.
  \end{itemize}
  \MME-\kHOM{} is \Wh{1} parameterized by $k$ even if the size of
  each ensemble element is at most $d$ and
  \MME-\HOM{} is \NPh{} even if the size of
  each ensemble element is at most $d$.
\end{lemma}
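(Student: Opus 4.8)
The plan is to give a single polynomial-time, parameter-preserving reduction from \MCC{} that settles both statements at once. Since \MCC{} is \NPh{} and \Wh{1} parameterized by the clique size $k$, a reduction mapping a $k$-clique instance to a \kHOM{}/\HOM{} instance that keeps the same $k$ will simultaneously yield the \W{1}-hardness of \MME-\kHOM{} and the \NP-hardness of \MME-\HOM{}. Let $(G,k)$ be a \MCC{} instance with $G$ properly $k$-colored; write $n=|V(G)|$ and let $\bar m$ be the number of non-adjacent unordered pairs of distinct vertices, noting that same-colored pairs are always non-adjacent (if $\bar m=0$ the instance is decidable directly, so assume $\bar m\ge 1$). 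Introduce one feature $x_v$ per vertex $v$, so an example $e$ encodes the set $S=\SB v \SM e(x_v)=1\SE$ and ``at most $k$ ones'' means $|S|\le k$.

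I would assemble $\EEE$ from three kinds of members, each of size at most $d$: (i) for every non-edge $\{u,v\}$, include $n+1$ copies of $M^2_{x_u,x_v}$ (positive unless both endpoints are selected); (ii) for every vertex $v$, include one copy of $M^1_{x_v}$ (positive iff $v$ is selected); and (iii) include $c_0$ copies of $M^0$ (always negative), with $c_0$ chosen below. The key counting step is that, writing $q(S)$ for the number of non-edges with both endpoints in $S$, the number of positively voting members equals
$$P(S) = (n+1)\bigl(\bar m - q(S)\bigr) + |S|,$$
while the total number of members is $N=(n+1)\bar m + n + c_0$. I would pick $c_0$ so that the majority threshold is $\tau := (n+1)\bar m + k$, i.e. $\lfloor N/2\rfloor+1=\tau$; taking $N=2\tau-2$ gives $c_0=(n+1)\bar m + 2k-2-n \ge 2k-1>0$ (using $\bar m\ge 1$), so the ensemble is well-defined and of polynomial size.

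Then $\EEE(e)=1$ iff $P(S)\ge\tau$ iff $|S|-k \ge (n+1)\,q(S)$. If $q(S)\ge 1$ the right-hand side is at least $n+1>n\ge |S|-k$, a contradiction, forcing $q(S)=0$, i.e. $S$ is a clique; and a clique in a properly $k$-colored graph has at most one vertex per color, so $|S|\le k$, whence the inequality forces $|S|=k$. Thus $\EEE(e)=1$ exactly when $S$ is a $k$-clique, while $\EEE(e_0)=0$ because $P(\emptyset)=(n+1)\bar m<\tau$. Consequently $G$ has a $k$-clique iff some example is classified differently from $e_0$ (giving \HOM{}); and since every positive example is a $k$-clique setting exactly $k$ features, this happens iff some example using at most $k$ ones is so classified (giving \kHOM{} with the same parameter $k$).

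The reduction is polynomial, preserves $k$, and keeps every member of size at most $d$, establishing both claims. One technical wrinkle is that the construction uses repeated members (copies of $M^2$ and $M^0$); I would read the majority vote over the multiset of member classifications, as is standard for ensembles such as random forests, or make the copies formally distinct if set-semantics is required. The step I expect to be the main obstacle is the threshold calibration: one must ensure a single internal non-edge ``outweighs'' any surplus from selecting extra vertices, so that \emph{large}, non-clique selections can never cross $\tau$. This is precisely why each $M^2_{x_u,x_v}$ is duplicated $n+1$ times (making $(n+1)q(S)$ dominate $|S|-k$ whenever $q(S)\ge1$), and it is what makes the reduction correct for the unbounded \HOM{} problem and not merely for the budget-limited \kHOM{}.
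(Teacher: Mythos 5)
Your reduction is essentially the paper's: the paper also reduces from \MCC{}, using one $M^2_{f_u,f_v}$ per non-edge, one $M^1_{f_v}$ per vertex, and a padding block of copies of $M^0$ calibrated so that the majority threshold equals the number of non-edges plus $k$; positivity of an example then amounts to $|S|-q(S)\geq k$, where $S$ is the selected vertex set and $q(S)$ counts non-edges inside $S$. Your one substantive deviation is the $(n+1)$-fold duplication of each non-edge gadget, which forces $q(S)=0$ and $|S|=k$ for \emph{every} positive example, so that ``positive iff $S$ is a $k$-clique'' holds with no budget assumption. That is a legitimate strengthening: the paper's written argument only treats examples setting at most $k$ features to $1$ (which suffices for \kHOM{}), and for the unrestricted \HOM{} question one must additionally observe that $|S|-q(S)\geq k$ already implies the existence of a $k$-clique (delete one endpoint of each non-edge inside $S$); your weighting makes that extra observation unnecessary.

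However, your threshold calibration has a genuine bug. You set $N=2\tau-2$, an even ensemble size, but under the paper's semantics an ensemble classifies an example as $b$ only when at least $\lfloor N/2\rfloor+1$ members vote $b$; with even $N$, a tie ($P(S)=N/2$) leaves the example classified as neither $0$ nor $1$. Ties do occur in your construction: $P(S)=N/2$ exactly when $|S|-(n+1)q(S)=k-1$, so every clique of size $k-1$ is tied --- for instance every single vertex when $k=2$, and the all-zero example $e_0$ itself when $k=1$, in which case $\MMM(e_0)$ is not even defined. Hence the object you build is not a well-defined model in the paper's sense, and depending on how one reads ``$\MMM(e)\neq\MMM(e_0)$'' for unclassified examples, the reduction is either ill-posed or outright incorrect (a graph with a $(k-1)$-clique but no $k$-clique would be mapped to a yes-instance). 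The repair is one line --- add one further copy of $M^0$ so that $N=2\tau-1$ is odd, which is exactly how the paper sidesteps the issue (its ensemble has odd size $2(\binom{n}{2}-m)+2k-1$) --- but as written the calibration step fails.
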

\iflong\begin{proof}\fi
\ifshort\begin{proof}[Proof Sketch]\fi  
  We provide a parameterized reduction from the \textsc{Multicolored
    Clique} (\MCC) problem, which is also a polynomial-time reduction. Given an instance $(G,k)$
  of the \MCC{} problem with $k$-partition $(V_1,\dotsc,V_k)$ of
  $V(G)$, we will construct an equivalent instance
  $(\MMM,k)$ of \MME-\kHOM{} in polynomial-time as follows.
  $\MMM$ uses one binary feature $f_v$ for every $v \in V(G)$. Let $<_V$
  be an arbitrary ordering of $V(G)$. We denote by $n$ and $m$ the
  number of vertices and edges of the graph $G$, respectively.

  $\MMM$ contains the following ensemble elements:
  \begin{itemize}
  \item For every non-edge $uv \notin E(G)$ with $u<_V v$, we add
    the model $M^2_{f_u,f_v}$ to $\MMM$.
  \item For every vertex $v \in V(G)$, we add the model $M^1_{f_v}$ to
    $\MMM$.
  \item We add $(\binom{n}{2}-m)-n+2k-1$ models $M^0$ to $\MMM$.
  \end{itemize}
  Clearly, the reduction works in polynomial-time and preserves the
  parameter and it only remains to show that $G$ has a $k$-clique if
  and only if there is an example $e$ such that $\MMM(e)\neq
  \MMM(e_0)$ that sets at most $k$ features to $1$. 
  \iflong
    Note first that
  $\MMM(e_0)=0$ because $M(e_0)=1$ holds only for
  $(\binom{n}{2}-m)$ out of the $2(\binom{n}{2}-m)+2k-1$
  models in $\MMM$.
    
  Towards showing the forward direction, let $C=\{v_1,\dotsc,v_k\}$ be
  a $k$-clique of $G$ with $v_i \in V_i$ for every $i \in [k]$. We
  claim that the example $e$ that sets all features in $\SB f_v \SM v
  \in C \SE$ to $1$ and all other features to $0$ satisfies
  $\MMM(e)\neq \MMM(e_0)$. Because $C$ is a clique in $G$, we obtain
  that $M(e)=1$ for all of the $(\binom{n}{2}-m)$ copies $M$ of
  $M^2_{f_u,f_v}$ in $\MMM$. Moreover, because $e$ sets exactly $k$
  features to $1$, it holds that $M(e)=1$ for exactly $k$ copies $M$ of
  $M^1_{f_v}$ in $\MMM$. Therefore, $e$ is classified positively by
  exactly $(\binom{n}{2}-m)+k$ models in $\MMM$ and $e$ is
  classified negatively by exactly
  $n-k+(\binom{n}{2}-m)-n+2k-1=(\binom{n}{2}-m)+k-1$ models
  in $\MMM$, which shows that $\MMM(e)=1\neq \MMM(e_0)$.
  
  Towards showing the reverse direction, let $e$ be an example that
  sets at most $k$ features to $1$ such that $\MMM(e)=1\neq
  \MMM(e_0)$. First note that because $M(e)=0$ for every of the
  $(\binom{n}{2}-m)-n+2k-1$ copies $M$ of $M^0$ in $\MMM$, it has
  to hold that $M(e)=1$ for at least $(\binom{n}{2}-m)+k$ of the
  copies $M$ of either $M^1_{f_v}$ or $M^2_{f_u,f_v}$ in $\MMM$. Since
  $e$ sets only at most $k$ features to $1$, we obtain that $M(e)=1$
  for at most $k$ of the copies $M$ of $M^1_{f_v}$ in
  $\MMM$. Therefore, $M(e)=1$ for all copies of $M$ of $M^2_{f_u,f_v}$
  in $\MMM$ and moreover $M(e)=1$ for exactly $k$ copies $M$ of
  $M^1_{f_v}$ in $\MMM$. But then the set $\SB v \SM e(f_v)=1\SE$ must
  be a $k$-clique of $G$.\fi
\end{proof}

\subsection{DTs and their Ensembles}

Here, we provide our algorithmic lower bounds for \DT{}s and their
ensembles.
We say that a \DT{} $\TTT$ is \emph{ordered} if there is an ordering
$<$ of the features
in $\feat(\TTT)$ such that the ordering of the features on every root-to-leaf path of
$\TTT$ agrees with $<$. We need the following auxiliary lemma to
simplify the descriptions of our reductions.
\begin{lemma}\label{lem:DT-construction}
  Let $E \subseteq E(F)$ be a set of examples defined on features in $F$.
  An ordered \DT{} $\TTT_E$ of size at
  most $2|E||F| + 1$ such that $\TTT_E(e)=1$ if and only if $e \in E$ can
  be constructed in time $\bigoh(|E||F|)$. 
\end{lemma}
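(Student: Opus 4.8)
The plan is to build $\TTT_E$ as an ordered trie over the examples in $E$. First I would fix an arbitrary ordering $f_1<f_2<\cdots<f_{|F|}$ of the features in $\feat(\TTT_E)=F$ and construct the tree by a recursive procedure $\textsf{Build}(S,i)$ that takes the set $S\subseteq E$ of examples still consistent with the current root-to-node path together with the index $i$ of the next feature to be queried. If $S=\emptyset$, the procedure returns a single leaf labelled $0$; if $i>|F|$ (so all features have been queried and, since the examples in $E$ are total assignments over $F$, the surviving set $S$ is a singleton), it returns a single leaf labelled $1$; otherwise it creates an inner node labelled $f_i$ whose $0$-child is $\textsf{Build}(S_0,i+1)$ and whose $1$-child is $\textsf{Build}(S_1,i+1)$, where $S_b=\{e\in S : e(f_i)=b\}$. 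The root call is $\textsf{Build}(E,1)$. Because every path queries the features in increasing index order, the resulting tree is ordered by construction.

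For correctness I would observe that the invariant maintained along any path is that $S$ equals exactly the set of examples of $E$ that agree with the partial assignment read off the path so far. Following the path dictated by a total example $e$, we reach a $0$-leaf precisely when this set becomes empty, i.e.\ when no example of $E$ shares $e$'s prefix, and we reach a $1$-leaf precisely when all $|F|$ features have been queried and the surviving singleton is $e$ itself. Hence $\TTT_E(e)=1$ if and only if $e\in E$, as required.

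The only quantitative point that needs care is the size bound, and this is where I expect the main (though modest) work to lie. I would prove by induction on the number $r$ of features still to be queried that a call $\textsf{Build}(S,\cdot)$ with $S\neq\emptyset$ produces at most $2|S|r+1$ leaves. The base case $r=0$ gives the single $1$-leaf, matching $2\cdot|S|\cdot 0+1=1$. For the step, write $N(S)=N(S_0)+N(S_1)$. If both $S_0,S_1$ are non-empty then by induction $N(S)\le 2|S_0|(r-1)+1+2|S_1|(r-1)+1=2|S|(r-1)+2\le 2|S|r+1$, using $2|S|\ge 1$; if one side is empty it contributes a single $0$-leaf, so $N(S)\le 2|S|(r-1)+1+1\le 2|S|r+1$ by the same inequality. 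Applying this to the root call $\textsf{Build}(E,1)$ with $r=|F|$ yields $\som{\TTT_E}\le 2|E||F|+1$, where I use that the size of a \DT{} is its number of leaves.

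Finally, for the running time I would charge each recursive call the $\bigoh(|S|)$ cost of partitioning $S$ according to $f_i$, plus $\bigoh(1)$ for creating a node or leaf. Accounting level by level, the sets $S$ occurring at a fixed query index are pairwise disjoint subsets of $E$, so their sizes sum to at most $|E|$ per level; since there are at most $|F|+1$ levels and, by the size bound, $\bigoh(|E||F|)$ nodes in total, the overall cost is $\bigoh(|E||F|)$, as claimed.
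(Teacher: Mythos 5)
Your proof is correct and constructs exactly the same tree as the paper's: the ordered trie over $E$ in a fixed feature order, with a $0$-leaf placed as soon as a path departs from every example of $E$ and $1$-leaves only at full depth $|F|$. The only difference is bookkeeping --- the paper builds this trie by inserting per-example chain trees one at a time (merging each new example's chain at the $0$-leaf it reaches), while you build it top-down recursively with an inductive bound on the leaf count --- and both arguments yield the same size bound $2|E||F|+1$ and running time $\bigoh(|E||F|)$.
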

\begin{proof}
  Let $\mathnormal{<}=(f_1,\dotsc,f_n)$ be an arbitrary order of the features in $F$.
  First, we construct a simple ordered \DT{} $\TTT_e=(T_e,\lambda_e)$ that
  classifies only example $e$ as $1$ and all other examples as $0$.
  $\TTT_e$ has one inner node $t_i^e$ for every $i \in [n]$ with
  $\lambda_e(t_i^e)=f_i$. Moreover, for $i<n$, $t_i^e$ has $t_{i+1}^e$
  as its $e(f_i)$-child and a new $0$-leaf as its other
  child. Finally, $t_n^e$ has a new $1$-leaf as its $e(f_n)$-child and
  a $0$-leaf as its other child. Clearly, $\TTT_e$ can be constructed
  in time $\bigoh(|F|)$.
  
  We now construct $\TTT_E$ iteratively starting from $\TTT_\emptyset$
  and adding one example from $E$ at a time (in an arbitrary order).
  We set $\TTT_\emptyset$ to be the \DT{} that only consists of a
  $0$-leaf. Now to obtain $\TTT_{E'\cup \{e\}}$ from $\TTT_{E'}$
  for some $E'\subseteq E$ and $e \in E \setminus E'$, we do the following. Let
  $l$ be the $0$-leaf of $\TTT_{E'}$ that classifies
  $e$ and let $f_i$ be the feature assigned to the parent of
  $l$. Moreover, let $\TTT_e'$ be the sub-\DT{} of $\TTT_e$ rooted at
  $t^e_{i+1}$ or if $i=n$ let $\TTT_e'$ be the \DT{} consisting only of a $1$-leaf.
  Then, $\TTT_{E'\cup \{e\}}$ is obtained from the
  disjoint union of $\TTT_{E'}$ and $\TTT_e'$ after
  identifying the root of $\TTT_e'$ with $l$. Clearly, $\TTT_E$ is an
  ordered \DT{} that can be constructed in time $\bigoh(|E||F|)$ has
  size at most $2|E||F|+1$ and satisfies $\TTT_E(e)=1$ if
  and only if $e \in E$.
\end{proof}

We note that the following theorem also follows from a result
in~\cite[Proposition 5]{BarceloM0S20} for FBDDs, i.e., \BDD{}s without
contradicting paths. However, we require a
different version of the proof that generalizes easily to \OBDD{}s,
i.e., we need to show hardness for ordered \DT{}s.

\ifshort \begin{theorem}[$\star$]\fi\iflong\begin{theorem}\fi\label{th:DT-MLAEX-W2}
    \DT{}-\MLAEX{} is \NPh{} and \DT{}-\MLAEX{}$(\xpsize{})$ is \Wh{2}
    even if for ordered \DT{}s.
\end{theorem}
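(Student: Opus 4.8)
The plan is to reduce from \textsc{Hitting Set} (equivalently \textsc{Set Cover}), which is \NPh{} and \Wh{2} when parameterized by the size of the hitting set. The reduction rests on the standard hitting-set characterization of abductive explanations specialized to the all-zero example. Concretely, fix a \DT{} $\TTT$ and the all-zero example $e_0$ with $\TTT(e_0)=0$. For each leaf $l$ with $\lambda(l)=1$, write $S_l=\{f : \alpha_\TTT^l(f)=1\}$ for the set of features that the root-to-$l$ path sets to $1$. An example agreeing with $e_0$ on $A$ is $0$ on all of $A$, and such an example can reach $l$ if and only if $A\cap S_l=\emptyset$ (the $A$-features forced to $0$ on the path cause no conflict, whereas any $A$-feature in $S_l$ would). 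Hence $A$ is a local abductive explanation for $e_0$ w.r.t.\ $\TTT$ precisely when $A\cap S_l\neq\emptyset$ for every $1$-leaf $l$, i.e.\ $A$ is a hitting set of the family $\{S_l : \lambda(l)=1\}$.

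Given a \textsc{Hitting Set} instance with universe $\{f_1,\dots,f_n\}$, nonempty sets $C_1,\dots,C_m$ (empty sets make the instance trivially negative), and budget $k$, I would take $F=\{f_1,\dots,f_n\}$ as the feature set and, for each $C_j$, let $e_j\in E(F)$ be the example that is $1$ exactly on $C_j$. Applying \Cref{lem:DT-construction} to $E=\{e_1,\dots,e_m\}$ produces in polynomial time an \emph{ordered} \DT{} $\TTT$ of size $\bigoh(mn)$ with $\TTT(e)=1$ iff $e\in E$. The output is the instance $(\TTT,e_0,k)$ of \DT{}-\MLAEX{}; since no $C_j$ is empty we have $e_0\notin E$, so $\TTT(e_0)=0$ and the bad leaves are exactly the $1$-leaves.

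The crux is to verify that $\{S_l : \lambda(l)=1\}$ equals $\{C_1,\dots,C_m\}$, which I expect to be the main (but routine) step. It suffices to show that the root-to-leaf path of each $1$-leaf tests \emph{all} $n$ features: then the features it sets to $1$ are exactly those on which the corresponding $e_j$ is $1$, giving $S_{l_j}=C_j$. This follows by a short induction on the iterative construction in \Cref{lem:DT-construction}: every root-to-leaf path tests a contiguous prefix $f_1,\dots,f_d$ of the order, and each $1$-leaf is attached below a node testing $f_n$, so it has $d=n$. Granting this, $A$ is a local abductive explanation for $e_0$ w.r.t.\ $\TTT$ iff $A$ meets every $C_j$, so $(\TTT,e_0,k)$ is a yes-instance iff the \textsc{Hitting Set} instance admits a solution of size at most $k$. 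Since the reduction is polynomial and preserves the parameter $k=\xpsize{}$, the \NPh{}ness of \textsc{Hitting Set} gives \NPh{}ness of \DT{}-\MLAEX{}, its \Wh{2}ness parameterized by solution size gives \Wh{2}ness of \DT{}-\MLAEX{}$(\xpsize{})$, and $\TTT$ is ordered by construction, establishing all three claims.
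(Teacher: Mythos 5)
Your proposal is correct and takes essentially the same approach as the paper's proof: a reduction from \textsc{Hitting Set}, encoding each set as the example that is $1$ exactly on its elements, building the ordered \DT{} via \Cref{lem:DT-construction}, and asking for an explanation of the all-zero example with budget $k$. The only difference is expository—you route correctness through an explicit characterization of abductive explanations for $e_0$ as hitting sets of the $1$-leaf path-sets $S_l$, whereas the paper argues the two directions directly via the counterexample $e_F$—but the underlying argument is identical.
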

\iflong\begin{proof}
  We provide a parameterized reduction from the \textsc{Hitting Set}
  problem, which is well-known to be \NPh{} and \Wh{2}
  parameterized by the size of the solution. That is, given a family
  of sets $\FFF$ over a universe $U$ and an integer $k$, the
  \textsc{Hitting Set} problem is to decide whether $\FFF$ has a
  \emph{hitting set} of size at most $k$, i.e., a subset $S \subseteq
  U$ with $|U|\leq k$ such that $S\cap F\neq \emptyset$ for every $F
  \in \FFF$. Given an instance $(U,\FFF,k)$ of the \textsc{Hitting
    Set} problem, we will construct an equivalent instance of $(\TTT,e,k)$
  \MLAEX{} in polynomial-time as follows.
  
  Let $B$ be the set of features containing one binary feature $f_u$ for every $u \in
  U$ and let $E$ be the set of examples equal to $\SB e_F \SM F \in
  \FFF\SE$, where $e_F$ is the example that is $1$ at every feature
  $f_u$ such that $u \in F$ and otherwise $0$. Using \Cref{lem:DT-construction} we can
  construct an ordered \DT{} $\TTT_E$ of size at most $2|E||B|$ such
  that $\TTT_E(e)=1$ if and only if $e \in E$. Let $\TTT=\TTT_E$ and
  let $e$ be the all-zero example. Clearly, $(\TTT,e,k)$ can be
  constructed from $(U,\FFF,k)$ in polynomial-time and it only remains
  to show the equivalence of the two instances.

  Towards showing the forward direction, let $S$ be a hitting set for
  $\FFF$ of size at most $k$. We claim that $A=\SB f_u \SM u \in S\SE$
  is a local abductive explanation for $e$ w.r.t. $\TTT$, which
  concludes the proof of the forward direction. To see that this is
  indeed the case consider any example $e'$ that agrees with $e$ on
  $A$, i.e., $e'$ is $0$ at every feature in $A$. Then, because $S$ is
  a hitting set for $\FFF$, we obtain that $e'$ is not in $E$, which implies that
  $\TTT(e')=0=\TTT(e)$, as required.

  Towards showing the reverse direction, let $A$ be a local abductive
  explanation of size at most $k$ for $e$ w.r.t. $\TTT$. We claim that
  $S=\SB u \SM f_u \in A\SE$ is a hitting set for $\FFF$. Suppose that
  this is not the case and there is a set $F \in \FFF$ with $F\cap
  S=\emptyset$. Then, the example $e_F$ agrees with $e$ on every
  feature in $A$, however, it holds that
  $\TTT(e_F)=1\neq \TTT(e)$, contradicting our assumption that $A$ is
  a local abductive explanation for $e$ w.r.t. $\TTT$.
\end{proof}\fi

The following theorem is an analogue of \Cref{th:DT-MLAEX-W2} for
global abductive and global contrastive explanations. It is
interesting to note that while it was not
necessary to distinguish between local abductive explanations on one
side and global abductive and global contrastive explanations on the
other side in the setting of algorithms, this is no longer the case
when it comes to algorithmic lower bounds. Moreover, while the following
result establishes \W{1}-hardness for \DT{}-\MGAEX{}$(\xpsize{})$ and
\DT{}-\MGCEX{}$(\xpsize{})$, this is achieved via fpt-reductions that
are not polynomial-time reductions, which is a behavior that is very
rarely seen from natural parameterized problems. While it is therefore
not clear whether the problems are \NPh{}, the result still shows that
the problems are not solvable in polynomial-time unless
$\FPT=\W{1}$, which is considered unlikely \cite{DowneyFellows13}.
\ifshort \begin{theorem}[$\star$]\fi\iflong\begin{theorem}\fi\label{th:DT-MGAEX-W1}
  \DT{}-\MGAEX{}$(\xpsize{})$ and \DT{}-\MGCEX{}$(\xpsize{})$ are \Wh{1}. 
  Moreover, there is no polynomial time algorithm for solving \DT{}-\MGAEX{} and \DT{}-\MGCEX{},
  unless $\FPT = \W{1}$.
\end{theorem}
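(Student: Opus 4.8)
The plan is to give a parameterized reduction from \MCC{} (parameterized by the clique size $k$), which is \Wh{1}; since this reduction runs in \FPT{} time but blows the instance up by a factor exponential in $k$, it is not a polynomial-time reduction, and this is exactly what forces the weaker ``not in \P{} unless $\FPT=\W{1}$'' conclusion rather than \NP-hardness. One checks directly from the definitions that a global contrastive explanation for class $c$ is the same as a global abductive explanation for class $1-c$, so it suffices to treat \DT{}-\MGAEX{} and obtain \MGCEX{} by flipping the target class. I would first reformulate the problem: fixing the target class $c=0$ and letting $E$ be the set of $1$-examples of the \DT{}, a partial assignment $\tau$ of size at most $k$ is a global abductive explanation for $0$ iff the subcube it defines contains no example of $E$; equivalently, iff there is a set $S$ of at most $k$ features together with a pattern on $S$ that is \emph{not} realized by any $e\in E$ (the projection of $E$ onto $S$ is not surjective onto $\{0,1\}^S$).

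With this reformulation, the goal is to design $E$ (and, via \Cref{lem:DT-construction}, an ordered \DT{} classifying exactly $E$ as $1$) so that the only way to leave a pattern unrealized with at most $k$ fixed features is to fix the $k$ vertex-features of a multicolored clique all to $1$: using one feature per vertex, the ``all-ones'' pattern on a vertex set $U$ of size $k$ is missing from the projection precisely when $U$ is a clique. I would put into $E$ an indicator example for vertex sets whose support is ``clique-free'', so that every pattern on every set of at most $k$ features is covered \emph{except} the all-ones pattern on each $k$-clique. To block all spurious short explanations, $E$ must realize \emph{every} pattern on every coordinate set of size at most $k$ other than the clique all-ones patterns; this is where a $k$-universal covering family is used, so that $|E|$, and hence the size of the \DT{} produced by \Cref{lem:DT-construction}, is only $2^{\bigoh(k)}\cdot\poly(n)$. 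This is \FPT{}-bounded but super-polynomial, which is exactly why the reduction is an fpt-reduction and not a polynomial-time one.

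For correctness I would argue both directions against this reformulation. If $G$ has a multicolored $k$-clique $U$, then no $e\in E$ realizes the all-ones pattern on the features of $U$ (every example of $E$ has clique-free support), so fixing those $k$ features to $1$ is a global abductive explanation for $0$ of size $k$. Conversely, for any feature set $S$ of size $<k$ the projection is surjective, so no explanation of size $<k$ exists, and for $|S|=k$ a pattern is missing iff $S$ is the feature set of a $k$-clique; hence an explanation of size at most $k$ exists iff $G$ has a multicolored $k$-clique. This establishes \Wh{1} for \DT{}-\MGAEX{}$(\xpsize)$, and the \MGCEX{} case follows by taking the same \DT{} with the complementary target class. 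Finally, \Wh{1}-hardness immediately yields the last sentence: a polynomial-time algorithm for \DT{}-\MGAEX{} (or \DT{}-\MGCEX{}) would solve the parameterized problem in polynomial, hence \FPT{}, time, forcing $\FPT=\W{1}$.

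I expect the main obstacle to be the simultaneous control of two opposing requirements on $E$: it must be rich enough to realize every pattern on every set of at most $k$ features (to kill all short, clique-unrelated explanations), yet it must avoid realizing the all-ones pattern on any $k$-clique (to make cliques the only witnesses), all while keeping $|E|$ bounded by $f(k)\cdot\poly(n)$ rather than $n^{\bigoh(k)}$. Obtaining a clique-free $k$-universal covering family of \FPT{} size, and verifying that the unassigned features -- which branch both ways in the \DT{} and hence correspond precisely to the free coordinates of the subcube -- create no unintended realized patterns, is the delicate part of the argument.
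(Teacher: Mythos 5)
You take a genuinely different route from the paper, and your high-level strategy can be made to work, but as written the proposal has a gap exactly at its load-bearing step: the ``clique-free $k$-universal covering family'' is postulated, never constructed. Both directions of your correctness argument, and hence the \Wh{1} conclusion, are conditional on an \FPT{}-time construction of a set $E$ of examples over the vertex features such that (i) no example in $E$ has support containing a multicolored $k$-clique, and (ii) every pattern on every feature set of size at most $k$, other than the all-ones pattern on a clique, is realized by some member of $E$. You correctly identify this as the crux and then defer it; a priori it is not even clear that such a family of size $f(k)\cdot\poly(n)$ exists for every graph, so the proof is incomplete as it stands. The gap is, however, fillable. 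Write a pattern as a disjoint pair $(S_1,S_0)$ of features forced to $1$ and to $0$, with $|S_1|+|S_0|\le k$. If $S_1$ misses some color class $V_j$ (automatic when $|S_1|<k$, or when $|S_1|=k$ but $S_1$ is not a transversal), take a standard $(n,k)$-universal set $U$ of size $2^k k^{\bigoh(\log k)}\log n$, constructible in fpt-time, and put into $E$, for every $j\in[k]$ and every $u\in U$, the example $u$ with all coordinates in $V_j$ reset to $0$: its support misses $V_j$, hence is clique-free, and universality of $U$ on $S_1\cup S_0$ realizes the pattern. Otherwise $S_1$ hits all classes, so $|S_1|=k$, $S_0=\emptyset$, and $S_1$ is a non-clique transversal, hence contains a cross-class non-edge $uv$ with $u\in V_i$, $v\in V_j$; put into $E$, for every such non-edge, the example with support $\{u,v\}\cup\bigcup_{l\neq i,j}V_l$ --- every transversal inside this support uses both $u$ and $v$, so the support is clique-free, and it realizes all-ones on $S_1$. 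Then $|E|\le k\,2^k k^{\bigoh(\log k)}\log n+\binom{k}{2}n^2$, and \Cref{lem:DT-construction} yields a \DT{} of fpt size, completing your reduction.

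For contrast, the paper's proof sidesteps this covering-design question entirely by leaving the pure vertex-feature world. It builds a small gadget \DT{} $\TTT_{i,j}$ for each ordered pair of color classes, which a global abductive explanation for class $0$ can neutralize only by setting to $1$ exactly one vertex feature of $V_i$ whose vertex is adjacent to the vertex chosen in $V_j$, and it hangs all $k(k-1)$ gadgets below a multiplexer built from fresh auxiliary features: a complete tree of height $k$ whose $2^k$ leaves each carry a private selector tree over copies of the gadgets. A size-$k$ explanation cannot touch features of all $2^k$ copies, so some copy stays completely free, and the explanation must therefore neutralize every gadget simultaneously using vertex features alone, which pins it down to the all-ones assignment of a multicolored clique. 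Both reductions are fpt- but not polynomial-time (yours because any $k$-universal family needs at least $2^k$ elements, the paper's because of the $2^k$ multiplexer copies), which is precisely why both arguments yield only ``no polynomial-time algorithm unless $\FPT=\W{1}$'' rather than \NP-hardness, as you correctly observe in your final paragraph.
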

\iflong\begin{proof}\fi
\ifshort\begin{proof}[Proof Sketch]\fi 
  We provide a parameterized reduction from the \textsc{Multicolored
    Clique} (\MCC)
  problem, which is well-known to be \Wh{1} parameterized by the size
  of the solution. Given an instance $(G,k)$
  of the \MCC{} problem with $k$-partition $(V_1,\dotsc,V_k)$ of $V(G)$, we will construct an equivalent instance
  $(\TTT,c,k)$ of \MGAEX{} in fpt-time. Note that since
  a partial example $e'$ is a global abductive explanation for $c$
  w.r.t. $\TTT$ if and only if $e'$ is a global contrastive
  explanation for $1-c$ w.r.t. $\TTT$, this then also implies the
  statement of the theorem for \MGCEX{}.
  $\TTT$ uses one binary feature $f_v$ for every $v \in V(G)$.

  We start by constructing the \DT{} $\TTT_{i,j}$ for every $i,j \in
  [k]$ with $i\neq j$ satisfying the
  following: (*) $\TTT_{i,j}(e) = 1$ for an example $e$ if and only if
  either $e(f_v) = 0$ for every $v \in V_i$ or there exists 
  $v\in V_i$ such that $e(f_{v}) = 1$ and $e(f_{v'}) = 0$
  for every $v' \in (V_i\setminus \{v\}) \cup (N_G(v) \cap V_j)$.
  Let $\TTT_{i}$ be the \DT{} obtained using
  \Cref{lem:DT-construction} for the set of examples $\{e_0\} \cup \SB
  e_v \SM v\in V_i \SE$ defined on the features in $F_i=\SB f_v\SM
  v\in V_i \SE$. Here, $e_0$ is the all-zero example and for every $v
  \in V_i$, $e_v$ is the example that is $1$ only at the feature $f_v$
  and $0$ otherwise.
  Moreover, for every $v\in V_i$, let $\TTT^{v}_{j}$ be the \DT{} obtained using
  \Cref{lem:DT-construction} for the set of examples containing only
  the all-zero example defined on the features in
  $\SB f_{v'}\SM v'\in N_G(v)\cap V_j\SE$.
  Then, $\TTT_{i,j}$ is obtained from $\TTT_i$ after replacing
  the $1$-leaf that classifies $e_v$  with $\TTT^v_j$ for every $v \in
  V_i$. Clearly, $\TTT_{i,j}$ satisfies (*) and since 
  $\TTT_{i}$ has at most $|V_i|^2$ inner nodes and $\TTT^v_{j}$ has at most
  $|V_j|$ inner nodes, we obtain that $\TTT_{i,j}$ has at most
  $\bigoh(|V(G)|^2)$ nodes. 

  For an integer $\ell$, we denote by $\DT(\ell)$ the
  complete \DT{} of height $\ell$,
  where every inner node is
  assigned to a fresh auxiliary feature and every of the exactly
  $2^\ell$ leaves is a $0$-leaf. 
  Let $\TTT_\Delta$ be the \DT{} obtained from the disjoint union of
  $\TTT_U=\DL(k)$ and $2^{k}$ copies $\TTT_D^1,\dotsc,
  \TTT_D^{2^{k}}$ of \DT($\lceil \log (k(k-1))\rceil$) by
  identifying the $i$-th leaf of $\TTT_U$ with the root of $\TTT_D^i$
  for every $i$ with $1 \leq i \leq 2^{k}$; each
  copy is equipped with its own set of fresh features.

  Then,
  $\TTT$ is obtained from $\TTT_\Delta$ after doing the following with
  $\TTT_D^\ell$ for every $\ell \in [2^k]$.
  For every $i,j \in [k]$ with $i\neq j$,
  we replace a private leaf of $\TTT_D^\ell$ with the \DT{}
  $\TTT_{i,j}$; note that this is possible because
  $\TTT_D^\ell$ has at least $k(k-1)$ leaves. Also note that
  $\TTT$ has size at most $\bigoh(|\TTT_\Delta||V(G)|^2)$.
  This completes the construction of $\TTT$ and we set $c=0$.
  Clearly, $\TTT$ can be
  constructed from $G$ in fpt-time w.r.t. $k$. It remains to show that
  $G$ has a $k$-clique if and only if there is a global abductive
  explanation of size at most $k$ for $c$ w.r.t. $\TTT$.
  \iflong %

  Towards showing the forward direction, let $C=\{v_1,\dotsc,v_k\}$ be
  a $k$-clique of $G$ with $v_i \in V_i$ for every $i \in [k]$. We
  claim that $\alpha : \SB f_v \SM v \in V(C) \SE \rightarrow \{1\}$ is a
  global abductive explanation for $c$ w.r.t. $\TTT$, which
  concludes the proof of the forward direction.   
  To see that this is
  indeed the case consider any example $e$ that agrees with $\alpha$,
  i.e., $e$ is $1$ at any feature $f_{v}$ with $v \in V(C)$.
  For this is suffices to show that $\TTT_{i,j}(e)=0$ for every
  $i,j\in [k]$. Because $\TTT_{i,j}$ satisfies (*) and
  because $e(f_{v_i})=1$, it has to
  hold that $e$ is $1$ for at least one feature in
  $\SB f_v \SM v \in N_G(v_i)\cap V_j\SE$. But this clearly holds
  because $C$ is a $k$-clique in $G$.

  Towards showing the reverse direction, let $\alpha : C' \rightarrow
  \{0,1\}$ be a global abductive explanation of size at most $k$ for
  $c$ w.r.t. $\TTT$. 
  We claim that $C=\SB v \SM f_v \in C'\SE$ is a
  $k$-clique of $G$.
  Let $\TTT_\alpha$ be the partial \DT{} obtained from
  $\TTT$ after removing all nodes that can never be reached by an
  example that is compatible with $\alpha$, i.e., we obtain
  $\TTT_\alpha$  from $\TTT$ by removing the subtree rooted at the $1-\alpha(t)$-child for every node $t$ of $\TTT$
  with $\lambda(t) \in C'$. Then, $\TTT_\alpha$ contains
  only $0$-leaves. We first show that there is an $\ell \in
  [2^{k}]$
  such that $\TTT_\alpha$ contains $\TTT_D^\ell$ completely, i.e., this in particular means that $C'$ contains no feature of $\TTT_D^\ell$. To see this, let $x$ be the number features in
  $C'$ that are assigned to a node of $\TTT_U$. Then, $\TTT_\alpha$
  contains the root of at least $2^{k-x}$ \DT{}s
  $\TTT_D^i$. Moreover, since $2^{k-x}\geq k-x$ there is at least
  one $\TTT_D^i$ say $\TTT_D^\ell$, whose associated features are not in
  $C'$.  Therefore, for every $i,j \in [k]$ with $i \neq j$, $\TTT_\alpha$ contains at least the root of $\TTT_{i,j}$. Since $\TTT_{i,j}$ satisfies (*), it follows that for every $\ell \in [k]$ there is $v_\ell \in V_\ell$ such that $\alpha(f_{v_{\ell}})=1$. Because $|C'|\leq k$, we obtain that $C'$ contains exactly one feature $f_{v_\ell} \in F_\ell$ for every $\ell$. Finally, using again that $\TTT_{i,j}$ satisfies (*), we obtain that $v_{i}$ and $v_{j}$ are adjacent in $G$, showing that $\{v_1,\dotsc,v_k\}$ is a $k$-clique of $G$.
 \fi
\end{proof}

\ifshort \begin{lemma}[$\star$]\fi\iflong\begin{lemma}\fi\label{lem:rf-hard-example}
    \RF{}-\HOM{} is \NPh{} and
    both \RF{}-\HOM{}$(\enssize{})$
    and \RF{}-\kHOM{}$(\enssize{})$ are
    \Wh{1}\iflong even if 
    the partial orders of the features on all paths from the root to a leaf
    respect the same total order\fi.
\end{lemma}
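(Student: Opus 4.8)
The plan is to give a single parameterized reduction from the \MCC{} problem, which is \Wh{1} parameterized by the clique size $k$ and \NPh{}, producing an \emph{ordered} \RF{} whose number of trees (its \enssize{}) depends only on $k$ and which is constructible in polynomial time; this simultaneously yields all three claims. Given an \MCC{} instance $G$ with colour classes $V_1,\dots,V_k$, I introduce one feature $f_v$ per vertex $v$ and fix the global feature order $V_1<\dots<V_k$ (ordering vertices arbitrarily inside each class). An example $e$ is read as selecting the set $S_e=\{\,v\mid e(f_v)=1\,\}$, and the whole construction is designed so that the all-zero example $e_0$ is classified $0$ while $\RF(e)=1$ holds exactly when $S_e$ is a multicoloured $k$-clique.

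The heart of the reduction is a family of $m=\binom{k}{2}$ ``pair-trees''. For each $i<j$ the tree $P_{i,j}$ reads only the features of $V_i$ and then those of $V_j$ (in the global order, so the ensemble stays ordered) and outputs $1$ iff exactly one vertex is selected in $V_i$, exactly one in $V_j$, and these two vertices are adjacent in $G$. Such a tree is built by a ``find-the-unique-selected-vertex'' gadget: a spine over $V_i$ branches, upon the first $1$, into a subtree that first certifies the remaining $V_i$-features are $0$ (rejecting a second selection) and then runs the analogous gadget over $V_j$, finally comparing the identified pair against the adjacency relation; this has polynomial size $\bigoh(|V_i|^2+|V_i||V_j|^2)$ and respects the order. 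To turn the conjunction ``all $P_{i,j}$ output $1$'' into a majority vote, I pad the ensemble with $m-1$ constant-$0$ trees, for a total of $2m-1$ trees. Since only the $m$ pair-trees can ever output $1$, the majority threshold $\lfloor(2m-1)/2\rfloor+1=m$ is reached iff every $P_{i,j}$ outputs $1$; hence $\RF$ computes exactly the AND of the pair-checks, $\enssize=2\binom{k}{2}-1=k^2-k-1$ is a function of $k$ only, and $\RF(e_0)=0$ because no class has a selected vertex under $e_0$.

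For correctness I argue both directions: if $C$ is a $k$-clique with $v_i\in V_i$, the example selecting exactly $\{\,f_{v_i}\mid i\in[k]\,\}$ makes every $P_{i,j}$ output $1$, so $\RF(e)=1\neq\RF(e_0)$ and $e$ sets exactly $k$ features to $1$; conversely, $\RF(e)=1$ forces every pair-tree to fire, which (ranging over all pairs, using $k\ge 2$) pins down exactly one selected vertex per class with all pairs adjacent, i.e.\ a multicoloured $k$-clique, and in particular exactly $k$ ones. This equivalence, with budget $k'=k$, gives \RF{}-\kHOM{}; dropping the budget gives \RF{}-\HOM{}; the parameter bound on \enssize{} yields \Wh{1} for both, and polynomiality of the reduction from the \NPh{} \MCC{} yields \NPh{} for \RF{}-\HOM{} (alternatively one may invoke \Cref{lem:MME-kHOM-W1} for the plain \NPh{} statement). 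The main obstacle I expect is engineering the pair-trees to be simultaneously (i) polynomial size, (ii) consistent with one fixed feature order across the entire forest, and (iii) correct about the ``exactly one selected per class'' count despite a decision tree being unable to merge states; the spine-with-branch gadget above is the delicate part, together with verifying that the constant-$0$ padding realises the conjunction rather than some other threshold.
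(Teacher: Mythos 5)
Your proposal is correct and follows essentially the same route as the paper's proof: a reduction from \MCC{} that builds an ordered ensemble of polynomial-size checking trees (one per colour pair) accepting exactly the encodings of valid vertex/edge selections, padded with constant-$0$ trees so that the majority vote realises their conjunction, which simultaneously yields the \NPh{} and both \Wh{1} claims. The only differences are cosmetic: the paper additionally includes one tree per colour class enforcing per-class uniqueness (hence $2(k+\binom{k}{2})-1$ ensemble elements and no need for your $k\ge 2$ assumption), and it obtains its trees by invoking \Cref{lem:DT-construction} on an explicit set of accepted examples rather than via your hand-built spine gadget.
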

\iflong\begin{proof}\fi
\ifshort\begin{proof}[Proof Sketch]\fi  
  We give a parameterized reduction from \MCC{} that is also a
  polynomial-time reduction. That is, given an
  instance $(G,k)$ of \MCC{} with $k$-partition $V_1,\dotsc,V_k$, we
  will construct a \RF{} $\EEE$ with $|\EEE|=2(k+\binom{k}{2})-1$ such
  that $G$ has a $k$-clique if and only if $\EEE$ classifies at least
  one example positively. This will already suffice to show the stated
  results for \RF{}-\HOM{}. Moreover, to show the results for
  \RF{}-\kHOM{} we additional show that $G$ has a $k$-clique if and only if
  $\EEE$ classifies an example positively that sets at most $k$
  features to $1$.

  $\EEE$ will use the set of features $\bigcup_{i \in [k]} F_i$, where $F_i = \SB f_{v} \SM v \in V_i\SE$.
  For each $v \in V_i$ and $u \in V_j$,
  let $e_{v,u}$ be an example 
  defined on set of features $F_i \cup F_j$ 
  that is $1$ only at the features $f_v$ and $f_v$, and otherwise $0$.
  For every $i \in [k]$, $\EEE$ will have a \DT{} $\TTT_i$ 
  obtained using \Cref{lem:DT-construction} 
  for the set of examples $\SB e_{v,v}\SM v \in V_i \SE$
  defined on the features in $F_i$.
  Also, for every $i$ and $j$ with $1 \leq i < j \leq k$, 
  $\EEE$ contains a \DT{} $\TTT_{i,j}$ 
  obtained using \Cref{lem:DT-construction} 
  for the set of examples 
  $\SB e_{v,u}\SM v \in V_i \land u \in V_j \land vu \in E(G) \SE$
  defined on the features in $F_i\cup F_j$.
  Finally, $\EEE$ contains $k+\binom{k}{2}-1$ \DT{}s that classify
  every example negatively, i.e., those \DT{}s consists only of one
  $0$-leaf.
  \ifshort  
  The correctness of the reduction is provided in the long version of the paper.
  \fi
  \iflong
  Clearly, the reduction can be achieved in polynomial-time and
  preserves the parameter, i.e., $|\EEE|=2(k+\binom{k}{2})-1$.
  It remains to show that $G$ has a $k$-clique if and only if $\EEE$
  classifies at least one example positively, which in turn holds if
  and only if $\EEE$ classifies an example positively that sets at
  most $k$ features to $1$.

  Towards showing the
  forward direction, let $C=\SB v_1,\dotsc,v_k\SE$ be a $k$-clique of
  $G$, where $v_i \in V_i$ for every $i$ with $1 \leq i \leq k$. We
  claim that the example $e$ that is $1$ exactly at the features
  $f_{v_1},\dotsc,f_{v_k}$ (and otherwise $0$) is classified
  positively by $\EEE$. By construction, $e$ is classified positively by every \DT{}
  $\TTT_i$ for every $i \in [k]$ and since $C$ is a $k$-clique also every
  \DT{} $\TTT_{i,j}$ for every $1 \leq i < j \leq k$. Therefore, $e$
  is classified positively by $k+\binom{k}{2}$ \DT{}, which represents
  the majority of the \DT{}s in $\EEE$.

  Towards showing the reverse direction, suppose that there is an
  example $e$ that is classified positively by $\EEE$. Because $e$ has
  to be classified positively by the majority of \DT{} in $\EEE$ and
  there are $k+\binom{k}{2}-1$ \DT{}s in $\EEE$ that classify $e$
  negatively, we obtain that $e$ has to be classified positively by
  every \DT{} $\TTT_i$ for every $i \in [k]$ and by every \DT{}
  $\TTT_{i,j}$ for every $1 \leq i < j \leq k$. Since $e$ is
  classified positively by $\TTT_i$, we obtain by construction that
  there is exactly one feature $f_{v_i} \in \SB f_v \SM v \in V_i\SE$
  such that $e(f_{v_i})=1$ and $e$ is $0$ at all other features in
  $f_{v_i} \in \SB f_v \SM v \in V_i\SE$. Since $e$ sets exactly one
  feature $f_{v_i} \in \SB f_v \SM v \in V_i\SE$ to $1$ and $e$ sets
  exactly one feature $f_{v_j} \in \SB f_v \SM v \in V_i\SE$ to $1$
  and because $e$ is classified positively by $\TTT_{i,j}$, we obtain
  from the construction that $v_i$ and $v_j$ are adjacent in $G$. Therefore,
  $C=\{v_1,\dotsc,v_k\}$ is a $k$-clique of $G$. \fi
\end{proof}

The final two theorems of this section provide all the remaining
hardness results for \RF{}s and follow from
\Cref{lem:rf-hard-example} and \Cref{lem:MME-kHOM-W1}, respectively,
together with \Cref{lem:HOM-reduction,lem:kHOM-reduction}.

\begin{theorem}\label{th:RF-W1-ES}
  Let $\PP \in \{\LAEX{}, \GAEX{}, \GCEX{}\}$. 
  \RF{}-\SPP{}$(\enssize{})$ is \coWh{1};
  \RF{}-\SMLCEX{}$(\enssize{})$ is \Wh{1};
  \RF{}-\MPP{}$(\enssize{})$ is \coWh{1} even if \xpsize{} is constant;
  \RF{}-\MLCEX{}$(\enssize{}+\xpsize{})$ is \Wh{1}.
\end{theorem}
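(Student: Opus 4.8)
The plan is to transfer the parameterized hardness of \RF{}-\HOM{} and \RF{}-\kHOM{} from \Cref{lem:rf-hard-example} to all four explanation problems, using the equivalences of \Cref{lem:HOM-reduction,lem:kHOM-reduction} as reductions. The key structural observation is that each equivalence maps an instance $M$ of \HOM{}/\kHOM{} to an explanation instance built from the \emph{same} model $M$ together with the all-zero example $e_0$ and an explanation-size bound of $0$ or $k$; in particular the ensemble is never modified, so \enssize{} is carried over unchanged.

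First I would handle the three abductive/global problems, for which the target class is \coWh{1}. Since \RF{}-\HOM{}$(\enssize{})$ is \Wh{1} by \Cref{lem:rf-hard-example}, the complementary problem co-\HOM{} is \coWh{1}-hard parameterized by \enssize{}, and I would reduce from it. For the subset-minimal variants, \Cref{lem:HOM-reduction}(2),(4),(5) state that the empty set is a solution of $(M,e_0)$, $(M,c)$, or $(M,1-c)$ exactly when $M$ is homogeneous; as the empty set, when it is an explanation, is the unique subset-minimal one, any algorithm returning a subset-minimal explanation decides co-\HOM{} simply by testing whether its output is empty. This gives \coWh{1}-hardness of \RF{}-\SPP{}$(\enssize{})$. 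For the cardinality-minimal variants, \Cref{lem:HOM-reduction}(6),(8),(9) give the same equivalence with the size bound fixed to the constant $k=0$, yielding \coWh{1}-hardness of \RF{}-\MPP{}$(\enssize{})$ even when \xpsize{} is constant.

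Next I would handle the two contrastive results, where the reductions preserve yes/no answers directly. By \Cref{lem:HOM-reduction}(3), $(M,e_0)$ admits a local contrastive explanation iff $M$ is non-homogeneous, so $(M,e_0)$ is a yes-instance of \RF{}-\SMLCEX{} iff $M$ is a yes-instance of \HOM{}; this transfers the \Wh{1}-hardness of \RF{}-\HOM{}$(\enssize{})$. For \RF{}-\MLCEX{}$(\enssize{}+\xpsize{})$, \Cref{lem:kHOM-reduction} states that $(M,e_0,k)$ is a yes-instance iff $(M,k)$ is a yes-instance of \RF{}-\kHOM{}, turning the hardness of \RF{}-\kHOM{} from \Cref{lem:rf-hard-example} into the claimed result.

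The step I expect to require the most care is the parameter bookkeeping for \RF{}-\MLCEX{}$(\enssize{}+\xpsize{})$, since here the parameter includes \xpsize{}$=k$ and adding a parameter can only make a problem easier. To ensure a genuine fpt-reduction I would trace back to the \MCC{}-based construction underlying \Cref{lem:rf-hard-example}: there the \kHOM{} threshold equals the clique size $\kappa$ and \enssize{}$=2(\kappa+\binom{\kappa}{2})-1$, so both \enssize{} and \xpsize{}$=\kappa$ are bounded by a function of the source parameter $\kappa$. Hence \Cref{lem:kHOM-reduction} composed with that construction is parameter-preserving for \enssize{}$+$\xpsize{}, which is exactly what the statement requires; the analogous (trivial) bound \xpsize{}$=0$ handles the cardinality-minimal abductive cases.
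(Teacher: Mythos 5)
Your proposal is correct and follows exactly the paper's intended argument: the paper derives \Cref{th:RF-W1-ES} by combining \Cref{lem:rf-hard-example} with the equivalences of \Cref{lem:HOM-reduction,lem:kHOM-reduction}, which is precisely your reduction scheme (including the observation that the model, and hence \enssize{}, is unchanged and that the size bound is $0$ or $k$). Your extra care with the parameter bookkeeping for \RF{}-\MLCEX{}$(\enssize{}+\xpsize{})$ — tracing back to the \MCC{} construction to confirm that the \kHOM{} threshold is bounded by the source parameter — is a detail the paper leaves implicit, and you resolve it correctly.
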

\ifshort \begin{theorem}[$\star$]\fi\iflong\begin{theorem}\fi\label{th:RF-paraNP}
  Let $\PP \in \{\LAEX{}, \GAEX{}, \GCEX{}\}$. 
  \RF{}-\SPP{} is \coNPh{} even if  $\mnlsize{} + \sizeelem{}$ is constant;
  \RF{}-\SMLCEX{} is \NPh{} even if  $\mnlsize{} + \sizeelem{}$ is constant;
  \RF{}-\MPP{} is \coNPh{} even if  $\mnlsize{}+ \sizeelem{} + \xpsize{}$ is constant;
  \RF{}-\MLCEX{}$(\xpsize{})$ is \Wh{1} even if $\mnlsize{} + \sizeelem{}$ is constant.
\end{theorem}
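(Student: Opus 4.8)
The plan is to instantiate the general hardness lemma \Cref{lem:MME-kHOM-W1} for the class of \DT{}s and then transport the resulting hardness of \HOM{} and \kHOM{} to the explanation problems through the equivalences in \Cref{lem:HOM-reduction,lem:kHOM-reduction}.

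First I would check that \DT{}s realize the three constant-size gadget models demanded by \Cref{lem:MME-kHOM-W1}. Let $M^0$ be the \DT{} consisting of a single $0$-leaf (size $1$, $\MNL=0$); let $M^1_f$ be the \DT{} whose root tests $f$, with a $0$-leaf as $0$-child and a $1$-leaf as $1$-child (size $2$, $\MNL=1$); and let $M^2_{f_1,f_2}$ be the \DT{} that tests $f_1$ at the root, sends the $0$-child to a $1$-leaf, and on the $1$-child tests $f_2$, sending its $0$-child to a $1$-leaf and its $1$-child to a $0$-leaf (size $3$, $\MNL=1$). A direct check shows that $M^0$ classifies every example negatively, that $M^1_f(e)=1$ iff $e(f)=1$, and that $M^2_{f_1,f_2}(e)=1$ iff $e(f_1)=0$ or $e(f_2)=0$, exactly as required. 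Hence \RF{} satisfies the hypotheses with $d=3$, and \Cref{lem:MME-kHOM-W1} gives that \RF{}-\HOM{} is \NPh{} and \RF{}-\kHOM{} is \Wh{1} parameterized by $k$, both with every ensemble element of size at most $3$; since $\MNL$ never exceeds the size, this bounds $\sizeelem$ and $\mnlsize$ simultaneously by constants.

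Next I would transport these two facts via \Cref{lem:HOM-reduction}. Because \HOM{} is \NPh{}, its complement is \coNPh{}; the equivalences (2) and (6) for \LAEX{}, (4) and (8) for \GAEX{}, and (5) and (9) for \GCEX{} then turn an instance $M$ of \RF{}-\HOM{} into an instance of the corresponding explanation problem in which the empty explanation is a solution (respectively, the answer is ``yes'' with $k=0$) exactly when $M$ is homogeneous. This yields reductions from co-\HOM{} and establishes the \coNPh{} bounds for \RF{}-\SPP{} (with $\mnlsize+\sizeelem$ constant) and for \RF{}-\MPP{} (with $\mnlsize+\sizeelem+\xpsize$ constant, taking $\xpsize=0$). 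For the contrastive case, equivalence (3) states that a local contrastive explanation for $(M,e_0)$ exists iff $M$ is non-homogeneous, which is a reduction from \HOM{} itself and hence gives the \NPh{}ness of \RF{}-\SMLCEX{} with $\mnlsize+\sizeelem$ constant. Finally, \Cref{lem:kHOM-reduction} equates yes-instances $(M,k)$ of \kHOM{} with yes-instances $(M,e_0,k)$ of \MLCEX{}, so the parameter $k$ becomes precisely $\xpsize$, and the \Wh{1}-hardness of \kHOM{} transfers to \RF{}-\MLCEX{}$(\xpsize)$ with $\mnlsize+\sizeelem$ constant.

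The step needing the most care, and the main conceptual obstacle, is treating the subset-minimal \emph{search} problems as decision problems. The reductions rely on the fact that whenever $M$ is homogeneous the empty set is the \emph{unique} subset-minimal (local or global) abductive explanation, so any correct search procedure must output it, whereas if $M$ is non-homogeneous every minimal abductive explanation is nonempty; inspecting the returned solution therefore decides co-\HOM{}. The remaining checks — that the gadget \DT{}s are legitimate binary trees with one feature per inner node and have the claimed sizes and $\MNL$ values — are routine.
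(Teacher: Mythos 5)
Your proposal is correct and follows essentially the same route as the paper's proof: the paper also instantiates \Cref{lem:MME-kHOM-W1} with exactly these three gadget \DT{}s (single $0$-leaf, single test on $f$, and the two-node tree computing $e(f_1)=0 \lor e(f_2)=0$) and then concludes via \Cref{lem:HOM-reduction,lem:kHOM-reduction}. Your additional remarks on the $\MNL$ values and on reducing the subset-minimal search problems to decision problems make explicit details the paper delegates to its general lemmas and a footnote, but the argument is the same.
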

\iflong\begin{proof}

    We will use \Cref{lem:MME-kHOM-W1} to show the theorem by
    giving \DT{}s $M^0$, $M^1_{f}$ and $M^2_{\{f_1,f_2\}}$ satisfying the
    conditions of \Cref{lem:MME-kHOM-W1} as follows.
    We let $M^0$ be the \DT{} consisting merely of one $0$-leaf.
    We let $M^1_{f}$ be the \DT{} $(T^{1}_f, \lambda^1_f)$, such that 
    $t$ is the only inner node of $T^{1}_f$ with $\lambda^1_f(t)=f$. 
    Let $0$-leaf and $1$-leaf be the $0$-child and
    $1$-child of $t$, respectively.

    We let $M^2_{\{f_1,f_2\}}$ be the \DT{} $(T^2_{\{f_1,f_2\}} , \lambda^2_{\{f_1,f_2\}})$, such that
    $t^{f_1}$ and $t^{f_2}$ are the only inner nodes of $T^2_{\{f_1,f_2\}}$ with 
    $\lambda^2_{\{f_1,f_2\}}(t^{f_1})=f_1$ and $\lambda^2_{\{f_1,f_2\}}(t^{f_2})=f_2$.
    Let $1$-leaf and $t^{f_2}$ be the $0$-child and $1$-child of $t^{f_1}$, respectively.
    Let $1$-leaf and $0$-leaf be the $0$-child and $1$-child of $g^{f_2}$, respectively.

    The constructions of the \DT{}s are self explainable.
    Note that $M^0$, $M^1_{f}$ and $M^2_{\{f_1,f_2\}}$ have \sizeelem at most $5$.
    All statements in the theorem now follow from 
    \Cref{lem:MME-kHOM-W1,lem:kHOM-reduction,lem:HOM-reduction}.
\end{proof}\fi

\subsection{DSs, DLs  and their Ensembles}

Here, we establish our hardness results for \DS{}, \DL{}s, and their
ensembles. It is interesting to note that there is no real distinction
between \DS{} and \DL{}s when it comes to explainability and that both are considerably
harder to explain then \DT{}s and \OBDD{}s.

\newcommand{\TAU}{\textsc{Taut}}
\iflong
\pbDef{{\sc $3$-DNF Tautology} (\TAU)}
{A $3$-DNF formula $\psi$.}
{Is there an assignment that falsifies $\psi$.}
\fi
\ifshort \begin{theorem}[$\star$]\fi\iflong\begin{theorem}\fi\label{th:DS-paraCoNP}
  Let $\MM \in \{\DS,\DL\}$ and let $\PP \in \{\LAEX, \GAEX, \GCEX\}$. 
  \MM-\SPP{} is \coNPh{} even if \termsize{} is constant;
  \MM-\SMLCEX{} is \NPh{} even if \termsize{} is constant;
  \MM-\MPP{} is \coNPh{} even if $\termsize{} + \xpsize{}$ is constant.
\end{theorem}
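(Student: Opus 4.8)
The plan is to route everything through \Cref{lem:HOM-reduction}, so that it suffices to prove a single hardness result: that $\DS$-$\HOM$ and $\DL$-$\HOM$ are $\NPh$ even when every term has size at most $3$. Granting this, \Cref{lem:HOM-reduction} yields all the stated conclusions at once. For the subset-minimal abductive problems $\SMLAEX$, $\SMGAEX$, $\SMGCEX$, a subset-minimal explanation of the instance $(M,e_0)$ (resp. $(M,c)$, $(M,1-c)$ with $c=M(e_0)$) is the empty set precisely when $M$ is homogeneous; hence a polynomial-time algorithm for finding one would decide the $\coNP$-hard complement of $\HOM$, giving $\coNP$-hardness. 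For $\SMLCEX$, a subset-minimal local contrastive explanation for $(M,e_0)$ exists precisely when $M$ is a yes-instance of $\HOM$ (item (3)), giving $\NP$-hardness. For the cardinality-minimal abductive problems $\MLAEX$, $\MGAEX$, $\MGCEX$, fixing $k=0$ turns the decision version into exactly the complement of $\HOM$ via items (6), (8), (9), so these are $\coNPh$ already when $\xpsize=0$ is constant. Since the hard $\HOM$ instances will have term size at most $3$, all the ``$\termsize$ constant'' and ``$\termsize+\xpsize$ constant'' qualifications are automatically met.

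The core step is therefore the reduction for $\HOM$. I would reduce from the $\NPh$ problem $\TAU$: given a $3$-DNF $\psi=t_1\vee\cdots\vee t_m$, decide whether some assignment falsifies $\psi$. We may assume each term $t_i$ is satisfiable (drop any self-contradictory term), so $\psi$ itself is satisfiable. Build the $\DS$ $S=(T,1)$ whose term set $T=\{t_1,\dots,t_m\}$ is exactly the set of terms of $\psi$, translating a positive literal $x$ to $(f_x=1)$ and a negative literal $\lnot x$ to $(f_x=0)$. By definition of the $\DS$ classification, $S(e)=0$ iff some term applies to $e$ iff $\psi(e)=1$, and $S(e)=1$ iff $\psi(e)=0$. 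Hence $S$ always has a $0$-example (as $\psi$ is satisfiable), while $S$ has a $1$-example iff $\psi$ has a falsifying assignment; consequently $S$ is non-constant — equivalently a yes-instance of $\HOM$ — iff $\psi$ is falsifiable. Every term of $S$ has size at most $3$, so $\DS$-$\HOM$ is $\NPh$ with $\termsize\le 3$. The same bound transfers to $\DL$: either translate $S$ into the list $L=((t_1,0),\dots,(t_m,0),(\emptyset,1))$, which computes the identical function without enlarging any term, or invoke the standard $\DS$-to-$\DL$ translation used elsewhere in the paper.

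Finally I would assemble the pieces, combining the $\HOM$ hardness above with \Cref{lem:HOM-reduction} (and the fact that $e_0$ and $c=M(e_0)$ are computable in polynomial time) to obtain the three displayed statements for both $\MM=\DS$ and $\MM=\DL$ and for each $\PP\in\{\LAEX,\GAEX,\GCEX\}$. I do not expect a deep obstacle here; the only points requiring care are (i) guaranteeing that the reduced model is genuinely non-constant on the ``yes'' side, which is exactly why I insist that $\psi$ (hence some term) be satisfiable, so that a $0$-example always exists and non-constancy reduces to the existence of a $1$-example, and (ii) the bookkeeping needed to read off $\coNP$- versus $\NP$-hardness from the appropriate items of \Cref{lem:HOM-reduction} and to confirm that in each case the relevant parameters ($\termsize\le 3$, and $\xpsize=k=0$) are indeed constant.
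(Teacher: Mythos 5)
Your proposal is correct and follows essentially the same route as the paper's proof: reduce $3$-DNF falsifiability (\TAU{}) to \DS{}-\HOM{} by taking the terms of $\psi$ as the term set of a \DS{} (the paper uses default class $0$ after normalizing so that the all-zero assignment satisfies $\psi$, while you use default class $1$ after dropping contradictory terms --- the same reduction up to complementation), then invoke \Cref{lem:HOM-reduction} to read off all the stated hardness results with $\termsize{}\leq 3$ and $\xpsize{}=0$, and transfer to \DL{} via the parameter-preserving \DS{}-to-\DL{} translation. The only caveat is the degenerate case where every term of $\psi$ is contradictory: then your normalization leaves an empty term set and a constant \DS{}, although $\psi\equiv\textit{false}$ is trivially falsifiable, so that case must be answered directly --- a one-line fix of exactly the same kind as the paper's own w.l.o.g.\ step.
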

\iflong\begin{proof}
    Because every \DS{} can be easily transformed into an equivalent
    \DL{} that shares all of the same parameters, it is sufficient to
    show the theorem for the case that $\MM=\DS$.
    
    First we give a polynomial-time reduction from \TAU{} to
    \DS-\HOM{}. 
    Let $\psi$ be the $3$-DNF formula given as an instance of \TAU.
    W.l.o.g., we can assume that the all-zero assignment satisfies
    $\psi$, since otherwise $\psi$ is a trivial no-instance.
    Then, the \DS{} $\ds = (T_{\psi}, 0)$, where $T_{\psi}$ is the set
    of terms of $\psi$ is the constructed instance of
    \DS-\HOM{}. Since the reduction is clearly polynomial, it only
    remains to show that $\phi$ is a tautology if and only if $\ds$
    classifies all examples in the same manner as the all-zero
    example, i.e., positively. But this is easily seen to hold, which
    concludes the reduction from \TAU{} to \DS-\HOM{}.
    All statements in the theorem now follow from 
    \Cref{lem:HOM-reduction}.
\end{proof}\fi

\ifshort \begin{theorem}[$\star$]\fi\iflong\begin{theorem}\fi\label{th:DS-SMLCEX-W1}
  Let $\MM \in \{\DS,\DL\}$. 
  \MM-\MLCEX{}$(\xpsize{})$ is \Wh{1}.
\end{theorem}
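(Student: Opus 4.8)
The plan is to first reduce to the case $\MM=\DS$: every \DS{} $(T,b)$ can be turned in polynomial time into an equivalent \DL{} computing the same classification function, by replacing each term $t\in T$ with a rule $(t,1-b)$ and appending the default rule $(\emptyset,b)$. This maps a \DS-\MLCEX{} instance to a \DL-\MLCEX{} instance with the same example and the same value of \xpsize{}, so hardness for \DS{} transfers to \DL{}. By \Cref{lem:kHOM-reduction}, the instance $(\ds,e_0,k)$ of \DS-\MLCEX{}, with $e_0$ the all-zero example, is a yes-instance iff $(\ds,k)$ is a yes-instance of \DS-\kHOM{}; hence it suffices to show that \DS-\kHOM{} is \Wh{1} parameterized by $k$. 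I would prove this by an fpt-reduction (which is also a polynomial-time reduction) from \MCC{}.

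Given an instance $(G,k)$ of \MCC{} with proper $k$-coloring $V_1,\dotsc,V_k$ of $V(G)$, I would introduce one feature $f_v$ per vertex $v\in V(G)$ and build the decision set $\ds=(T,0)$ whose term set $T$ consists of: a \emph{color term} $t_i=\SB (f_v=0) \SM v\in V_i\SE$ for every color class $i\in[k]$, and a \emph{conflict term} $\SB (f_u=1),(f_v=1)\SE$ for every pair of distinct, non-adjacent vertices $u,v$. The crucial point is that each color term is all-negative, so it applies to $e_0$ and forces $\ds(e_0)=1$; consequently a solution of \DS-\kHOM{} must set at most $k$ features to $1$ so that \emph{no} term applies, i.e.\ so that $\ds(e)=0$.

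Correctness would then follow from two observations about such an example $e$. First, $e$ breaks every color term iff it sets at least one feature of each color class to $1$; as there are $k$ classes and at most $k$ ones are allowed, this forces exactly one selected vertex per class. Second, $e$ breaks every conflict term iff no non-adjacent pair is simultaneously selected. Together these show that an example with at most $k$ ones achieves $\ds(e)\neq\ds(e_0)$ iff the selected vertices form a set containing one vertex per color that is pairwise adjacent, i.e.\ a $k$-clique of $G$. This settles both directions at once and preserves the parameter $k$, yielding \W{1}-hardness of \DS-\kHOM{}$(k)$ and hence of \DS-\MLCEX{}$(\xpsize{})$ and \DL-\MLCEX{}$(\xpsize{})$.

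The main conceptual obstacle is that the hardness cannot come from the ``easy'' regime in which $\ds(e_0)$ equals the default classification: there, flipping the positive literals of any short term already changes the output, so \kHOM{} is trivial. The reduction must instead exploit the opposite regime, where all-negative terms force $\ds(e_0)$ away from the default, turning \kHOM{} into the task of \emph{simultaneously falsifying all terms} using few ones. Encoding \MCC{} then amounts to balancing the ``at least one vertex per color'' pressure of the color terms against the ``no non-edge'' constraints of the conflict terms. The only things that still need checking are routine: that the clique example triggers no conflict term (immediate, since its vertices are pairwise adjacent) and that the budget $k$ genuinely forces exactly one vertex per color (immediate from the $k$ color terms and the cardinality bound).
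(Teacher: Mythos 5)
Your proposal is correct and follows essentially the same route as the paper's proof: a parameterized reduction from \MCC{} to \DS{}-\kHOM{} with one feature per vertex, all-negative color terms for the color classes and conflict terms $\{(f_u=1),(f_v=1)\}$ for non-edges, combined with \Cref{lem:kHOM-reduction} and the easy \DS{}-to-\DL{} translation. The only (immaterial) difference is that you take the default class to be $0$ where the paper takes it to be $1$, which merely swaps the output labels; in both cases the sought example is exactly one falsifying all terms.
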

\iflong\begin{proof}
    Again it suffices to show the theorem for the case that $\MM=\DS$.
  We provide a parameterized reduction from \MCC{} to \DS-\kHOM{},
  which is also a polynomial-time reduction. Given an
  instance $(G,k)$ of \MCC{} with a $k$-partition $V_1,\dotsc,V_k$, we
  will construct a \DS{} $\ds = (T_{(1)}\cup T_{(2)}, 1)$
  such
  that $G$ has a $k$-clique if and only if $\ds$ classifies at least
  one example positively, that sets at most $k$ features to $1$. 
  $\ds$ will use one binary feature $f_v$ for
  every vertex $v \in V(G)$ and will classify an example $e$ positively
  if and only if all of the following items are true:
  \begin{enumerate}[(1)]
      \item for every non-edge $uv \notin E(G)$, either the feature
        $f_u$ or the feature $f_v$ are $0$ in $e$, i.e., $e(f_u)=0$ or $e(f_v)=0$ 
      \item for every $i \in [k]$, at least one of the features
        in $\SB f_v \SM v \in V_i \SE$ is $1$ in $e$,
  \end{enumerate}
  For every property (i) from the list above, we create a set of terms
  $T_{(i)}$ such that an example $e$ does not satisfy any term in
  $T_{(i)}$ if and only if it satisfies (i).
  Let $T_{(1)}$ be the set of terms equal to $\SB \{(f_u=1), (f_v=1)\}\SM  uv \notin E(G)\SE$.
  For every $1 \leq i \leq k$, let $t_i$  be the term $\SB (f_v=0) \SM  v \in V_i\SE$.
  Let $T_{(2)}$ be the set of terms equal to $\SB t_i \SM  1 \leq i \leq k \SE$.

  Clearly, the reduction can be achieved in polynomial-time.
  It remains to show that $G$ has a $k$-clique if and only if $\ds$
  classifies at least one example positively,
  that sets at most $k$ features to $1$.
  Towards showing the
  forward direction, let $C=\SB v_{1},\dotsc,v_{k}\SE$ be a $k$-clique of
  $G$. We
  claim that the example $e$ that is $1$ exactly at the features
  $\SB f_{v_1}, \dotsc f_{v_k}\SE$
  (and otherwise $0$) is classified
  positively by $\ds$. 
  By construction, $e$ does not satisfy any of the terms from
  $T_{(2)}$. Moreover, since $C$ is a $k$-clique also 
  no term from $T_{(1)}$ is satisfied by $e$.
  Therefore, $e$ is classified positively by $\ds$.

  Towards showing the reverse direction, suppose that there is an
  example $e$,   that sets at most $k$ features to $1$,
  which is classified positively by $\ds$, i.e., no term from
  $T_{(1)}$ and $T_{(2)}$ is satisfied by $e$.
  Since $e$ does not satisfy $T_{(2)}$ we get that $e$ sets
  at least $k$ to $1$, i.e., one feature $f_v$ with $v \in
  V_i$ for every $i \in [k]$.
  Moreover, since $e$ does not satisfy $T_{(1)}$, all the vertices $v$
  with $e(f_v)=1$ form a clique in $G$. Therefore, $C=\{v | e(f_v)=1
  \land v \in V(G)\}$ is a $k$-clique of $G$.

  Note that $\ds$ classifies the example $e_0$ negatively.
  This implies that \DS{}-\kHOM{} is \Wh{1} and
  because of  \Cref{lem:kHOM-reduction} we obtain that
  \DS-\MLCEX{}$(\xpsize)$ is \Wh{1}.
\end{proof}\fi

\ifshort \begin{theorem}[$\star$]\fi\iflong\begin{theorem}\fi\label{th:DSE-SMLCEX-W1}
  Let $\MM \in \{\DSE,\DLE\}$. 
  \MM-\MLCEX{}$(\enssize{}+\xpsize{})$ is \Wh{1} even if \termsize{} is constant.
\end{theorem}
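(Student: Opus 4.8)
The plan is to reduce from \MCC{} to \DSE-\kHOM{} and then invoke \Cref{lem:kHOM-reduction} to transfer the hardness to \DSE-\MLCEX{}; the crucial point is to realize the multicolored-clique constraints with an ensemble whose size is bounded by a function of $k$ and whose terms all have size at most $2$, so that this can serve as an fpt-reduction for the combined parameter $\enssize+\xpsize$ while keeping \termsize{} constant. Given an instance $(G,k)$ of \MCC{} with color classes $V_1,\dots,V_k$, I would use one feature $f_v$ per vertex $v$ and build an ensemble $\EEE$ consisting of exactly three kinds of \DS{}s: (i) a single non-edge checker $D_{NE}=(T_{NE},1)$ with $T_{NE}=\SB\{(f_u=1),(f_v=1)\}\SM uv\notin E(G)\SE$, which outputs $0$ as soon as some non-edge is fully selected and $1$ otherwise; (ii) for each color class $i\in[k]$ a coverage checker $D_i=(T_i,0)$ with $T_i=\SB\{(f_v=1)\}\SM v\in V_i\SE$, which outputs $1$ iff at least one vertex of $V_i$ is selected; and (iii) exactly $k$ copies of the trivial \DS{} $M^0=(\emptyset,0)$ that classifies everything as $0$. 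All terms have size at most $2$ and $|\EEE|=2k+1$, so \termsize{} is constant and \enssize{} is bounded by $k$ (whereas the number of terms per element is polynomial, which is allowed).

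The heart of the argument is a counting claim showing that majority voting over this (odd-sized) ensemble computes exactly the conjunction ``no non-edge selected \emph{and} all $k$ classes covered'' on inputs that set at most $k$ features to $1$. For such an input $e$, let $s$ be the number of covered classes; then the number of elements voting $1$ is $s$ if $D_{NE}(e)=0$ and $1+s$ if $D_{NE}(e)=1$, while the majority threshold is $\lfloor(2k+1)/2\rfloor+1=k+1$. Hence $\EEE(e)=1$ forces $D_{NE}(e)=1$ and $s=k$, and since $e$ has at most $k$ ones, $s=k$ means exactly one vertex is chosen per class; $D_{NE}(e)=1$ then guarantees these $k$ vertices are pairwise adjacent, i.e.\ they form a $k$-clique. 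Conversely, a $k$-clique uses one vertex per color, so it yields $1+k=k+1$ positive votes and $\EEE(e)=1$. Since $\EEE(e_0)=0$ (only $D_{NE}$ votes $1$ on the all-zero example), this shows $(\EEE,k)$ is a yes-instance of \DSE-\kHOM{} iff $G$ has a $k$-clique. As the reduction is polynomial and maps $k$ to $\enssize+\xpsize=O(k)$, it is an fpt-reduction, and \Cref{lem:kHOM-reduction} converts it into the \MLCEX{} instance $(\EEE,e_0,k)$, giving \Wh{1}-hardness of \DSE-\MLCEX$(\enssize+\xpsize)$ with constant \termsize{}.

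Finally, for \DLE{} I would replace each \DS{} in $\EEE$ by an equivalent \DL{} with the same terms: $D_{NE}$ becomes the rules $(t,0)$ for $t\in T_{NE}$ followed by the default rule $(\emptyset,1)$, each $D_i$ becomes the rules $(t,1)$ for $t\in T_i$ followed by $(\emptyset,0)$, and $M^0$ becomes the single rule $(\emptyset,0)$; this preserves ensemble size, term size, and the computed function, so the same reduction works. I expect the main obstacle to be precisely the threshold bookkeeping in the counting claim: choosing the number of padding models $M^0$ and keeping the total size odd (to avoid ties in the majority rule) so that a \emph{single} non-edge checker, together with the budget of $k$ ones, suffices to make the majority behave like the intended conjunction. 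This is exactly what lets the ensemble stay small and the terms stay constant-size, in sharp contrast to the large, per-non-edge ensembles produced by \Cref{lem:MME-kHOM-W1}.
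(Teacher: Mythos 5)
Your proposal is correct and matches the paper's proof essentially verbatim: the paper also reduces \MCC{} to \DSE-\kHOM{} using exactly the ensemble $\{\ds_{(1)},\ds_{(2)}^1,\dotsc,\ds_{(2)}^k,\ds_\perp^1,\dotsc,\ds_\perp^k\}$ of size $2k+1$ (non-edge checker with default $1$, per-class coverage checkers with default $0$, and $k$ trivial padding models), with the same majority-counting argument and the same appeal to \Cref{lem:kHOM-reduction}. The only cosmetic difference is that the paper dispatches the \DLE{} case by the generic \DS{}-to-\DL{} translation rather than writing out the rule lists explicitly as you do.
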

\iflong\begin{proof}
    Again it suffices to show the theorem for the case that $\MM=\DSE$.
  We provide a parameterized reduction from \MCC{} to \DSE-\kHOM{},
  which is also a polynomial-time reduction. Given an
  instance $(G,k)$ of \MCC{} with a $k$-partition $V_1,\dotsc,V_k$, we
  will construct a \DSE{} $\dse$ 
  such that $|\dse| = 2k+1$, \termsize{} is equal to $2$ and
  $G$ has a $k$-clique if and only if $\dse$ classifies at least
  one example positively, 
  that sets at most $k$ features to $1$. 
  We will construct \DSE $\dse = \SB \ds_{(1)}, \ds_{(2)}^1, \dotsc,
  \ds_{(2)}^k, \ds_{\perp}^1,\dotsc, \ds_{\perp}^k \SE$ which use one
  binary features $f_v$ for
  every vertex $v \in V(G)$ and will classify an example $e$ positively
  if and only if all of the following items are true:
  \begin{enumerate}[(1)]
      \item for every non-edge $uv \notin E(G)$, either the feature
        $f_u$ or the feature $f_v$ are $0$ in $e$, i.e., $e(f_u)=0$ or $e(f_v)=0$ 
      \item for every $i \in [k]$, at least one of the features
        in $\SB f_v \SM v \in V_i \SE$ is $1$ in $e$,
  \end{enumerate}
  Let $T_{(1)}$ be the set of terms equal to $\SB \{(f_u=1), (f_v=1)\}\SM  uv \notin E(G)\SE$
  and let $\ds_{(1)} = (T_{(1)}, 1)$.
  For every $1 \leq i \leq k$, let $\ds_{(2)}^i = (\SB \{(f_v = 1)\}\SM v \in V_i \SE,0)$ and 
  let $\ds_{\perp}^i = (\emptyset, 0)$.

  Clearly, the reduction can be achieved in polynomial-time.
  It remains to show that $G$ has a $k$-clique if and only if $\dse$
  classifies at least one example positively,
  that sets at most $k$ features to $1$.
  Towards showing the
  forward direction, let $C=\SB v_{1},\dotsc,v_{k}\SE$ be a $k$-clique of
  $G$. We
  claim that the example $e$ that is $1$ exactly at the features
  $\SB f_{v_1}, \dotsc f_{v_k}\SE$
  (and otherwise $0$) is classified
  positively by $\dse$. 
  By construction, $e$ is positively classified by every \DS{} $\ds_{(2)}^i$.
  Moreover, since $C$ is a $k$-clique then 
  no term from $T_{(1)}$ is satisfied by $e$, 
  which means that $e$ is classified positively by $\ds_{(1)}$.
  Therefore, $e$ is classified positively by $\dse$.

  Towards showing the reverse direction, suppose that there is an
  example $e$, that sets at most $k$ features to $1$,
  which is classified positively by $\dse$.
  Note that all $\ds_{\perp}^i$ classify all example negatively.
  Therefore \DS{}s $\ds_{(1)}, \ds_{(2)}^1, \dotsc, \ds_{(2)}^k$ classify $e$ positively.
  Since $e$ is classified positively by $\ds_{(2)}^i$, 
  we get $e$ sets exactly one feature from $\SB f_v  \SM v \in
  V_i \SE $ to $1$.
  Moreover, since $e$ is classified positively by $\ds_{(1)}$,
  all the vertices $v\in V(G)$ with $e(f_v)=1$ form a clique in $G$.
  Therefore, $C=\{v | e(f_v)=1 \land v \in V(G)\}$ is a $k$-clique of $G$.

  Note that $\dse$ classifies an example $e_0$ negatively.
  This implies that \DSE{}-\kHOM{}$(\enssize{})$ is \Wh{1} even if \termsize{} is constant.
  Moreover, by combining with \Cref{lem:kHOM-reduction} we get that
  \DSE-\MLCEX{}$(\xpsize{}+\enssize{})$ is \Wh{1}.
\end{proof}\fi

\ifshort \begin{theorem}[$\star$]\fi\iflong\begin{theorem}\fi\label{th:DSE-paraNP} 
  Let $\MM \in \{\DSE,\DLE\}$ and let $\PP \in \{\LAEX, \GAEX, \GCEX\}$. 
  \MM-\SPP{} is \coNPh{} even if $\termselem{} + \termsize{}$ is constant;
  \MM-\SMLCEX{} is \NPh{} even if $\termselem{} + \termsize{}$ is constant;
  \MM-\MPP{} is \coNPh{} even if $\termselem{}+ \termsize{} + \xpsize{}$ is constant;
  \MM-\MLCEX{}$(\xpsize{})$ is \Wh{1} even if $\termselem{} + \termsize{}$ is constant.
\end{theorem}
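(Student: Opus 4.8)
The plan is to follow the template of \Cref{th:RF-paraNP}: all four hardness statements are packaged by the generic reduction \Cref{lem:MME-kHOM-W1}, so it suffices to exhibit three small models of the relevant type realizing the Boolean primitives that lemma requires, with \termselem{} and \termsize{} bounded by a constant. First I would reduce to the case $\MM=\DSE$: every \DS{} $(T,b)$ is equivalent to the \DL{} that lists each term $t\in T$ as a rule $(t,1-b)$ followed by an empty default rule $(\emptyset,b)$, and this translation increases the number of terms only by one and leaves the term sizes unchanged, so a proof for \DSE{} transfers to \DLE{} with the same constant parameter bounds.

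The core step is to construct decision sets $M^0$, $M^1_{f}$, and $M^2_{f_1,f_2}$ satisfying the three conditions of \Cref{lem:MME-kHOM-W1}. I would set $M^0=(\emptyset,0)$, which has no applicable term and hence classifies every example as its default $0$. For $M^1_{f}=(\{\{(f=1)\}\},0)$ the single term applies precisely when $e(f)=1$ and then overrides the default, so $M^1_{f}(e)=1$ iff $e(f)=1$. For $M^2_{f_1,f_2}=(\{\{(f_1=1),(f_2=1)\}\},1)$ the single term applies precisely when $e(f_1)=e(f_2)=1$ and flips the default $1$ to $0$, so $M^2_{f_1,f_2}(e)=1$ iff $e(f_1)=0$ or $e(f_2)=0$, exactly as required. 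Each of these uses at most one term of size at most two, giving $\termselem{} \le 2$ and $\termsize{} \le 2$ after the \DL{} translation.

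Given these gadgets, \Cref{lem:MME-kHOM-W1} immediately yields that \DSE{}-\HOM{} is \NPh{} and \DSE{}-\kHOM{}$(\enssize{})$ is \Wh{1}, both under a constant bound on $\termselem{} + \termsize{}$. The remaining work is purely to transport these through \Cref{lem:HOM-reduction}, which equates the yes/no-instances of \HOM{} with the emptiness of subset- and cardinality-minimal abductive or contrastive explanations at the all-zero example, and \Cref{lem:kHOM-reduction}, which identifies \kHOM{} with \MLCEX{} at the all-zero example. These deliver that \SPP{} and \MPP{} are \coNPh{}, that \SMLCEX{} is \NPh{}, and that \MLCEX{}$(\xpsize{})$ is \Wh{1}, verbatim as in \Cref{th:RF-paraNP}. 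I expect no genuine obstacle; the only point to check carefully is the $M^2_{f_1,f_2}$ gadget, since the disjunction $e(f_1)=0 \lor e(f_2)=0$ is not itself a conjunctive term, but it is realized cleanly by using default class $1$ together with the single term for the complementary conjunction.
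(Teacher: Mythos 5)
Your proposal is correct and takes essentially the same route as the paper's proof: reduce to the case $\MM=\DSE$, instantiate \Cref{lem:MME-kHOM-W1} with constant-size \DS{} gadgets ($\termselem\leq 1$, $\termsize\leq 2$), and then transport the hardness of \HOM{} and \kHOM{} through \Cref{lem:HOM-reduction,lem:kHOM-reduction}. Notably, your gadget $M^2_{f_1,f_2}=(\{\{(f_1=1),(f_2=1)\}\},1)$ is the right one — the paper writes $(\{\{(f_1=0),(f_2=0)\}\},1)$, which under the stated \DS{} semantics classifies $e$ positively iff $e(f_1)=1$ or $e(f_2)=1$, so the literals there appear to be a typo that your version (which you correctly flagged as the delicate point) fixes.
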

\iflong\begin{proof}
    Again it suffices to show the theorem for the case that $\MM=\DSE$.
  We will use \Cref{lem:MME-kHOM-W1} to show the theorem by
  giving \DS{}s $M^0$, $M^1_{f}$ and $M^2_{\{f_1,f_2\}}$ satisfying the
  conditions of \Cref{lem:MME-kHOM-W1} as follows.
  We let $M^0$ be the \DS{} $(\emptyset,0)$.
  We let $M^1_{f}$ be the \DS{} given by $(\{\{(f=1)\}\}, 0)$ and
  we let $M^2_{\{f_1,f_2\}}$ be the \DS{} given by $(\{\{(f_1 =
  0),(f_2 = 0 )\}\} , 1)$.
  Note that $M^0$, $M^1_{f}$ and $M^2_{\{f_1,f_2\}}$ have \termsize{} at most $2$ and \termselem{} at most $1$.
  All statements in the theorem now follow from 
  \Cref{lem:MME-kHOM-W1,lem:kHOM-reduction,lem:HOM-reduction}.
\end{proof}\fi

\subsection{OBDDs and their Ensembles}\label{ssec:hard-obdd}

\newcommand{\atl}[1]{\stackrel{#1}{+}}
\newcommand{\Atl}[1]{\stackrel{#1}{\Sigma}}
\newcommand{\Lor}{\bigvee}
\newcommand{\Land}{\bigwedge}
\newcommand{\bla}{ Bla bla bla...}

\newcommand{\nx}[1]{{next(#1)}}
\newcommand{\blank}{f_{\#}}

We are now ready to provide our hardness results for \OBDD{}s and
their ensembles \OBDDEO{}s and \OBDDE{}s. While the proofs for
\OBDD{}s and \OBDDEO{}s follow along very similar lines as the
corresponding proofs for \DT{}s, the main novelty and challenge of this
subsection are the much stronger hardness results for \OBDDE{}s.
Informally, we show that the satisfiability of any CNF formula $\phi$ can be modelled
in terms of an ensemble of two \OBDD{}s $\obdd_1$ and $\obdd_2$ each using a different
ordering of the variables. In particular, it holds that both \OBDD{}s
classify an example positively if and only if the corresponding
assignment satisfies $\phi$. The main idea behind the construction of
$\obdd_1$ and $\obdd_2$ is
to make copies for every occurrence of a variable in $\phi$ and then
use $\obdd_1$ to verify that the assignment satisfies $\phi$ and
$\obdd_2$ to verify that all copies of every variable are assigned to the
same value. 

\ifshort \begin{theorem}[$\star$]\fi\iflong\begin{theorem}\fi\label{th:OBDDE-paraNP}
  Let $\PP \in \{\LAEX, \GAEX, \GCEX\}$. 
  \OBDDE-\SPP{} is \coNPh{} even if  $\enssize{} + \widthelem{}$ is constant;
  \OBDDE-\SMLCEX{} is \NPh{} even if  $\enssize{} + \widthelem{}$ is constant;
   \OBDDE-\MPP{} is \coNPh{} even if  $\enssize{} + \widthelem{} +  \xpsize{}$ is constant;
  \OBDDE-\MLCEX{}$(\enssize{})$ is \Wh{1} even if  $\enssize{} + \widthelem{}$ is constant.
\end{theorem}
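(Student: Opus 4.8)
The plan is to route everything through the two auxiliary problems \HOM{} and \kHOM{}, exactly as in the proofs of \Cref{th:RF-paraNP,th:DSE-paraNP}. By \Cref{lem:HOM-reduction}, showing that \OBDDE-\HOM{} is \NPh{} (with $\enssize+\widthelem$ constant) immediately yields that \OBDDE-\SPP{} is \coNPh{}, that \OBDDE-\SMLCEX{} is \NPh{}, and---taking the $k=0$ instances from items (6), (8), (9) of that lemma---that \OBDDE-\MPP{} is \coNPh{} even with $\xpsize$ constant. Likewise, by \Cref{lem:kHOM-reduction}, showing that \OBDDE-\kHOM{} is \Wh{1} parameterized by $k$ (with $\enssize+\widthelem$ constant) yields that \OBDDE-\MLCEX{} parameterized by $\xpsize$ is \Wh{1}. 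So it suffices to prove these two statements, and in both cases the ensemble will consist of just two \OBDD{}s of constant width using two \emph{different} variable orders.

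For the \NPh{} part I would reduce from \textsc{Cnf-Sat}. Given a formula $\phi$, introduce one fresh feature (a \emph{copy}) for every literal occurrence in $\phi$ and build $\{\obdd_1,\obdd_2\}$. The \OBDD{} $\obdd_1$ reads the copies grouped clause by clause and accepts iff every clause is satisfied; since it only needs to remember whether the current clause is already satisfied and whether all earlier clauses were, it has constant width. The \OBDD{} $\obdd_2$ reads the copies grouped variable by variable and accepts iff, for every variable, all of its copies carry the same value; remembering the value of the first copy of the current variable suffices, so it too has constant width. The two orders (by clause vs.\ by variable) genuinely differ, which is why an \emph{unordered} ensemble is required. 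As the ensemble has two elements, it classifies an example positively iff both \OBDD{}s do, i.e.\ iff the copies encode a consistent assignment satisfying $\phi$; after adding a dummy clause forcing the all-zero example to be negative, \Cref{lem:HOM-reduction} applies and we obtain \NPh{} of \HOM{} with $\enssize=2$ and $\widthelem=O(1)$.

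For the \Wh{1} part the clause/variable trick would blow up the number of $1$-features (a satisfying assignment of a clique formula sets $\Theta(n)$ copies to $1$), so I would instead reduce from \MCC{} with a bounded-weight encoding. For $(G,k)$ with parts $V_1,\dots,V_k$, create for every pair $\{i,j\}$ and every edge $e\in E(V_i,V_j)$ two side-copies $z^i_e,z^j_e$. Here $\obdd_1$, ordered by pair and reading the two side-copies of each edge adjacently, checks that every pair selects at least one edge and that the two side-copies of each edge agree; this needs constant width. The key gadget is $\obdd_2$: ordered by part, then by vertex $v\in V_i$, then by the side-copies $z^i_e$ of edges incident to $v$, it merely counts (up to two) how many vertices of the current part carry a selected side-copy and rejects once this reaches two, thereby forcing all of part $i$'s selected edges to share a single endpoint $v_i$ \emph{without ever storing $v_i$'s identity}. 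With budget $k'=2\binom{k}{2}$ the two checks together force exactly one edge per pair, all consistent, so a positive example of weight at most $k'$ corresponds precisely to a $k$-clique $\{v_1,\dots,v_k\}$; the all-zero example is negative, so \Cref{lem:kHOM-reduction} gives \Wh{1} of \kHOM{} (hence of \MLCEX{}) with $\enssize=2$, $\widthelem=O(1)$, and $k'=O(k^2)$.

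The main obstacle throughout is keeping both \OBDD{}s of constant width \emph{simultaneously}: any single ordering that tries to verify clause-satisfaction-plus-consistency (or adjacency-plus-endpoint-consistency) must remember an entire assignment or a vertex identity, giving width polynomial in the input. The two-ordering split is what circumvents this, and the delicate point in the \Wh{1} construction is the $\obdd_2$ counting gadget, which enforces per-part consistency using only a constant number of states (a counter capped at two) rather than storing the chosen vertex---exactly what keeps the width constant and, at the same time, the number of $1$-features bounded by a function of $k$. Verifying the precise width bounds and the correctness of the budget argument is then routine.
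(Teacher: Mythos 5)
Your overall route is the same as the paper's: both arguments reduce to \HOM{}/\kHOM{} and then invoke \Cref{lem:HOM-reduction,lem:kHOM-reduction}, and both rest on the same key idea of introducing a fresh copy per occurrence and using two constant-width \OBDD{}s with \emph{different} variable orders---one verifying the semantic property with the copies grouped one way, the other verifying that all copies of an object agree with the copies grouped the other way. (This copy/consistency trick is exactly how the paper motivates its construction, and your CNF-SAT gadget is precisely the paper's informal description of it.) The packaging differs: the paper gives a \emph{single} reduction from \MCC{} that is simultaneously a polynomial-time and a parameterized reduction, so one construction yields both the \NP{}-hardness and the \W{1}-hardness claims, whereas you split the work into a CNF-SAT reduction and a separate \MCC{} gadget. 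Your \MCC{} gadget is in fact leaner than the paper's: two side-copies per edge plus a ``count the marked vertices of each part, cap at two'' automaton, versus the paper's $k+2$ copies per vertex, three copies per edge, and exactly-one/biconditional gadgets of width $3$--$4$. Both versions are correct in substance, and your budget argument and width bounds check out.

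There is, however, one concrete flaw: you set $\enssize{}=2$. Under the paper's majority semantics an ensemble classifies $e$ as $b$ only if at least $\lfloor|\EEE|/2\rfloor+1=2$ of its elements do, so a two-element ensemble whose elements disagree classifies $e$ as \emph{neither} class and is not a total model. This is not a corner case in your constructions: in both of them the all-zero example $e_0$ is accepted by the consistency-checking \OBDD{} and rejected by the other one, so $M(e_0)$ is undefined, and \Cref{lem:HOM-reduction,lem:kHOM-reduction} (whose statements begin by setting $c=M(e_0)$) cannot even be invoked. The fix is immediate and is exactly what the paper does: add a third, trivial \OBDD{} that classifies every example negatively. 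Then ``positive'' means both real \OBDD{}s accept, ``negative'' means at least one rejects, the classification is total, $e_0$ is negative, and your two reductions go through verbatim with $\enssize{}=3$ and \widthelem{} still constant.
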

\iflong\begin{proof}
  We provide a parameterized reduction from \MCC{} to \OBDDE-\kHOM{},
  which is also a polynomial-time reduction. Given an
  instance $(G,k)$ of \MCC{} with a $k$-partition $V_1,\dotsc,V_k$, we
  will construct an \OBDDE{} $\obdde$ such that
  $|\obdde|=3$, the maximal
  width of any \OBDD{} in $\obdde$ at most $4$
  and
  $G$ has a $k$-clique if and only if $\obdde$ classifies at least
  one example positively. Let $\{v_1,\dotsc, v_n\} = V(G) = \bigcup_{i} V_i$.
  $\obddeo$ will use $k+2$ binary vertex features $f_a, f_a', f_a^1,\dotsc f_a^k$ for
  every vertex $v_a \in V(G)$ and three binary edge features $f_{a,b}'$,
  $f_{a,b}$ and $f_{b,a}$ for each edge $v_av_b \in E(G) \land a<b$.
  
  Before constructing $\obdde$, we start by providing some helpful
  auxiliary \OBDD{}s that will correspond to operations on
  some subset $F$ of features. For convenience, we define the fresh
  auxiliary feature $\blank$ that will not be part of the constructed
  \OBDD{} but instead only helps with its construction.
  We also fix an arbitrary ordering $<_F$ of the features in $F$.
  Let $\nx{}$ be the successor function of the features in $F$
  w.r.t. $<_F$, i.e., for every $f \in F$, $\nx{f} = f'$ if $f$ has a
  successor $f'$ and $\nx{f}=\blank$ otherwise.
  Let $D_F$ be the directed acyclic graph with
  vertices $V(D_F)=\SB g^f_0, g^f_1, g^f_2 \SM f\in F\cup \{\blank\} \SE$
  and source vertex $s = g_0^{\min_{<_F}F}$. Let $\fBDD_{F}$ be the
  function giving by setting $\fBDD_{F}(g^f) =f$ for every $g^f\in
  V(D_F)$. We now define different \OBDD{}s based on $(D_F,
  \fBDD_{F})$, which only differ in the edges and the assignment of $0$-neighbors
  and $1$-neighbors.
  
  We first define the \OBDD{} $\obdd^{!}_F$ that
  classifies an example $e$ positively if and only if exactly one of
  the features $f \in F$ appears positively in $e$. $\obdd^{!}_F =
  (D^{!}_F, \fBDD_{F})$ is obtained from $(D_F, \fBDD_{F})$ as follows.
  For each $f \in F$ and $i \in \{0,1,2\}$, let $g^\nx{f}_i$ and
  $g^\nx{f}_{\max \{i+1, 2\}}$ be the $0$-neighbor and $1$-neighbor of
  $g^f_i$, respectively. We set $t_1$ to be $g^{\blank}_1$  and we
  set $t_0$ to be equal to the vertex obtained after identifying
  $g^{\blank}_0$ with $g^{\blank}_2$.
  Note that the \OBDD{} $\obdd^{!}_F$ simulates a simple counter, i.e.,
  if an example $e$ reaches a node $g^f_i$ it means that for $i$ equal
  to $0$, $1$ and $2$ there are no, exactly one and more then one,
  respectively, features in $\{f' | f' \in F \land f' <_F f\}$, for
  which $e$ is $1$.  
      
  We now define the \OBDD{} $\obdd^{\exists}_F = (D^{\exists}_F, \fBDD_{F})$ 
  that classifies an examples $e$
  positively if and only if there is a feature $f \in F$, where $e$ is
  set to $1$, i.e. $\exists_{f\in F} e(f)=1$.
  The construction of $D^{\exists}_F$ is similar to $D^{!}_F$, but 
  we set $t_0$ to be $g^{\blank}_0$  and we
  set $t_1$ to be equal to the vertex obtained after identifying
  $g^{\blank}_1$ with $g^{\blank}_2$. 
  This modification ensures that only examples
  with one or more features in $F$ set to $1$ reach $t_1$.

  Next, we define the \OBDD{} $\obdd^{\Leftrightarrow \exists}_{\{f\} \cup F} =
  (D^{\Leftrightarrow \exists}_{\{f\} \cup F}, \fBDD^{\Leftrightarrow
    \exists}_{\{f\}\cup F})$ 
  that classifies an examples $e$
  positively if and only if 
  the special feature $f$ is set to $1$ by $e$ if and only if
  there is a feature $f' \in F$ that appears
  positively in $e$,
  i.e., $e(f)=1 \Leftrightarrow \exists_{f'\in F} e(f')=1$.
  To obtain $D^{\Leftrightarrow \exists}_{\{f\} \cup F}$ we modify 
  $D^{\exists}_F$ by replacing $t_0$ and $t_1$ with new vertices
  $g^f_0$ and $g^f_1$, respectively. Moreover for each $i \in \{0,1\}$, 
  let $t_1$ be $i$-neighbor of $g^f_i$ and let $t_0$ be the other neighbor.
  The modification compares the result from $D^{\exists}_F$ with $f$.
  
  We now define the \OBDD{} $\obdd^{=}_F = (D^{=}_F, \fBDD_{D^{=}_F})$ 
  that classifies an example $e$ 
  positively if and only if 
  all or none of the features $f \in F$ appears positively in $e$,
  i.e., $\forall_{f\in F} e(f)=0$ or $\forall_{f\in F} e(f)=1$.
  For each $f \in F$, let $g^\nx{f}_2$ be the $0$-neighbor and $1$-neighbor of $g^f_2$.
  Let $f_{first}$ be the first feature in $F$.
  Let $g^\nx{f_{first}}_0$ and $g^\nx{f_{first}}_1$ be the $0$-neighbor
  and $1$-neighbor of the source vertex $s = g^{f_{first}}_0$, respectively.
  For each $f\in F \setminus \{f_{first} \}$ and $i \in \{0, 1\}$,
  let $g^\nx{f}_i$ be $i$-neighbor of $g^f_i$ and let $g^\nx{f}_2$ be the other neighbor.
  We set $t_0$ to be $g^{\blank}_2$  and we
  set $t_1$ to be equal to the vertex obtained after identifying
  $g^{\blank}_0$ with $g^{\blank}_1$.
  Note that vertices in $D^{=}_F$ have different roles compared to
  other reductions, which are to store information about
  $e(f_{first})$ and the negative result.
  If example $e$ reaches node $g^f_i$ and $i \in \{0,1\}$, it implies
  that for every $f'\in F \land f' < f$, $e(f_{first}) = e(f') = i$.
  If example $e$ reaches node $g^f_2$, it implies that there exists
  $f' \in F \land f' <_F f$, such that $e(f_{first}) \neq e(f')$.
  
  Note that all of the above \OBDD{}s have width equal to $3$.
  Before showing our main reduction we need to introduce a tool to help
  us merge the result of several \OBDD{}s into one.
  
  Let $\obdd_1, \cdots ,\obdd_m$ be \OBDD{}s such that no two of them share a
  feature. We will construct the \OBDD{} $\obdd = (D, \fBDD)$ 
  that classifies an examples $e$
  positively if and only if for every $1 \leq i \leq m$, $\obdd_i$
  classifies $e$ positively.
  For each $1 \leq i \leq m$, let $\obdd_i = (D_i, \fBDD_i)$, $F_i$ be
  set of features of $\obdd_i$ with an order $<_{F_i}$ and let $s^i$,
  $t^i_0$, $t^i_1$ be the special nodes $s$, $t_0$, $t_1$ of $D_i$,
  respectively.
  Let $F = \bigcup F_i$ and let $<_F$ be the ordering of the features
  in $F$ such that for every $f_1 \in F_i$ and $f_2 \in F_j$, if $f_1 <_F f_2$
  then $i<j$ or $i=j$ and $f_1 <_{F_i} f_2$.

  Let $D$ be the graph obtained from the union of the graphs $D_i$
  after adding the new vertices $\{t_0, t_1\} \cup \{g_{\perp}^f | f
  \in F\cup \{\blank\} \}$ as follows.
  For every $f\in F$, let $g_{\perp}^\nx{f}$ be the $0$-neighbor and
  $1$-neighbor of $g_{\perp}^f$.
  For every $ 1 \leq i <m$, we identify $s^{i+1}$ with $t_1^i$ and
  $g^{f_{first}^{i+1}}_{\perp}$ with $t_0^i$, where $f_{first}^{i+1}$
  is the $<_{F_{i+1}}$-smallest feature in $F_{i+1}$.
  Moreover, we set $t_1$ to be $t_1^m$ and we
  set $t_0$ to be equal to the vertex obtained after identifying
  $g^{\blank}_{\perp}$ with $t_0^m$.
  Note that if example $e$ reaches a node $g^f_{\perp}$ and $f \in
  F_i$, it means there exists $1\leq j <i$ such that $\obdd_j$
  classifies $e$ negatively.
  Moreover, the width of $\obdd$ is equal maximal width of $\obdd_i$ plus one.  

  Finally we are ready to provide our reduction from \MCC{}. We will
  construct $\OBDDEO$ $\obddeo = \{\obdd_1, \obdd_2\, \obdd_{3}\}$, which
  will classify an example $e$ positively if and only if all of the
  following items are true:
  \begin{enumerate}[(1)]
  \item for every $v_a \in V(G)$, all or none of the vertex features $f_a, f_a', f_a^1,\dotsc f_a^k$ are set to $1$ in $e$,
      \item for every $v_av_b \in E(G)$ and $a<b$, all or none of the edge features $f_{a,b}'$, $f_{a,b}$, $f_{b,a}$ are set to $1$ in $e$,
      \item for every $1 \leq i \leq k$, exactly one feature from $F^i
        = \{f_a' | v_a \in V_i\}$ is set to $1$ in $e$.
      \item for every $1 \leq i<j \leq k$, exactly one feature from $F^{i,j} =\{ f_{\min \{a,b\}, \max\{a,b\}}' | v_a \in V_i \land v_b \in V_j \land v_av_b\in E(G)\}$ is set to $1$ in $e$,
      \item for every $1 \leq i,j \leq k \land i\neq j$ and $v_a \in V_i$, $e(f_a^j)=1$ if and only if at least one feature from $F_a^j = \{f_{a,b} | v_b \in V_j \land v_av_b\in E(G)\}$ is set to $1$ in $e$ 
  \end{enumerate}
For every item $(i)$ from the list above we create $\obdd_{(i)}$ which
classify an example $e$ positively if and only if it satisfies property $(i)$.

\begin{align*}    
&\obdd_{(1)} = \Land_{v_a \in V(G)} \obdd^{=}_{\{f_a, f_a', f_a^1,\dotsc f_a^k\}}\\
&\obdd_{(2)} = \Land_{v_av_b \in E(G) \land a<b} \obdd^{=}_{\{f_{a,b}',f_{a,b}, f_{b,a}\}}\\
&\obdd_{(3)} = \Land_{1\leq i \leq k} \obdd^{!}_{F^i} \\
&\obdd_{(4)} = \Land_{1\leq i < j \leq k} \obdd^{!}_{F^{i,j}}\\
&\obdd_{(5)} = \Land_{ 1 \leq i,j \leq k \land i\neq j \land v_a \in V_i} \obdd^{\Leftrightarrow \exists}_{\{f_a^j \cup F_a^j\}} \\
\end{align*}

Let $\obdd_1 = \obdd_{(1)} \land \obdd_{(2)}$, $\obdd_2 = \obdd_{(3)} \land \obdd_{(4)} \land \obdd_{(5)}$ and let $\obdd_{3}$ be a trivial \OBDD that classifies every
  example negatively.
  
Note that $\obddeo$ classifies an example $e$ positively if and only if
$\obdd_1$ and $\obdd_2$ classifies $e$ positively, i.e., if and only if
for every $ 1 \leq i  \leq 5$, $\obdd_{(i)}$ classifies $e$ positively. 
Moreover, the width of $\obdd_1$ and $\obdd_2$ is $4$, because they
are made from the logical conjunction of \OBDD{}s
$\obdd^{!}_F$, $\obdd^{\exists}_F$, $\obdd^{\Leftrightarrow
  \exists}_{\{f\} \cup F}$ and $\obdd^{=}_F$ of width $3$.
This completes the description of our reduction, which can clearly be
achieved in polynomial-time.
It remains to show that $G$ has a $k$-clique if and only if $\obdde$
classifies at least one example positively.

Towards showing the
forward direction, let $C=\SB v_{a_1},\dotsc,v_{a_k}\SE$ be a $k$-clique of
$G$, where $v_{a_i} \in V_i$ for every $i$ with $1 \leq i \leq k$. We
claim that the example $e$ that is $1$ exactly at the features
$\SB f_{a_i}, f_{a_i}', f_{a_i}^j \SM 1 \leq i,j \leq k\SE \cup
\SB f_{a_i, a_j}', f_{a_i, a_j}, f_{a_j, a_i} \SM 1 \leq i,j \leq k \land i<j \SE$
(and otherwise $0$) is classified
positively by $\obdde$. By construction, $e$ is positively classified by 
$\obdd_{(1)}$ and $\obdd_{(2)}$, since $C$ is a $k$-clique also 
$\obdd_{(3)}$, $\obdd_{(4)}$ and $\obdd_{(5)}$.
Therefore, $e$
is classified positively by $\obdd_1$ and $\obdd_2$, which represents
the majority of the \DT{}s in $\obdde$.

Towards showing the reverse direction, suppose that there is an
example $e$ that is classified positively by $\obdde$. Because $e$ has
to be classified positively by the majority of \OBDDE{} in $\obdde$, we obtain that $e$ has to be classified positively by
$\obdd_1$ and $\obdd_2$. Since $e$ is
classified positively by $\obdd_1$, we obtain all of the features 
that corresponds to a specific vertex or a specific edge are assigned the same way.
Moreover, since $\obdd_{(5)}$ and $\obdd_1$ classify $e$ positively,
this means that if edge feature $f_{a,b}$ is set to $1$, then both
vertex features $f_a$ and $f_b$ are set to $1$.
Since $e$ is classified positively by $\obdd_{1}$, $\obdd_{(3)}$ and $\obdd_{(4)}$,
it means that the edge features of exactly $\binom{k}{2}$ edges are set to $1$
and that the vertex features of exactly $k$ vertices are set to $1$.
Therefore, $C=\{v_{a} | e(f_a)=1 \land v_a \in V(G)\}$ is a $k$-clique
of $G$.

Note that the all-zero example is classified negatively by $\obdde$ 
and if there is an example $e$ such that $\obdde$ classify $e$ positively 
then $e$ contains exactly $3\binom{k}{2} + k(k+2)$ positively assigned features.
It implies that \OBDDE{}-\kHOM{} is \Wh{1} even if \enssize{} equals
$3$ and \widthelem{} equals $4$,
which combined with \Cref{lem:kHOM-reduction} shows that
\OBDDE-\MLCEX{} is \Wh{1} parameterized by \xpsize{} even if $\enssize{}+\widthelem{}$ is constant.
Moreover, \OBDDE{}-\HOM{} is \NPh{} even if \enssize{} equals $3$ and \widthelem{} equals $4$,
which combined with \Cref{lem:HOM-reduction} shows every other statement in the theorem.
\end{proof}\fi

\iflong
\begin{lemma}\label{lem:DT-to-OBDD}
  Let $\TTT$ be an ordered $\DT{}$ for the ordering $<_{\TTT}$. There is an $\OBDDOc{<_{\TTT}}$
  $\obdd^{\TTT}$ that can be constructed in polynomial-time such that
  $\TTT(e) = \obdd^{\TTT}(e)$ for every example $e$.
\end{lemma}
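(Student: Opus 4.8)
The plan is to observe that an ordered \DT{} is structurally almost an \OBDD{} already: the only differences are that a \DT{} may have many leaves instead of the two sinks $t_0,t_1$, so the construction amounts to merging leaves. Concretely, given $\TTT=(T,\lambda)$ with feature ordering $<_{\TTT}$, I would build $\obdd^{\TTT}=(D,\fBDD)$ as follows. Keep every inner node $t$ of $T$ as a non-sink node of $D$, set $\fBDD(t)=\lambda(t)$, and use the $0$-child (resp.\ $1$-child) of $t$ in $T$ as the $0$-neighbor (resp.\ $1$-neighbor) of $t$ in $D$. Introduce two sink vertices $t_0$ and $t_1$, redirect every arc that pointed to a $b$-leaf of $T$ to $t_b$, and discard the leaves. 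Finally declare the source $s$ of $D$ to be the root of $T$ (which may coincide with $t_0$ or $t_1$ when $\TTT$ is a single leaf).

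First I would check that $D$ is a well-formed \BDD{}: it is acyclic because $T$ is a tree and the only identifications merge leaves, which have no outgoing arcs, so no cycle can be created and $t_0,t_1$ remain sinks; every non-sink node (an inner node of $T$) has exactly two outgoing neighbors; and $\fBDD$ labels precisely the non-sink nodes. Next I would verify that $\obdd^{\TTT}$ respects $<_{\TTT}$: every $s$--$t_b$ path in $D$ visits exactly the sequence of inner nodes lying on some root-to-leaf path of $T$, since merging leaves alters neither the internal decision nodes nor their order along any path. As $\TTT$ is ordered for $<_{\TTT}$, the features along each such path appear consistently with $<_{\TTT}$ (in particular no feature repeats, $<_{\TTT}$ being strict), which is exactly the defining property of an $\OBDDOc{<_{\TTT}}$.

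It then remains to establish equivalence and the running time. For an example $e$, the path $P_{\obdd^{\TTT}}(e)$ is obtained by taking, at each non-sink node $v$, the $e(\fBDD(v))$-neighbor; by construction this reproduces step by step the root-to-leaf traversal of $\TTT$ under $e$, which at each inner node $t$ follows the $e(\lambda(t))$-child. Hence $P_{\obdd^{\TTT}}(e)$ ends in $t_b$ precisely when the traversal of $\TTT$ ends in a $b$-leaf, so $\obdd^{\TTT}(e)=\TTT(e)$ for every $e$. The construction inspects each node of $T$ once and thus runs in time $\bigoh(\som{\TTT})$. Since the paper works only with complete \OBDD{}s, I would conclude by applying the known polynomial-time completion procedure \cite[Observation 1]{DBLP:journals/corr/abs-2104-02563} to $\obdd^{\TTT}$, which preserves both the classification function and the ordering.

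The argument is essentially routine, and I expect no genuine obstacle beyond bookkeeping. The only point requiring care is the ordering check, namely confirming that identifying leaves (and only leaves) neither introduces a cycle nor disturbs the relative order of decision nodes along any path, so that the single global ordering $<_{\TTT}$ witnessing that $\TTT$ is ordered also witnesses that $\obdd^{\TTT}$ is an $\OBDDOc{<_{\TTT}}$.
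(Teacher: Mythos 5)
Your construction is exactly the paper's: the paper also builds $\obdd^{\TTT}$ by directing the edges of $T$ away from the root, identifying all $b$-leaves with a single sink $t_b$ for each $b\in\{0,1\}$, setting $\fBDD^\TTT(v)=\lambda(v)$ on inner nodes, and then invoking the same completion result \cite[Observation 1]{DBLP:journals/corr/abs-2104-02563}. Your write-up is simply more explicit about the routine verifications (well-formedness, ordering, equivalence, runtime) that the paper leaves implicit, so the proposal is correct and takes essentially the same approach.
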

\begin{proof}
  Let $\TTT=(T,\lambda)$. We define $\obdd^{\TTT}=(D^\TTT,\fBDD^\TTT)$ as follows. Let
  $D^\TTT$ be the directed acyclic graph obtained from $T$ after
  directing all edges away from the root and identifying every
  $b$-leaf with a new vertex $t_b$ for every $b \in
  \{0,1\}$. Moreover, let $\fBDD^\TTT(v)=\lambda(v)$ for every inner
  vertex of $D^\TTT$. This completes the construction of our
  \OBDD{}, which because of \cite[Observation 1]{DBLP:journals/corr/abs-2104-02563} can also be turned
  into an equivalent complete \OBDD{}.
\end{proof}
\fi

A special case of the following theorem, the NP-hardness of \MLAEX{}
for FBDDs, i.e., ``free'' BDDs, was shown by
\citet{BarceloM0S20}.

\ifshort \begin{theorem}[$\star$]\fi\iflong\begin{theorem}\fi\label{th:OBDD-MLAEX-W2}
  Let $\PP \in \{\LAEX, \GAEX, \GCEX\}$. 
  \OBDD-\MPP{} is \NPh{} and \OBDD-\MPP{}$(\xpsize{})$ is \Wh{2}.
\end{theorem}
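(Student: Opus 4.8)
The plan is to split the four problems into three groups and reuse as much of the \DT{} machinery as possible. First, note that global contrastive and global abductive explanations are interdefinable: a partial assignment $\tau$ is a global abductive explanation for class $c$ w.r.t.\ a model $M$ if and only if it is a global contrastive explanation for class $1-c$ w.r.t.\ $M$ (both conditions read ``$M(e)=c$ for every $e$ agreeing with $\tau$''). Hence a hardness proof for \OBDD-\MGAEX{}, stated for an arbitrary target class, immediately yields the analogous hardness for \OBDD-\MGCEX{} by flipping the class, and it suffices to treat \LAEX{} and \GAEX{}.

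For \LAEX{} I would simply lift the existing \DT{} result. The reduction proving \Cref{th:DT-MLAEX-W2} takes a \textsc{Hitting Set} instance and already produces an \emph{ordered} \DT{} $\TTT$ together with the all-zero example $e$ and the bound $k$. Applying \Cref{lem:DT-to-OBDD} to $\TTT$ yields, in polynomial time, an \OBDD{} $\obdd^{\TTT}$ with $\obdd^{\TTT}(e')=\TTT(e')$ for every $e'$. Since whether a set $A\subseteq\feat(\TTT)$ is a local abductive explanation for $e$ depends only on the classification function and the feature set, the local abductive explanations of $\obdd^{\TTT}$ and of $\TTT$ coincide, so $(\obdd^{\TTT},e,k)$ and $(\TTT,e,k)$ are equivalent instances. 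Thus the \NPh{}ness and \Wh{2}ness (parameterized by \xpsize{}) of \DT{}-\MLAEX{} carry over verbatim to \OBDD-\MLAEX{}.

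For \GAEX{} the situation is genuinely different, and here the extra succinctness of \OBDD{}s over \DT{}s must be exploited: a global abductive explanation for class $1$ is exactly a \emph{shortest implicant} of the Boolean function computed by the \OBDD{}, so the goal is to show that finding a shortest implicant of an \OBDD{} is \NPh{} and \Wh{2}. I would reduce from \textsc{Hitting Set}, which is both \NP-hard and \W{2}-complete parameterized by the solution size, and is the natural source here (reducing from \MCC{} could only give \Wh{1}, as in the \DT{} case). The intended target is a single \OBDD{} computing the monotone covering function $\bigwedge_{F\in\FFF}\bigvee_{u\in F}x_u$, whose prime implicants are precisely the minimal transversals of $\FFF$; monotonicity is what guarantees that no ``cheating'' assignment (fixing a feature opposite to the value dictated by a transversal) can ever shorten an explanation, so that a shortest implicant has size exactly the minimum hitting set, with $k$ preserved. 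This is exactly the construction a \DT{} cannot realize compactly, which is why the \DT{} analogue \Cref{th:DT-MGAEX-W1} was forced through a non-polynomial fpt-reduction and obtained only \Wh{1}.

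The hard part will be realizing this dense, monotone function by a \emph{polynomial-size} \OBDD{} while keeping the covering problem hard, and these two demands pull against each other. A sparse, lookup-style \OBDD{} as produced by \Cref{lem:DT-construction} would make the shortest (separating) implicant trivially short, hence not hard; conversely, the naive \OBDD{} for an arbitrary monotone CNF has width equal to the number of distinct sets of still-unsatisfied clauses along the order, which can be exponential. Overcoming this gap is the technical heart of the argument: one must choose the variable order and use the \OBDD{}'s \textsc{dag}-sharing (possibly with auxiliary/selection variables carrying partial covering state in a layered fashion) so that the reachable state stays polynomially bounded without trivializing the instance, and then verify that no short ``cheating'' implicant survives. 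This is precisely the point at which \OBDD{}s behave strictly worse than \DT{}s for global explanations, and I expect it to be the main obstacle.
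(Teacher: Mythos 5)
Your \LAEX{} argument is exactly the paper's (combine \Cref{th:DT-MLAEX-W2}, whose \textsc{Hitting Set} reduction produces an \emph{ordered} \DT{}, with \Cref{lem:DT-to-OBDD}), and your reduction of \MGCEX{} to \MGAEX{} by flipping the class is also how the paper handles that pair. The gap is in \MGAEX{}: the construction you defer as ``the technical heart'' is not just difficult but impossible in general. The covering function $\bigwedge_{F\in\FFF}\bigvee_{u\in F}x_u$ of an arbitrary set family does not admit a polynomial-size \OBDD{}; this fails already for monotone $2$-CNFs coming from graphs, which can require exponential \OBDD{} size under every variable order. So no polynomial-time reduction can output the \OBDD{} your plan needs. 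Introducing auxiliary state variables does not rescue the plan as sketched: it changes the Boolean function, implicants may then set auxiliary variables instead of the $x_u$'s, and the correspondence between shortest implicants and minimum hitting sets would have to be re-established by an argument you do not give. (Restricting \textsc{Hitting Set} to instances whose covering function happens to have a small \OBDD{} would in turn require a separate proof that \Wh{2}ness survives the restriction.)

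The paper closes this half of the theorem without any new reduction from \textsc{Hitting Set}: it reduces \OBDD{}-\MLAEX{}, already shown \NPh{} and \Wh{2} in the first part, to \OBDD{}-\MGAEX{}. Given $(\obdd,e,k)$ with feature order $<_F$, it builds a counter \OBDD{} $\obdd_e^k$ on the same order (width $k+1$) that classifies an example $e'$ as $\obdd(e)$ if and only if $e'$ agrees with $e$ on at least $k$ features, adds a constant \OBDD{} $\obdd_\perp$ outputting $1-\obdd(e)$, and merges the three same-order \OBDD{}s $\{\obdd,\obdd_e^k,\obdd_\perp\}$ into a single polynomial-size \OBDD{} $\obdd'$ via the product construction of \Cref{lem:OBDDEO-to-OBDDE}; under majority, $\obdd'$ outputs $\obdd(e)$ exactly when both $\obdd$ and $\obdd_e^k$ do. Now a global abductive explanation $\tau$ for class $\obdd(e)$ w.r.t.\ $\obdd'$ of size at most $k$ must assign exactly $k$ features, each to its value under $e$ (consider the completion of $\tau$ that flips every unassigned feature of $e$; the counter forces at least $k$ agreements), and its domain is then a local abductive explanation for $e$ w.r.t.\ $\obdd$; conversely, any local abductive explanation of size at most $k$, padded to exactly $k$ features assigned as in $e$, yields such a $\tau$. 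The reduction is polynomial-time and preserves $k$, so both \NPh{}ness and \Wh{2}ness transfer to \MGAEX{} and hence to \MGCEX{}. Note that the hardness of the global problems is thus still sourced from \textsc{Hitting Set} --- routed through the local problem rather than encoded anew inside a single \OBDD{}.
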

\iflong\begin{proof}
    Combining  \Cref{th:DT-MLAEX-W2} with \Cref{lem:DT-to-OBDD}  shows 
    that \OBDD-\MLAEX{} is \Wh{2} parameterized by \xpsize.

  Next, we will provide a polynomial-time reduction from
  \OBDD-\MLAEX{} to \OBDD-\MGAEX{} and one from \OBDD-\MGAEX{} to \OBDD-\MGCEX{}.
  Let $(\obdd, e, k)$ be an instance for \OBDD-\MLAEX{}, let
  $F=\feat(\obdd)$ and let $<_F$ be the order of the features as they
  occur on paths of $\obdd$.
  We now construct the \OBDDOc{<_F} $\obdd_e^k = (D_e^k, \fBDD_e^k)$
  (which uses the order  $<_F$) that classifies an example $e'$
  as $\obdd(e)$ if and only if there are at
  least $k$ features that $e$ and $e'$ agree on, i.e., $k \leq |\{f :
  e(f) = e'(f) \land f\in F\}|$.
  For convenience, we will again use the auxiliary feature $\blank$,
  which will not occur in the final \OBDD{}.
  Let $\nx{}$ be the successor function of the features in $F$
  w.r.t. $<_F$, i.e., for every $f \in F$, $\nx{f} = f'$ if $f$ has a
  successor $f'$ and $\nx{f}=\blank$ otherwise.
  Let $D_e^k$ be the directed acyclic graph with
  vertices $V(D_e^k)=\SB g^f_i \SM 0\leq i \leq k \land f\in F\cup \{\blank\} \SE$.
  Let $\fBDD_{e}^k$ be the
  function defined by setting $\fBDD_{e}^k(g^f) =f$ for every $g^f\in
  V(D_e^k)$.
  For each $f \in F$, let $g_k^\nx{f}$ be the $0$-neighbor and $1$-neighbor of $g_k^f$.
  For each $f \in F$ and $0\leq i < k$, 
  let $g_{i+1}^\nx{f}$ be the $e(f)$-neighbor of $g_{i}^f$ and let $g_{i}^\nx{f}$ be the other neighbor.
  We set $t_1$ to be $g^{\blank}_k$  and we
  set $t_0$ to be equal to the vertex obtained after identifying all
  vertices in
  $\{g^{\blank}_i | 0 \leq i < k\}$.
  Note that the graph $D_e^k$ simulates a simple counter, i.e.,
  if an example $e'$ reaches node $g^f_i$ and $i<k$ it means that 
  there are exactly $i$ features from $\{f' | f' \in F \land f' <_F
  f\}$ that $e$ and $e'$ agree on.
  
  Let $\obddeo^{<_F}$ be the \OBDDEOc{<_F} given by
  $\obddeo^{<_F}=\{\obdd, \obdd_e^k,\obdd_{\perp}\}$, where
  $\obdd_{\perp}$ is a trivial \OBDDOc{<_F} that classifies every
  example $1-\obdd(e)$.
  We apply \Cref{lem:OBDDEO-to-OBDDE} to $\obddeo^{<_F}$ to create
  an \OBDDOc{<_F} $\obddo'$ that is equivalent with $\obddeo^{<_F}$.
  Note this reduction can be achieved in polynomial-time.
  
  We are now ready to prove that the following problems are equivalent:
  \begin{enumerate}[(1)]
      \item $(\obdd, e, k)$ is a yes-instance of \OBDD-\MLAEX{}.
      \item $(\obdd', \obdd(e), k)$ is a yes-instance of \OBDD-\MGAEX{}.
      \item $(\obdd', 1-\obdd(e), k)$ is a yes-instance of \OBDD-\MGCEX{}.
  \end{enumerate}
  It is obvious that $(2)$ and $(3)$ are equivalent.

  Towards showing that $(1) \implies (2)$, let $A \subseteq F$ be a
  local abductive explanation for $e$ w.r.t. $\obdd$ of size at most $k$, i.e.,
  every example $e'$ that agrees with $e$ on the features in $A$ is
  classified as $\obdd(e)$ by $\obdd$. Let $A'$ be a superset of $A$ that
  contains exactly $k$ features and let $\tau : A' \rightarrow
  \{0,1\}$ be the assignment defined by setting $\tau(f)=e(f)$ for
  every $f \in A'$. We claim that $\tau$ is a global abductive
  explanation for $\obdd(e)$ w.r.t. $\obdd'$ of size at most $k$,
  i.e., every example $e'$ that agrees with $\tau$ is classified as
  $\obdd(e)$ by $\obdd'$. But this clearly holds because both $\obdd$
  and $\obdd_e^k$ classify $e$ as $\obdd(e)$.
  
  Towards showing that $(2) \implies (1)$, let $\tau : A \rightarrow
  \{0,1\}$ for some $A \subseteq F$ with $|A|\leq k$ be a global
  abductive explanation for $\obdd(e)$ w.r.t. $\obdd'$, i.e., every
  example $e'$ that agrees with $\tau$ is classified as $\obdd(e)$ by
  $\obdd'$ and therefore also by $\obdd$ and $\obdd_e^k$. But then,
  $\tau$ has to agree with $e$ on at least $k$ features, because
  otherwise there would be an example
  that agrees with $\tau$ but is not classified as $\obdd(e)$ by
  $\obdd_e^k$.  But then, $A$ is a local abductive explanation for $e$
  w.r.t. $\obdd$.
\end{proof}\fi

\ifshort \begin{theorem}[$\star$]\fi\iflong\begin{theorem}\fi\label{th:OBDDEO-paraNP-hws}
  Let $\PP \in \{\LAEX, \GAEX, \GCEX\}$. 
  \OBDDEO-\SPP{} is \coNPh{} even if $\widthelem{} + \sizeelem{}$ is constant;
  \OBDDEO-\SMLCEX{} is \NPh{}  even if $\widthelem{} + \sizeelem{}$ is constant;
  \OBDDEO-\MPP{} is \coNPh{} even if $\widthelem{} + \sizeelem{} +\xpsize{}$ is constant;
  \OBDDEO-\MLCEX{}$(\xpsize{})$ is \Wh{1} even if $\widthelem{} + \sizeelem{}$ is constant;
\end{theorem}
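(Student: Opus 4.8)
The plan is to invoke the generic ensemble-hardness lemma \Cref{lem:MME-kHOM-W1} with $\MM=\OBDD$, exactly as in the proofs of \Cref{th:RF-paraNP} and \Cref{th:DSE-paraNP}, but with one extra ingredient: I must realise the three required gadget models as \OBDD{}s that all obey a single common variable ordering. Once they do, the ensemble that the lemma's reduction assembles is not merely an \OBDDE{} but in fact an \OBDDEO{}, so \Cref{lem:MME-kHOM-W1} directly yields the stronger ordered-ensemble hardness claimed here.

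First I would fix the global feature ordering to be the vertex ordering $<_V$ used inside \Cref{lem:MME-kHOM-W1}, lifted to the features $f_v$, and then build the three gadgets respecting $<_V$. I would take $M^0$ to be the trivial \OBDD{} whose source is $t_0$, classifying every example negatively; $M^1_f$ to be the single-node \OBDD{} testing $f$ with $1$-neighbor $t_1$ and $0$-neighbor $t_0$, so that $M^1_f(e)=1$ iff $e(f)=1$; and, for $f_1<_V f_2$, $M^2_{f_1,f_2}$ to be the complete \OBDD{} that reads $f_1$ and then $f_2$ on both branches, routing to $t_1$ unless both $e(f_1)=1$ and $e(f_2)=1$ (in which case it routes to $t_0$), so that $M^2_{f_1,f_2}(e)=1$ iff $e(f_1)=0$ or $e(f_2)=0$. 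Each gadget has at most $5$ nodes and width at most $2$, hence $\sizeelem+\widthelem$ stays constant. Inspecting the reduction inside \Cref{lem:MME-kHOM-W1}, the ensemble it produces consists of copies of $M^0$, the models $M^1_{f_v}$, and the models $M^2_{f_u,f_v}$ with $u<_V v$; as all of these test their features in $<_V$-order, the whole ensemble respects $<_V$ and is therefore an \OBDDEO{}.

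With these gadgets in place, \Cref{lem:MME-kHOM-W1} gives that \OBDDEO-\HOM{} is \NPh{} and that \OBDDEO-\kHOM{} is \Wh{1} parameterized by $k$, already when $\sizeelem+\widthelem$ is constant. The four stated claims then follow mechanically through \Cref{lem:HOM-reduction,lem:kHOM-reduction}, just as in the \RF{} and \DSE{} analogues: the \coNP{}-hardness of \OBDDEO-\SPP{} (and, taking $k=0$, of \OBDDEO-\MPP{} with constant \xpsize{}) comes from the \HOM{}-equivalences of \Cref{lem:HOM-reduction}; the \NP{}-hardness of \OBDDEO-\SMLCEX{} comes from its contrastive equivalence there; and the \W{1}-hardness of \OBDDEO-\MLCEX{}$(\xpsize{})$ follows by identifying $\xpsize{}$ with $k$ and applying \Cref{lem:kHOM-reduction}.

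Since the gadget construction is routine, I expect the only real obstacle to be the orderedness bookkeeping, which is precisely what distinguishes this proof from its \RF{} and \DSE{} counterparts. I would verify that $M^0$ (no features) and $M^1_f$ (one feature) impose no ordering constraints at all, while the pairwise gadgets consistently place the $<_V$-smaller feature first — which holds because \Cref{lem:MME-kHOM-W1} instantiates $M^2_{f_u,f_v}$ only for $u<_V v$. I would also confirm that each gadget is a legal \emph{complete} \OBDD{}; the $M^2$ gadget already is, since both of its $f_1$-branches read $f_2$, and $M^0$ and $M^1_f$ are complete trivially.
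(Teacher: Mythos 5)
Your proposal is correct and matches the paper's own proof essentially step for step: the paper also instantiates \Cref{lem:MME-kHOM-W1} with the gadgets $M^0$, $M^1_f$, and $M^2_{f_1,f_2}$ realized as \OBDD{}s over one common ordering (so the resulting ensemble is an \OBDDEO{}), notes that \sizeelem{} and \widthelem{} are constant, and concludes via \Cref{lem:HOM-reduction,lem:kHOM-reduction}. The only differences are cosmetic — the paper's pairwise gadget uses $4$ nodes by routing the $0$-branch of $f_1$ directly to $t_1$ rather than reading $f_2$ on both branches, and it fixes an arbitrary order $<_F$ rather than the vertex order — neither of which affects the argument.
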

\iflong\begin{proof}
    Let $F$ be a set of features and let $<_F$ be an arbitrary
    ordering of the features in $F$.

    We will use \Cref{lem:MME-kHOM-W1} to show the theorem by
    giving \OBDDOcs{<_F} $M^0$, $M^1_{f}$ and $M^2_{\{f_1,f_2\}}$ satisfying the
    conditions of \Cref{lem:MME-kHOM-W1} as follows.
    We let $M^0$ be the trivial \OBDDOc{<_F} that classifies every
    example negatively.
    We let $M^1_{f}$ be the \OBDDOc{<_F} $(D^{1}_f, \fBDD^1_f)$, such that
    $V(D^{1}_f) = \{g, t_0, t_1\}$, $ \fBDD^{1}_f(g)=f$, $g$ is
    the source vertex, and $t_0$ and $t_1$ is the $0$-neighbor and
    $1$-neighbor of $g$, respectively.
    We let $M^2_{f_1,f_2}$ be the \OBDDOc{<_F} $(D^2_{f_1,f_2}
    , \fBDD^2_{f_1,f_2})$ such that
    $V(D^2_{f_1,f_2}) = \{g^{f_1}, g^{f_2}, t_0, t_1\}$,
    $\fBDD^2_{f_1,f_2}(g^{f_1})=f_1$ and
    $\fBDD^2_{f_1,f_2}(g^{f_2})=f_2$.
    Here, we assume that w.l.o.g. $f_1 <_F f_2$. We let $g^{f_1}$ be
    the source vertex. We let $t_1$ and $g^{f_2}$ be the
    $0$-neighbor and $1$-neighbor of $g^{f_1}$, respectively.
    Moreover, we let $t_1$ and $t_0$ be the $0$-neighbor and
    $1$-neighbor of $g^{f_2}$, respectively.

    Note that $M^0$, $M^1_{f}$ and $M^2_{\{f_1,f_2\}}$ have \sizeelem at most
    $4$ and \widthelem{} equal $2$.
    All statements in the theorem now follow from 
    \Cref{lem:MME-kHOM-W1,lem:kHOM-reduction,lem:HOM-reduction}.
\end{proof}\fi

\begin{theorem}\label{th:OBDDEO-W1}
  \sloppypar
  Let $\PP \in \{\LAEX, \GAEX, \GCEX\}$. 
  \OBDDEO-\SPP$(\enssize)$ is \coWh{1};
  \OBDDEO-\SMLCEX$(\enssize)$ is \Wh{1};
  \OBDDEO-\MPP$(\enssize)$ is \coWh{1} even if \xpsize is  constant;
  \OBDDEO-\MLCEX$( \enssize{}  + \xpsize)$ is \Wh{1}.
\end{theorem}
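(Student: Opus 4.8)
The plan is to follow the template used for random forests in \Cref{th:RF-W1-ES}: first prove the homogeneity analogue for ordered \OBDD{}-ensembles, and then push the resulting hardness through the generic equivalences of \Cref{lem:HOM-reduction,lem:kHOM-reduction}. Concretely, I would first establish that \OBDDEO-\HOM{} is \NPh{} and that both \OBDDEO-\HOM{}$(\enssize{})$ and \OBDDEO-\kHOM{}$(\enssize{})$ are \Wh{1}, which is the exact counterpart of \Cref{lem:rf-hard-example}.

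To obtain this counterpart, I would reuse the random forest $\EEE$ built in \Cref{lem:rf-hard-example} from a \MCC{} instance $(G,k)$. Every \DT{} there is produced by \Cref{lem:DT-construction}, so by fixing one global order of all features $\bigcup_i F_i$ and letting each \DT{} respect its restriction, the whole ensemble respects a single common total order---this is precisely the ordered strengthening already asserted in \Cref{lem:rf-hard-example}. Applying \Cref{lem:DT-to-OBDD} to each such ordered \DT{} then yields, in polynomial time, an \OBDD{} with the same variable order computing the same function; the collection of these \OBDD{}s is an \OBDDEO{} whose majority vote equals $\EEE$. Because membership in \HOM{} and \kHOM{} depends only on the classification function (and, for \kHOM{}, on the number of features set to $1$), this transformation preserves yes/no-instances, preserves \enssize{}, and preserves the \kHOM{} bound $k$. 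Hence \OBDDEO-\HOM{} is \NPh{}, \OBDDEO-\HOM{}$(\enssize{})$ is \Wh{1}, and \OBDDEO-\kHOM{} is \Wh{1} parameterized by $\enssize{}+k$ (in the source reduction both $\enssize{}=\bigoh(k^2)$ and $k$ are bounded by the clique size).

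The four claimed statements now follow mechanically. For $\PP\in\{\LAEX,\GAEX,\GCEX\}$, \Cref{lem:HOM-reduction} equates ``the empty set is a subset-minimal solution of the corresponding \SPP{} instance'' with ``the input is a \emph{no}-instance of \OBDDEO-\HOM{}'', so any algorithm returning a subset-minimal explanation decides the complement of \HOM{}; since \HOM{}$(\enssize{})$ is \Wh{1}, \OBDDEO-\SPP{}$(\enssize{})$ is \coWh{1}. The same lemma shows that a subset-minimal contrastive explanation for $e_0$ exists iff the model is a \emph{yes}-instance of \HOM{}, giving \Wh{1}-hardness of \OBDDEO-\SMLCEX{}$(\enssize{})$. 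For the cardinality-minimal abductive and contrastive variants, the items of \Cref{lem:HOM-reduction} with $k=0$ reduce the complement of \HOM{} to \OBDDEO-\MPP{} with $\xpsize{}=0$, yielding \coWh{1}-hardness even when \xpsize{} is constant. Finally, \Cref{lem:kHOM-reduction} maps $(M,k)$ of \kHOM{} to $(M,e_0,k)$ of \MLCEX{}, preserving the model (hence \enssize{}) and $k$ (hence \xpsize{}); as \OBDDEO-\kHOM{} is \Wh{1} parameterized by $\enssize{}+k$, we conclude that \OBDDEO-\MLCEX{}$(\enssize{}+\xpsize{})$ is \Wh{1}.

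The only genuinely new point beyond the random forest argument---and the step to watch---is forcing all \DT{}s of \Cref{lem:rf-hard-example} onto one common variable order before applying \Cref{lem:DT-to-OBDD}; without this, the translated diagrams would only form an unordered \OBDDE{} (which is governed by the much stronger bounds of \Cref{th:OBDDE-paraNP}) rather than an \OBDDEO{}. This is exactly what the ordered strengthening of \Cref{lem:rf-hard-example} together with the order-preserving conversion of \Cref{lem:DT-to-OBDD} provide. The width of the resulting \OBDD{}s plays no role, as the parameterization is by \enssize{} only.
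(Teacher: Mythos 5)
Your proposal is correct and follows essentially the same route as the paper, whose proof of \Cref{th:OBDDEO-W1} consists precisely of combining \Cref{lem:rf-hard-example,lem:DT-to-OBDD,lem:HOM-reduction,lem:kHOM-reduction}: the ordered strengthening of \Cref{lem:rf-hard-example} plus the order-preserving translation of \Cref{lem:DT-to-OBDD} yield the \OBDDEO{} hardness for \HOM{}/\kHOM{}, and the generic equivalences then give all four statements. Your write-up simply makes explicit the details (common global feature order, preservation of \enssize{} and of the \kHOM{} bound) that the paper leaves implicit.
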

\iflong\begin{proof}
  All statements in the theorem follow from 
  \Cref{lem:rf-hard-example,lem:DT-to-OBDD,lem:HOM-reduction,lem:kHOM-reduction}.
\end{proof}\fi

\section{Conclusion}

We have developed an in-depth exploration of the parameterized complexity of explanation problems in various machine learning (ML) models, focusing on models with transparent internal mechanisms. By analyzing different models and their ensembles, we have provided a comprehensive overview of the complexity of finding explanations in these systems. These insights are crucial for understanding the inherent complexity of different ML models and their implications for explainability.

Among our findings, some results stand out as particularly
unexpected. For instance, while \RF{} and \OBDDEO{}s are seemingly
different model types, our results show that they behave similarly
w.r.t.\ tractability for explanation problems. On the other hand, it
seems surprising that many of the tractability results that hold for
\DT{}s and \OBDD{}s do not carry over to seemingly simpler models such
as \DS{}s and \DL{}s. For instance, while all variants of \LCEX{} are
polynomial-time for \DT{}s and \OBDD{}s, this is not the case for
\DS{}s or \DL{}s. Nevertheless, we obtain interesting FPT-algorithms
for \DL{}-\MLCEX{} (\Cref{th:DS-MLCEX-FPT-const-ens}).  \OBDDE{}
stands out as the hardest model for computing explanations by far,
which holds even
for models with only two ensemble elements.
From a complexity point of view, \DT-\MGAEX{} provides the rare
scenario where a problem is known as \Wh{1} but not confirmed
to be $\NP$\hy hard (\Cref{th:DT-MGAEX-W1}).

Looking ahead, there are several promising directions for future
research. First, we aim to extend our complexity classification to
Sequential Decision Diagrams \cite{Darwiche11} or even FBDDs, which offer a more
succinct representation than OBDDs \cite{Bova16}. This extension could
provide further insights into the complexity of explanations in more
compact ML models. Secondly, we propose to explore other problem
variations, such as counting different types of explanations or
finding explanations that meet specific constraints beyond just the
minimum ones \cite{BarceloM0S20}.
Lastly, the concept of weighted ensembles presents an intriguing avenue for research. While the hardness results we established likely still apply, the tractability in the context of weighted ensembles needs to be clarified and warrants further investigation. It would be  interesting to see how our results hold up when considering polynomial-sized weights.

\iflong
In summary, our work marks a significant stride in the theoretical
understanding of explainability in AI. This research responds to the
practical and regulatory demand for transparent, interpretable, and
trustworthy AI systems by offering a detailed complexity analysis
across various ML models. As the field of XAI evolves, our study lays
a foundational groundwork for future research and the development of
more efficient explanation methods in AI.
\fi

\section*{Acknowledgements}
Stefan Szeider acknowledges support by the Austrian Science Fund (FWF)
within the projects 10.55776/P36688, 10.55776/P36420, and
10.55776/COE12.
Sebastian Ordyniak was supported by the Engineering
and Physical Sciences Research Council (EPSRC) (Project EP/V00252X/1).


\begin{thebibliography}{28}
\providecommand{\natexlab}[1]{#1}
\providecommand{\url}[1]{\texttt{#1}}
\expandafter\ifx\csname urlstyle\endcsname\relax
  \providecommand{\doi}[1]{doi: #1}\else
  \providecommand{\doi}{doi: \begingroup \urlstyle{rm}\Url}\fi

\bibitem[Barcel{\'{o}} et~al.(2020)Barcel{\'{o}}, Monet, P{\'{e}}rez, and
  Subercaseaux]{BarceloM0S20}
Pablo Barcel{\'{o}}, Mika{\"{e}}l Monet, Jorge P{\'{e}}rez, and Bernardo
  Subercaseaux.
\newblock Model interpretability through the lens of computational complexity.
\newblock \emph{Proc. {NeurIPS} 2020}, 33:\penalty0 15487--15498, 2020.

\bibitem[Bergougnoux et~al.(2023)Bergougnoux, Dreier, and
  Jaffke]{BergougnouxDJ23}
Benjamin Bergougnoux, Jan Dreier, and Lars Jaffke.
\newblock A logic-based algorithmic meta-theorem for mim-width.
\newblock \emph{Proc. {SODA} 2023}, pages 3282--3304, 2023.

\bibitem[Bova(2016)]{Bova16}
Simone Bova.
\newblock {SDD}s are exponentially more succinct than {OBDD}s.
\newblock \emph{Proc. {AAAI} 2016}, pages 929--935, 2016.
\newblock \doi{10.1609/AAAI.V30I1.10107}.

\bibitem[Carvalho et~al.(2019)Carvalho, Pereira, and Cardoso]{Carvalho-etal-19}
Diogo~V. Carvalho, Eduardo~M. Pereira, and Jaime~S. Cardoso.
\newblock Machine learning interpretability: A survey on methods and metrics.
\newblock \emph{Electronics}, 8\penalty0 (8), 2019.
\newblock \doi{10.3390/electronics8080832}.

\bibitem[Chan and Darwiche(2003)]{ChanDarwiche03}
Hei Chan and Adnan Darwiche.
\newblock Reasoning about bayesian network classifiers.
\newblock \emph{Proc. {UAI} 2003}, pages 107--115, 2003.

\bibitem[Commission(2019)]{EU19}
European Commission.
\newblock \emph{Ethics guidelines for trustworthy AI}.
\newblock Publications Office, European Commission and Directorate-General for
  Communications Networks, Content and Technology, 2019.
\newblock \doi{doi/10.2759/346720}.

\bibitem[Commission(2020)]{EU20}
European Commission.
\newblock \emph{White Paper on {Artificial Intelligence}: a {European} approach
  to excellence and trust}.
\newblock Publications Office, European Commission and Directorate-General for
  Communications Networks, Content and Technology, 2020.

\bibitem[Corneil and Rotics(2001)]{DBLP:conf/wg/CorneilR01}
Derek~G. Corneil and Udi Rotics.
\newblock On the relationship between clique-width and treewidth.
\newblock In Andreas Brandst{\"{a}}dt and Van~Bang Le, editors,
  \emph{Graph-Theoretic Concepts in Computer Science, 27th International
  Workshop, {WG} 2001, Boltenhagen, Germany, June 14-16, 2001, Proceedings},
  volume 2204 of \emph{Lecture Notes in Computer Science}, pages 78--90.
  Springer, 2001.

\bibitem[Darwiche(2011)]{Darwiche11}
Adnan Darwiche.
\newblock {SDD:} {A} new canonical representation of propositional knowledge
  bases.
\newblock \emph{Proc. {IJCAI} 2011}, pages 819--826, 2011.
\newblock \doi{10.5591/978-1-57735-516-8/IJCAI11-143}.

\bibitem[Darwiche and Ji(2022)]{DarwicheJi22}
Adnan Darwiche and Chunxi Ji.
\newblock On the computation of necessary and sufficient explanations.
\newblock \emph{Proc. {AAAI} 2022}, pages 5582--5591, 2022.
\newblock \doi{10.1609/AAAI.V36I5.20498}.

\bibitem[Diestel(2000)]{Diestel00}
Reinhard Diestel.
\newblock \emph{Graph Theory}, volume 173 of \emph{Graduate Texts in
  Mathematics}.
\newblock Springer Verlag, New York, 2nd edition, 2000.

\bibitem[Downey and Fellows(2013)]{DowneyFellows13}
Rodney~G. Downey and Michael~R. Fellows.
\newblock \emph{Fundamentals of parameterized complexity}.
\newblock Texts in Computer Science. Springer Verlag, 2013.

\bibitem[Guidotti et~al.(2019)Guidotti, Monreale, Ruggieri, Turini, Giannotti,
  and Pedreschi]{Guidotti-etal-2018}
Riccardo Guidotti, Anna Monreale, Salvatore Ruggieri, Franco Turini, Fosca
  Giannotti, and Dino Pedreschi.
\newblock A survey of methods for explaining black box models.
\newblock \emph{{ACM} Computing Surveys}, 51\penalty0 (5):\penalty0
  93:1--93:42, 2019.
\newblock \doi{10.1145/3236009}.

\bibitem[Holzinger et~al.(2020)Holzinger, Saranti, Molnar, Biecek, and
  Samek]{HolzingerSMBS20}
Andreas Holzinger, Anna Saranti, Christoph Molnar, Przemyslaw Biecek, and
  Wojciech Samek.
\newblock Explainable {AI} methods - {A} brief overview.
\newblock \emph{Proc. {xxAI@ICML} 2020}, 13200:\penalty0 13--38, 2020.
\newblock \doi{10.1007/978-3-031-04083-2\_2}.

\bibitem[Ignatiev et~al.(2019)Ignatiev, Narodytska, and
  Marques{-}Silva]{IgnatievNM19}
Alexey Ignatiev, Nina Narodytska, and Jo{\~{a}}o Marques{-}Silva.
\newblock Abduction-based explanations for machine learning models.
\newblock \emph{Proc. {AAAI} 2019}, pages 1511--1519, 2019.
\newblock \doi{10.1609/AAAI.V33I01.33011511}.

\bibitem[Ignatiev et~al.(2020)Ignatiev, Narodytska, NicholasAsher, and
  Marques{-}Silva]{IgnatievNA20}
Alexey Ignatiev, Nina Narodytska, NicholasAsher, and Jo{\~{a}}o
  Marques{-}Silva.
\newblock From contrastive to abductive explanations and back again.
\newblock \emph{Proc. {AIxIA} 2020}, 12414:\penalty0 335--355, 2020.
\newblock \doi{10.1007/978-3-030-77091-4\_21}.

\bibitem[Jha and Suciu(2012)]{DBLP:conf/icdt/JhaS12}
Abhay~Kumar Jha and Dan Suciu.
\newblock On the tractability of query compilation and bounded treewidth.
\newblock \emph{Proc. {ICDT} 2012}, pages 249--261, 2012.

\bibitem[Lipton(2018)]{Lipton18}
Zachary~C. Lipton.
\newblock The mythos of model interpretability.
\newblock \emph{Communications of the {ACM}}, 61\penalty0 (10):\penalty0
  36--43, 2018.
\newblock \doi{10.1145/3233231}.

\bibitem[Lisboa et~al.(2023)Lisboa, Saralajew, Vellido,
  Fern{\'{a}}ndez{-}Domenech, and Villmann]{LisboaSVFV23}
Paulo J.~G. Lisboa, Sascha Saralajew, Alfredo Vellido, Ricardo
  Fern{\'{a}}ndez{-}Domenech, and Thomas Villmann.
\newblock The coming of age of interpretable and explainable machine learning
  models.
\newblock \emph{Neurocomputing}, 535:\penalty0 25--39, 2023.
\newblock \doi{10.1016/J.NEUCOM.2023.02.040}.

\bibitem[Marques-Silva(2023)]{Silva22}
Joao Marques-Silva.
\newblock Logic-based explainability in machine learning.
\newblock \emph{Reasoning Web. Causality, Explanations and Declarative
  Knowledge}, pages 24--104, 2023.
\newblock \doi{10.1007/978-3-031-31414-8_2}.

\bibitem[Mengel and Slivovsky(2021)]{DBLP:journals/corr/abs-2104-02563}
Stefan Mengel and Friedrich Slivovsky.
\newblock Proof complexity of symbolic {QBF} reasoning.
\newblock \emph{CoRR}, abs/2104.02563, 2021.

\bibitem[Miller(2019)]{Miller19}
Tim Miller.
\newblock Explanation in artificial intelligence: Insights from the social
  sciences.
\newblock \emph{Artificial Intelligence}, 267:\penalty0 1--38, 2019.
\newblock \doi{10.1016/J.ARTINT.2018.07.007}.

\bibitem[Molnar(2023)]{Molnar23}
Christoph Molnar.
\newblock \emph{Interpretable Machine Learning}.
\newblock Lulu.com, 2023.

\bibitem[OECD(2023)]{OECD23}
OECD.
\newblock The state of implementation of the {OECD} {AI} principles four years
  on.
\newblock Technical Report~3, The Organisation for Economic Co-operation and
  Development, 2023.

\bibitem[Ordyniak et~al.(2023)Ordyniak, Paesani, and Szeider]{OrdyniakPS23}
Sebastian Ordyniak, Giacomo Paesani, and Stefan Szeider.
\newblock The parameterized complexity of finding concise local explanations.
\newblock \emph{Proc. {IJCAI} 2023}, pages 3312--3320, 2023.
\newblock \doi{10.24963/IJCAI.2023/369}.

\bibitem[Oum and Seymour(2006)]{DBLP:journals/jct/OumS06}
Sang{-}il Oum and Paul~D. Seymour.
\newblock Approximating clique-width and branch-width.
\newblock \emph{J. Comb. Theory, Ser. {B}}, 96\penalty0 (4):\penalty0 514--528,
  2006.

\bibitem[Ribeiro et~al.(2016)Ribeiro, Singh, and Guestrin]{Ribeiro0G16}
Marco~T{\'{u}}lio Ribeiro, Sameer Singh, and Carlos Guestrin.
\newblock ``why should {I} trust you?'': Explaining the predictions of any
  classifier.
\newblock \emph{Proc. {KDD} 2016}, pages 1135--1144, 2016.
\newblock \doi{10.1145/2939672.2939778}.

\bibitem[Shih et~al.(2018)Shih, Choi, and Darwiche]{ShihCD18}
Andy Shih, Arthur Choi, and Adnan Darwiche.
\newblock A symbolic approach to explaining {Bayesian} network classifiers.
\newblock \emph{Proc. {IJCAI} 2018}, pages 5103--5111, 2018.
\newblock \doi{10.24963/IJCAI.2018/708}.

\end{thebibliography}
\end{document}
